\documentclass[11pt,twoside]{article}

\usepackage[top=1in, bottom=1in, left=1in, right=1in]{geometry}
\usepackage{amsmath, amsthm, amssymb, graphicx, url, bm}

\RequirePackage{placeins}
\RequirePackage{amsmath}
\RequirePackage{amssymb}
\usepackage[round]{natbib}
\RequirePackage{graphicx}
\RequirePackage{url}
\PassOptionsToPackage{x11names}{xcolor}
\RequirePackage{xcolor}
\PassOptionsToPackage{algo2e,ruled}{algorithm2e}
\RequirePackage{algorithm2e}
\usepackage[utf8]{inputenc}
\usepackage{amsfonts}
\usepackage[normalem]{ulem}
\usepackage{xspace}
\usepackage{hyperref, prettyref}
\usepackage{xcolor}
\usepackage{enumitem}
\usepackage[at]{easylist}

\usepackage[T1]{fontenc}    

\usepackage{url}            
\usepackage{booktabs}       

\usepackage{nicefrac}       
\usepackage{microtype}      

\usepackage{algorithm, algorithmic}

\usepackage{mathtools}

\usepackage{thmtools, thm-restate}
\usepackage{enumitem}

\usepackage{graphicx,wrapfig}
\usepackage{natbib}

\usepackage[capitalize,noabbrev]{cleveref}
\RequirePackage{hyperref}
\RequirePackage{nameref}
\hypersetup{colorlinks,
            linkcolor=blue,
            citecolor=blue,
            urlcolor=magenta,
            linktocpage,
            plainpages=false}

\newcommand{\mathify}[1]{\ensuremath{#1}\xspace}
\newcommand{\fs}{\mathify{\mathcal{X}}} 

\newcommand{\R}{\mathify{\mathbb{R}}} 
\newcommand{\eps}{\mathify{\epsilon}}

\newcommand{\Var}{\operatorname{Var}} 
\newcommand{\Covar}{\operatorname{Cov}} 

\usepackage{amsfonts}
\usepackage{dsfont}
\usepackage{color}
\usepackage{thmtools}

\newcommand{\wh}{\widehat h}

\newcommand{\hs}{h^\star}

\newcommand{\bh}{\mathcal{E}}
\newcommand{\regret}{{{Reg}}}

\newcommand{\cA}{\mathcal{A}}
\newcommand{\cS}{\mathcal{S}}
\newcommand{\ind}{\mathds{1}}

\newcommand{\cB}{\mathcal{B}}
\newcommand{\cX}{\mathcal{X}}

\newcommand{\popl}{\mathcal{L}}

\newcommand{\hyps}{\mathcal{H}}
\newcommand{\cD}{\mathcal{D}}
\newcommand{\cC}{\mathcal{C}}
\newcommand{\cF}{\mathcal{F}}
\newcommand{\Rset}{\mathbb{R}}

\newcommand{\full}[1]{\ifnum \FULL=1{#1}\fi}
\newcommand{\short}[1]{\ifnum \FULL=0{#1}\fi}
\DeclareMathOperator*{\E}{\mathbb{E}}
\newcommand{\Esub}{{\E}}
\DeclareMathOperator*{\PP}{\mathbb{P}}
\newcommand{\PPsub}{{\PP}}

\newcommand{\cg}[1]{{\color{red}CG: #1}}

\def\marrow{\marginpar[\hfill$\longrightarrow$]{$\longleftarrow$}}
\def\cg#1{\textsc{\color{magenta} (Claudio  }\marrow\textsf{\color{magenta} #1})}

\newcommand{\err}{\text{err}}
\newcommand{\errvar}{\err_{\text{var}}}
\newcommand{\errcov}{\err_{\text{cov}}}
\newcommand{\errcovmessy}{\err'_{\text{cov}}}
\newcommand{\errcovone}{\err_{\text{cov,1}}}
\newcommand{\errcovtwo}{\err_{\text{cov,2}}}

\newcommand{\Plag}{P_1}
\newcommand{\Pzeros}{P_{00}}
\newcommand{\Pmixed}{P_{10}}
\newcommand{\Pones}{P_{11}}
\newcommand{\expressionI}{\textsc{(I)}}
\newcommand{\expressionII}{\textsc{(II)}}
\newcommand{\expressionIII}{\textsc{(III)}}

\newcommand{\vecs}{\vec{s}}

\newcommand{\vecyS}{\bm{Y}(S)}

\newcommand{\pseudodim}{\text{PDim}}
\newcommand{\complexitymeasurehere}{\min\{ \pseudodim(\hyps), k \}}
\newcommand{\bias}{\text{Bias}}

\newcommand{\mupper}{M_{\textsc{upper}}}
\newcommand{\muppers}{m_{\textsc{upper}}(S)}

\newcommand{\mlen}{\frac{np}2-2k+1}
\newcommand{\mlens}{\frac{np}2-2k+1}

\newtheorem{theorem}{Theorem}
\newtheorem{lemma}[theorem]{Lemma} 
 
\newtheorem{remark}[theorem]{Remark}

\makeatletter
\let\@fnsymbol\@arabic
\makeatother
\newcommand\blfootnote[1]{%
  \begingroup
  \renewcommand\thefootnote{}\footnote{#1}%
  \addtocounter{footnote}{-1}%
  \endgroup
}

\title{\bf{\LARGE{Statistical Learning from Attribution Sets}}}
\usepackage{times}
\author{Lorne Applebaum\footnotemark[1] \and Robert Busa-Fekete\footnotemark[1] \and August Y. Chen\footnotemark[2] \and Claudio Gentile\footnotemark[1] \and Tomer Koren\footnotemark[1]\, \footnotemark[3] \and Aryan Mokhtari\footnotemark[1]\, \footnotemark[4]}

\begin{document}

\maketitle
\blfootnote{Alphabetical ordering. Emails: \{lapplebaum@google.com, busarobi@google.com, ayc74@cornell.edu, cgentile@google.com, tkoren@google.com, amokhtari@google.com\} }
\footnotetext[1]{Google Research}
\footnotetext[2]{Cornell University, Ithaca, USA}
\footnotetext[3]{Tel Aviv University, Tel Aviv, Israel}
\footnotetext[4]{UT Austin, Austin, USA}

\begin{abstract}%
We address the problem of training conversion prediction models in advertising domains under privacy constraints, where direct links between ad clicks and conversions are unavailable. Motivated by privacy-preserving browser APIs and the deprecation of third-party cookies, we study a setting where the learner observes a sequence of clicks and a sequence of conversions, but can only link a conversion to a set of candidate clicks (an attribution set) rather than a unique source. We formalize this as learning from attribution sets generated by an oblivious adversary equipped with a prior distribution over the candidates. 
Despite the lack of explicit labels, we construct an unbiased estimator of the population loss from these coarse signals via a novel approach.
Leveraging this estimator, we show that Empirical Risk Minimization achieves generalization guarantees that scale with the informativeness of the prior and is also robust against estimation errors in the prior, despite complex dependencies among attribution sets.
Simple empirical evaluations on standard datasets suggest our unbiased approach significantly outperforms common industry heuristics, particularly in regimes where attribution sets are large or overlapping.
\end{abstract}

\smallskip

\section{Introduction}
Web advertising---one of the largest real-world applications of machine learning---has undergone a significant shift in recent years. To power automated bidding, advertisers (or their AdTech partners) train models to predict the probability of a {\em conversion} (e.g., a product purchase, a sign-up, an app installation, etc.) following an ad click. These predictions are essential for calculating bid prices in real-time online auctions running at the publisher side. Because the initial click occurs on a publisher's site while the conversion happens on the advertiser’s site, generating training labels requires linking these two distinct events. This process essentially involves tracking user behavior across different web domains~(see, e.g., \citealp{w19}).

While third-party cookies and link decoration have historically made tracking straightforward, a shift toward user privacy has transformed the landscape. Acknowledging the conflict between essential web advertising and the demand for privacy, major browsers have introduced specialized APIs to measure performance without compromising user anonymity. This transition is highlighted by the deprecation of third-party cookies in browsers like Apple's Safari \citep{w19} and Mozilla's Firefox~\citep{cc22}. 
These APIs restrict AdTechs to collecting cross-site data exclusively in some obfuscated form. This creates a challenge for publishers who require precise per-interaction predictions to run effective auctions. Under these privacy constraints, the publisher can see the list of individual ad interactions (clicks) but only receives approximate information about the resulting conversions from the advertiser’s side, rather than direct links between specific clicks and sales.
In particular, the publisher learns some coarse information about the conversion and click, such as the ad campaign they belonged to and the approximate time of the conversion.
Based on this coarse information, we can usually identify a collection of clicks that could have produced the conversion (i.e., clicks from the same ad campaign in a reasonable time interval given the conversion time), but it is not possible to determine exactly which click was responsible.
Our goal is to learn conversion prediction models from these weak conversion signals.

\vspace{-0.1in}
\subsection{Our contributions}
\vspace{-0.05in}
We formalize our problem as a novel setting of statistical learning from attribution sets: collections of clicks generated by an oblivious adversary according to a known prior (Section \ref{sec:setup}). Because the direct association between clicks and conversions is unobserved, we seek to learn this relationship using only these coarse signals. 

We provide three main theoretical contributions. 
Surprisingly, we first show that it is possible to construct an unbiased estimator of the population loss by decomposing the expected loss into moments that can be estimated from the attribution sets (Theorem \ref{prop:unbiasedestimatoroneji}). The core innovation lies in decoupling features from labels by conditioning on the adversary’s actions, allowing us to leverage a combinatorial argument to map inaccessible population moments to observable indicators.
Second, by minimizing our unbiased surrogate, we establish that Empirical Risk Minimization (ERM) attains strong generalization guarantees despite the statistical dependencies induced by the attribution process. 
Specifically, Theorem \ref{thm:allattributionsetssamplecomplexity} demonstrates that the sample complexity of our method scales with the standard capacity of the hypothesis class, inflated by a factor of $1/\|\pi\|_2^2$, where $\pi$ is the adversary’s prior distribution governing the possible locations of the true conversion within an attribution set. 
This is indeed expected as $\|\pi\|_2^2$ serves as a fundamental measure of the statistical difficulty of the task: more concentrated priors heighten the signal-to-noise ratio, yielding more favorable convergence rates.
A limitation of Theorem \ref{thm:allattributionsetssamplecomplexity} is that the learner requires exact knowledge of $\pi$. 
However, as our third theoretical contribution, we show in Theorem \ref{thm:robustallattrsets} that even if we have an estimate $\widehat\pi$ of $\pi$, our method is robust to the estimation error.

Finally, to verify our theoretical guarantees, we conduct preliminary experiments on standard datasets, like MNIST, CIFAR-10, and Higgs, showing that our unbiased approach substantially outperforms common industry heuristics--such as random or maximum-prior attribution--particularly when attribution sets are large or overlapping (Section \ref{sa:mainbody_experiments}).

\subsection{Related literature}
Conversion Rate (CVR) prediction remains a foundational challenge in online advertising, generating a vast body of literature. Central to this field is the {\em attribution problem}—the assignment of credit to specific user interactions for subsequent conversions. Established attribution heuristics (such as ``last touch,'' ``first touch,'' or ``linear attribution'') dictate the mechanisms for label generation and training, which in turn drive automated bidding and traffic allocation strategies. Relevant works include  \citep{borgs2007dynamics,cai2017real,zhu2017optimized,jin2018real,wang2017display,yang2019bid,singal2019shapley,liu2021neural,chen2022asymptoticallyunbiasedestimationdelayed,fan2025two,chen2025singleviewmultiattributionlearning}. While the above list is very far from doing justice, it is fair to say that many of these investigations are mostly experimental in nature.

More theoretically oriented is the related bulk of research on (stochastic) online/bandit algorithms with delayed feedback, with early investigations including \citep{pmlr-v28-joulani13,vernade2017stochasticbanditmodelsdelayed,lbsdf20}. These works predominantly address streaming data problems (online prediction), where models are continuously fine-tuned as feedback arrives. Crucially, these frameworks generally assume no privacy-induced label obfuscation; they postulate that a click will eventually yield an observable signal unless it is censored by ``freshness'' constraints. Typically, this involves setting an observation window $w_0$ (e.g., 48 hours) where a click at time $t_0$ is frozen until $t_0+w_0$. If a conversion occurs within this window, the click is labeled positive; otherwise, it is treated as a negative sample. Consequently, the primary technical challenge in these streaming settings is optimizing the trade-off between the cost of adaptivity (where larger $w_0$ delays updates) and the bias introduced by censoring (where smaller $w_0$ mislabels valid but delayed conversions). Bias correction mechanisms are often based on importance sampling (see, e.g., \cite{chen2022asymptoticallyunbiasedestimationdelayed}, and references therein).

In contrast, our work addresses an emerging landscape defined by privacy preservation. We operate in a setting where, even if deterministically linking a click to a conversion is technically feasible, the association is deliberately obfuscated by anti-tracking APIs mediating between publisher and advertiser data. Furthermore, we depart from the streaming paradigm to focus on a (more practical) {\em batch} learning setting. 
Since the complete, albeit obfuscated, dataset is available at the outset, concerns regarding data freshness and update latency are not directly relevant to our approach. 

Our work is also related to weak supervision paradigms, specifically Multiple Instance Learning (MIL) (e.g., \cite{maron1997framework,dietterich1997solving,ilse2018attention,tian2021weakly,lv2023unbiased,javed2022additive,jk24}) and Learning from Label Proportions (LLP). Early references on LLP include \citet{quadrianto2008estimating, patrini2014almost}, more recent ones are \citet{saket2021learnability,saket2022algorithms,scott2020learning,zhang2022learning,easyllp,brahmbhatt2023pac,l+24,b+25,applebaum2026optimallearninglabelproportions}. In these paradigms, attribution sets are referred to as {\em bags}. 
Both frameworks focus on learnability at the bag and instance levels. In MIL, a bag is labeled positive if it contains at least one positive instance and negative otherwise. In LLP, the learner observes the proportion of positive labels within each bag.
Crucially, the observability structure in these settings is significantly more informative than ours. In both MIL and LLP, every bag conveys a signal to the learner; MIL explicitly includes negative bags (containing zero positive labels), whereas our setting typically only generates attribution sets for positive outcomes (conversions). Furthermore, general statistical analyses of LLP (e.g., \cite{easyllp,l+24,b+25,applebaum2026optimallearninglabelproportions}) predominantly assume non-overlapping bags---an assumption that need not hold in the context of API-mediated attribution, where user interaction windows are often wide and overlapping.

\section{Preliminaries and Notation}\label{sec:setup}
We move from an idealized physical process to a distilled model that removes temporal dependencies.

\subsection{The click-conversion process}
\label{sec:process}

Consider a stylized advertising setup, illustrated in Figure \ref{f:2} (Left). This involves two parties: a {\em publisher}, who observes a stream of click events (interactions with a website by users), and an {\em advertiser}, who observes a stream of conversion events (e.g., purchases).
In order to decide which slot will be assigned to the competing advertisers, the publisher typically runs an auction, which is powered by a {\em conversion prediction} model. This is a model that takes as input click features and returns the estimated probability that the click will (eventually) lead to a conversion.

Due to privacy constraints (such as those enforced by anti-tracking APIs) and random time delays between clicks and conversions, the publisher cannot deterministically link a specific conversion at time $T_Y$ to its originating click at time $T_X$. Instead, for every observed conversion, the system provides an {\em attribution set}: a window of candidate clicks that {\em could} have caused the conversion. 

We are facing here a classical {\em attribution problem} for CVR (aka conversion rate) prediction. Yet, unlike the voluminous literature on the subject (e.g., \cite{borgs2007dynamics,cai2017real,zhu2017optimized,jin2018real,wang2017display,yang2019bid,liu2021neural,chen2022asymptoticallyunbiasedestimationdelayed,fan2025two,chen2025singleviewmultiattributionlearning}, and references therein), we are dealing with the more practical scenario of {\em batch} attribution, whereby an offline dataset of clicks and conversions has been recorded and made available to the publisher. The prediction model is built via this set of observations.

To analyze this setting rigorously, we start by viewing this ecosystem as a {\em click-conversion process} described by two pairs of random variables $\big\langle (X, D_X), (Y, D_Y) \big\rangle$, where:
$X \in \cX$ describes the click event features; $Y \in \{0,1\}$ is the corresponding binary label (conversion yes/no); $D_X$ and $D_Y$ are the {\em event delay} variables (time until the next event). In particular, $D_X$ is the time until the next click and $D_Y$ is the time until the next label.

Data is generated by fixing a total number of events $n$ and drawing $n$ i.i.d. pairs $(X_i, D_{X,i})$ and $(Y_i, D_{Y,i})$ from the joint distributions. The observed timestamps are cumulative sums of these delays: $T_{X,i} = \sum_{j=1}^i D_{X,j}$ and $T_{Y,i} = \sum_{j=1}^i D_{Y,j}$.
The main assumption we make is that the timing variables ($D_X, D_Y$) are independent of the event variables ($X, Y$). This independence allows us to separate the temporal dynamics from the feature-label relationship.
Our ultimate goal is to train a conversion prediction model using only these coarse, aggregate signals (the attribution sets), without ever observing direct links between individual clicks ($X$) and labels ($Y$).

\subsection{Mathematical formalization}
\label{sec:formalization}
We can now distill the process above into a learning framework on sequences, as depicted in Figure~\ref{f:2} (Right).
There exists a hidden (possibly randomized) bijection $b: [n] \to [n]$ that links the click $(X_i, T_{X,i})$ to its corresponding binary label and event delay $(Y_{b(i)}, T_{Y,b(i)})$.
This bijection defines a hidden dataset $S = \big\langle (X_1, Y_{b(1)}), \ldots, (X_n, Y_{b(n)}) \big\rangle$. Based on our independence assumption, the pairs in $S$ are i.i.d. draws from a distribution $\cD$ over $\cX \times \{0,1\}$, unknown to the learner.
For the remainder of the paper, we simplify notation by re-indexing such that $b$ is the identity, denoting the $i$-th pair simply as $(X_i, Y_i)$.\\[-3mm]

\noindent\textbf{The observation model.} The learner does not see the labels $Y$. Instead, the learner observes:
\begin{itemize}
\item A sequence of feature vectors $X_1, \ldots, X_n$ drawn i.i.d. from the marginal distribution over $\cX$;
\item A collection of {\em Attribution Sets} $\cA = \{A_1, \ldots, A_M\}$, each attribution set representing the candidate clicks that could have caused the conversion. While the sequence length $n$ is fixed, the number of observed attribution sets $M = \sum_{i=1}^n Y_i$ is a random variable equal to the number of positive labels/conversions.
\end{itemize}

\noindent\textbf{The adversary (attribution mechanism).} We model the generation of these sets (or windows of candidate clicks) via an oblivious adversary. Let $i_j(S) \in [n]$ be the index of the $j$-th positive label ($Y_{i_j}=1$) in $S$. For each conversion index $i_j(S)$, the adversary generates an attribution set $A_j \subseteq [n]$ consisting of $k$ {\em consecutive}\footnote{The consecutiveness is not a strict requirement here and is only assumed to simplify the subsequent notation.} indices that includes $i_j(S)$.
Crucially, the position of the true conversion within the set is governed by a {\em prior distribution} $\pi$ over $[k]$. Specifically, the adversary constructs the window $A_j$ such that the true index $i_j(S)$ appears at the $r$-th position of $A_j$ with probability $\pi[r]$:\footnote
{
If there are boundary effects, e.g., if $i_j(S)=1$, the adversary constructs $A_j$ s.t. $\mathbb{P}(A_j[r] = X_{i_j(S)}) \propto \pi[r]$ for valid $r$. This will anyhow not matter, as our algorithm will only consider $j\,:\,$  $k \le j \le M-k$, with no boundary effects.
}
\begin{equation*}
\mathbb{P}(A_j[r] = X_{i_j(S)}) = \pi[r] \quad \text{for } r \in \{1, \dots, k\}.
\end{equation*}
This prior $\pi$ captures domain knowledge, such as the ``last-touch'' heuristic (where $\pi[k]$ is large, encoding the belief that more frequently the last element is the cause of the conversion) 
or time-decay models. The generation of attribution sets is independent of the feature values $X$ (obliviousness), and the sets may overlap.
To streamline notation, we assume a constant set size $k$ and a fixed prior $\pi$ known to the learner. As we discuss later, our results extend to estimated priors. And they also extend to variable set sizes and variable priors, as briefly discussed in Remark \ref{rem:extension} (Appendix \ref{subsec:robustnessproofs}).\\[-2mm]

\noindent\textbf{Learning goal.}
Recall we want to learn a model to predict $Y$ from $X$ through such data. When learning a model $p:\cX \rightarrow [0,1]$, we operate within a defined hypothesis space $\hyps$. Each hypothesis $h \in \hyps$ represents a deterministic mapping from the input space $\cX$ to $[0, 1]$, where the output $h(x)$ estimates the probability that $Y = 1$ given $X = x$.
We measure the discrepancy between predictions and labels via a loss function $\ell~\colon [0,1]~\times~\{ 0,1\}~\to~\Rset^+$. We assume for simplicity that the loss is {\em bounded} (e.g., the square loss $\ell(h(x),y) = (y-h(x))^2$).

Given distribution $\cD$, hypothesis class $\hyps$, and loss function $\ell$, the {\em population loss} (or {\em statistical risk}) of a hypothesis $h \in \hyps$ is defined as:
$\popl(h) = {\E}_{(x,y) \sim \cD}[\ell(h(x), y)]$. 
We aim to minimize the {\em excess risk}, also referred to as the {\em regret}, $\regret(h)$, which quantifies the performance gap between $h$ and the {\em best-in-class} hypothesis $h^*_{\hyps}$:
\(
\regret(h) = \popl(h) - \popl(h^*_{\hyps}),
\)
where $h^*_{\hyps} = \arg\min_{h \in \hyps} \popl(h)$. We work in the general non-realizable (agnostic) setting where the true optimal mapping may not lie within $\hyps$ (i.e., $h^* \notin \hyps$).

We denote by $\mu$ the joint distribution over the two sources of randomness in our setting: (i) The generation of the dataset $S$ (drawn from $\cD^n$);
(ii) The adversary's generation of attribution sets $A_1, \ldots, A_M$ (drawn i.i.d. via $\pi$, conditioned on $S$).
Our goal is to find an estimator $\wh \in \hyps$ such that the population loss $\popl(\wh)$ is minimized with high probability over $\mu$. Specifically, we aim to design algorithms for $\wh$ and quantify $\regret(\wh)$ in terms of prior $\pi$ (which encodes the degree of label obfuscation), the sample size $n$, and the general properties of the loss function $\ell$ and hypothesis space $\hyps$.\\[-4mm]

\noindent\textbf{Further notation.}
Let $A_j[i] \in \cX$ be the $i$-th feature vector in $A_j$. From here on out, when expectations are not explicitly specified, they are w.r.t. $(X,Y) \sim \cD$. We let $\cD_X, \cD_Y$ denote the $X$ and $Y$-marginals of $\cD$ respectively, and $\cD_{X|Y=1}$, $\cD_{X|Y=0}$ denote conditional laws of $X$ given $Y=1$ and $Y=0$ respectively. We let $\E_1[\cdot]$ denote the conditional expectation $\E[\,\cdot\, |\, Y=1]$ and $\E_0[\cdot]$ denote $\E[\,\cdot\, |\, Y=0]$, and define $\PP_1$ and $\PP_0$ analogously. Finally, recall for $n$ i.i.d. Rademacher variables $\sigma_i \in \{-1,+1\}$, the quantity 
\[
R_n(\hyps) = \frac{1}{n}\,{\E}_{X_1,\ldots,X_n} \Big[{\E}_{\sigma_1,\ldots \sigma_n} \Big[\sup_{h \in \hyps}\,\Big|\sum_{i=1}^n \sigma_i h(X_i) \Big| \,\,|\,X_1,\ldots,X_n \Big] \Big] 
\]
is the (average) Rademacher Complexity of function class $\hyps$.

\begin{figure*}
    \vspace{-0.3in}
    \hspace{-0.2in}
    \includegraphics[width=0.62\textwidth]{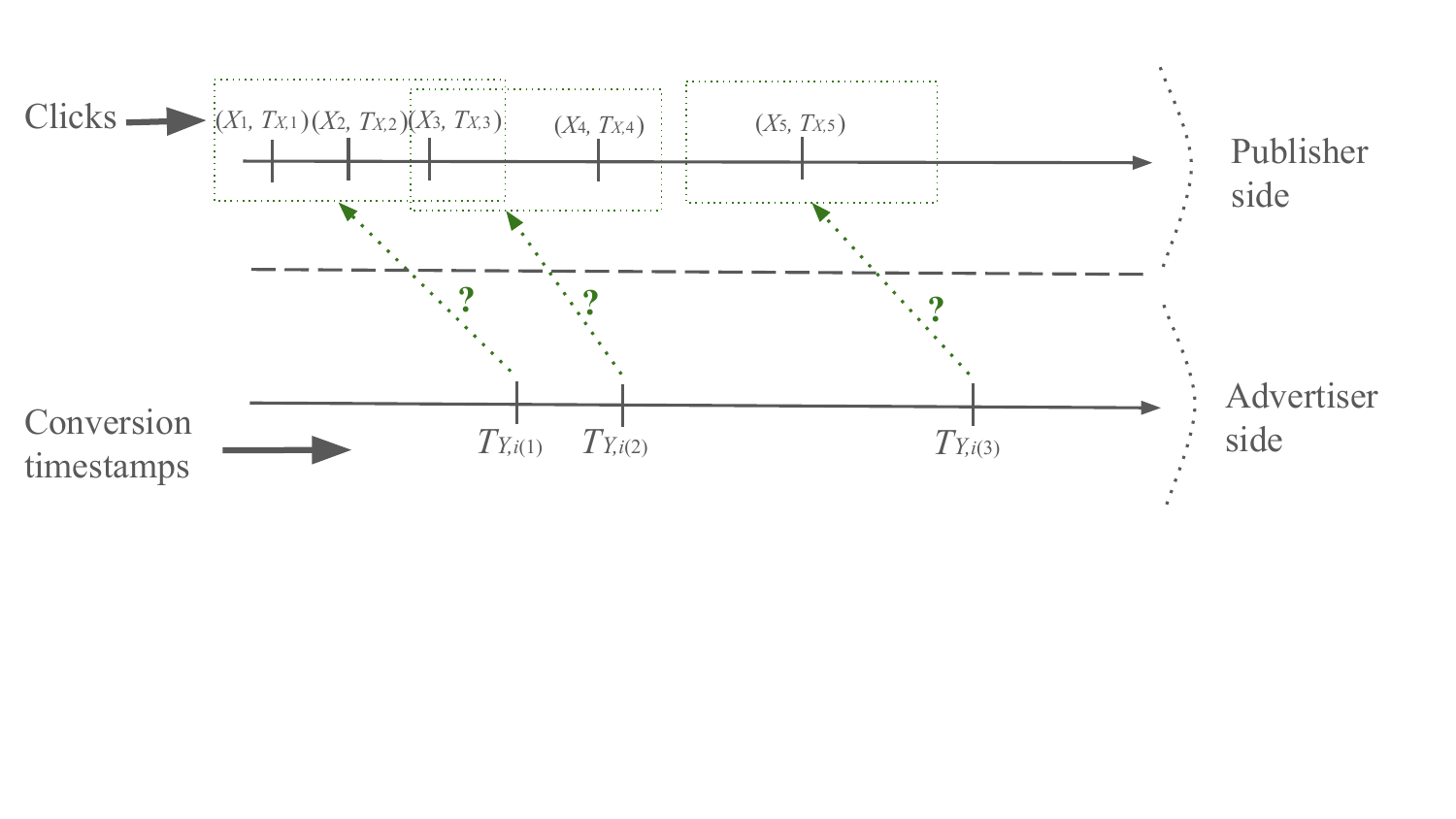}
    \hspace{-0.2in}
    \includegraphics[width=0.62\textwidth]{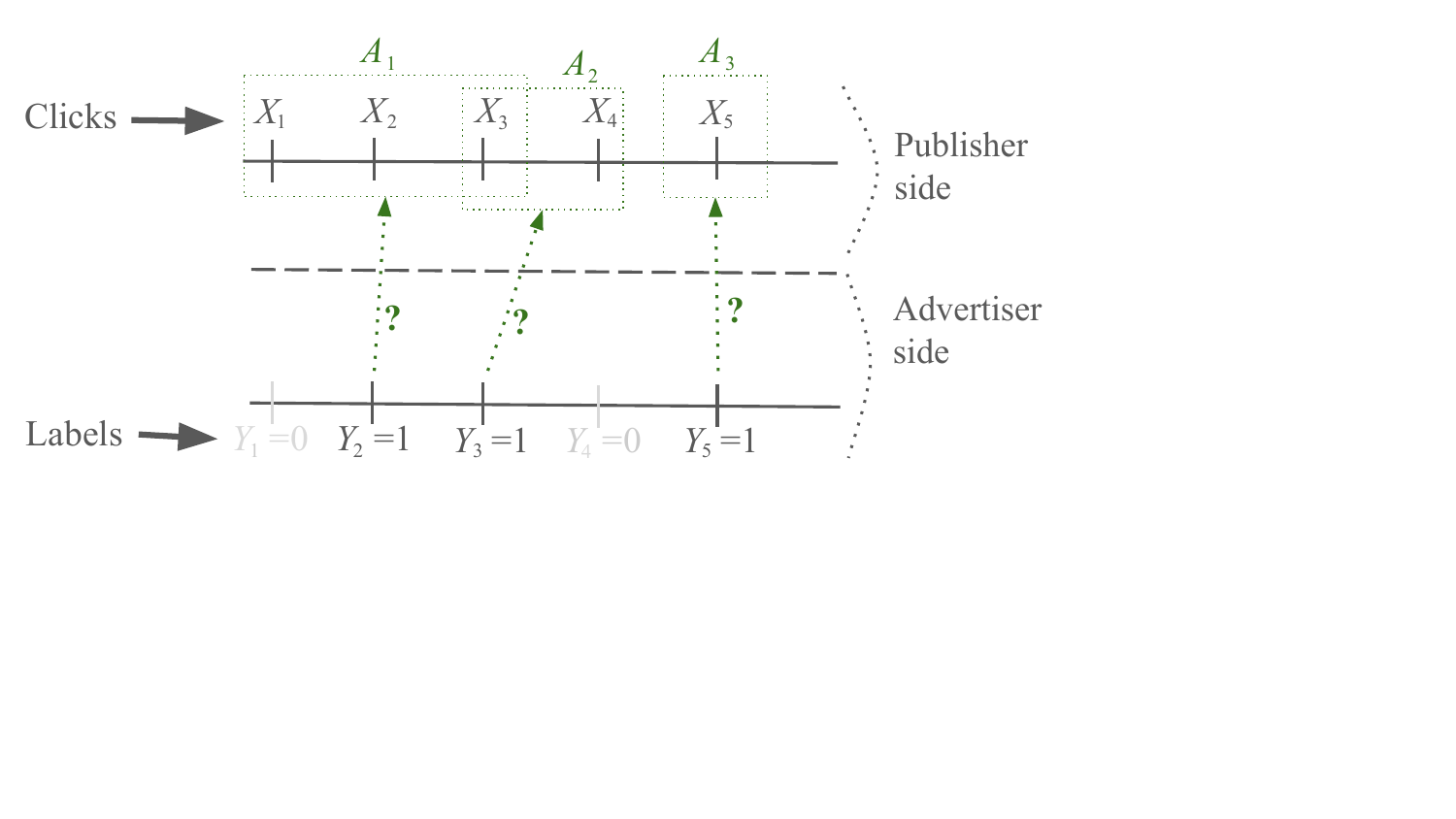}
    \vspace{-1.1in}
    \caption{
    {\bf Left:} The physical process. Publisher clicks $(X_i, T_{X,i})$ generate advertiser conversion timestamps $T_{Y, i(j)}$ with unknown delays. Attribution sets capture this uncertainty; for example, the conversion at $T_{Y,i(2)}$ is attributed to $\{X_3, X_4\}$. While $X_4$ is the more likely cause due to temporal proximity, the sets reflect all candidates defined by the window. Note that the attribution sets may overlap.
    {\bf Right:} The simplified sequence model used for analysis. Random variables $X_1,\ldots, X_5$ with positive labels at indices $2, 3, 5$ generate the observed attribution sets $A_1 = \{1,2,3\}$, $A_2 = \{3, 4\}$, and $A_3 =\{5\}$.
    \label{f:2}}
    \vspace{-0.15in}
\end{figure*}

\section{An Unbiased Estimator for $\popl(h)$}\label{s:unbiased}
Consider a hypothesis $h: \fs \to [0,1]$ and a loss function $\ell(h(x), y)$. 
In stark contrast to standard statistical learning settings, we have only weak partial label information in the form of the attribution sets.
Lacking any explicit labels, constructing an unbiased estimator for $\popl(h)$ is far from obvious. 

To construct an unbiased estimator, our first step is the following decomposition. As the labels $y$ are binary ($y \in \{0,1\}$), we can decompose the loss into a base term and a label-dependent term:
\vspace{-0.15cm}
\begin{align}\label{eq:key_decomposition}
\ell(h(x),y) = \overbrace{\ell(h(x),0)}^{f_1(h(x))} + y\overbrace{\Bigl(\ell(h(x),1)-\ell(h(x),0)\Bigl)}^{f_2(h(x))}\,.
\end{align}
Consequently, for suitable functions $f_1, f_2\,:\,[0,1] \rightarrow \R$, any binary loss can be expressed in the affine form:
\(
\ell(h(x),y) = f_1(h(x)) + y f_2(h(x)).
\)
For instance, the square loss is obtained by $f_1(h) = h^2$, and $f_2(h) = 1-2h$, where in both cases $h \in [0,1]$. 
Thus, estimating the population risk reduces to estimating ${\E}_{(X,Y) \sim \cD}\big[ f_1(h(X)) \big]$ and ${\E}_{(X,Y) \sim \cD}\big[ Y f_2(h(X)) \big]$.

The fundamental challenge in our setting is the latent nature of the labels $Y$, which precludes standard techniques to estimate these expectations such as importance sampling--the joint density is never observed directly.
Remarkably, we show that the combinatorial structure of the attribution sets--governed by the known prior $\pi$--renders the population loss identifiable. Specifically, we leverage a combinatorial argument to derive an exact mapping between the inaccessible population moment ${\E}_{(X,Y) \sim \cD}\big[Y f_2(h(X)) \big]$ and an expectation over the observable attribution signals: ${\E}_{\mu}\big[f_2(h(A_j[i]))\, \ind\{j \leq M-k\}\big]$. This leads to our first main result:

\begin{theorem}\label{prop:unbiasedestimatoroneji}
Let 
\(
\ell(h(x),y) = f_1(h(x)) + y f_2(h(x))
\)
be an arbitrary loss function for binary labels $y 
\in \{0,1\}$, and $\cD$ be a distribution over $\cX\times \{0,1\}$ such that $p = \PP(Y=1) \in (0,1)$. Let $M = \sum_{i=1}^n Y_i$ be a random variable denoting the number of conversions (1s) among the labels in the stream $S$.
Consider any $j$ with $k \le j \le n$,  any $i$ with $1 \le i \le k$, and any $h\,:\cX \rightarrow [0,1]$.
Let
\begin{align}
\widehat \ell(h,j,i) &= \frac{f_2(h(A_j[i]))}{\beta_1(j, i)} + \frac{\E\bigl[f_1 ( h(X) )\bigl]}{B_{n, p, j+k}}  
- \frac{\beta_0(j, i)\, \E\bigl[f_2 ( h(X) )\bigl]}{\beta_1(j, i)\, B_{n, p, j+k}}\,,\label{eq:singlejiestimator}
\end{align}
where $B_{n, p, k'} := \sum_{i'=k'}^n \binom{n}{i'} p^{i'}(1-p)^{n-i'}$ is the Binomial tail, and where
\begin{align*}
\beta_1(j,i) &:= \frac{\pi[i]\, B_{n, p, j+k}}{p} + \Big( B_{n-1, p, j+k-1} - \frac{1 - p\, B_{n-1, p, j+k-1}}{1-p} \Big) \big(1-\pi[i] \big)\,, \\
\beta_0(j,i) &:= \frac{1 - p\, B_{n-1, p, j+k-1}}{1-p} \big(1-\pi[i]\big)\,.
\end{align*} 
Then, we have 
\begin{align*}
{\E}_{\mu}\big[ \widehat \ell(h, j, i)\, \ind\{ j \le M-k\} \big] = \popl(h)\,.
\end{align*}
\end{theorem}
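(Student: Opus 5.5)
The plan is to compute the single observable moment $\E_\mu\bigl[f_2(h(A_j[i]))\,\ind\{j\le M-k\}\bigr]$ exactly, as a linear combination of the two population moments $\E[Y f_2(h(X))]$ and $\E[f_2(h(X))]$, and then read off the theorem by linearity. First note that $\{j \le M-k\}=\{M\ge j+k\}$, so $\E_\mu[\ind\{j\le M-k\}] = \PP(M\ge j+k)=B_{n,p,j+k}$. Hence the two deterministic terms of \eqref{eq:singlejiestimator} contribute $\E[f_1(h(X))] - \frac{\beta_0(j,i)}{\beta_1(j,i)}\E[f_2(h(X))]$ in expectation. It will therefore suffice to establish
\begin{align*}
\E_\mu\bigl[f_2(h(A_j[i]))\,\ind\{M\ge j+k\}\bigr] = \beta_1(j,i)\,\E[Yf_2(h(X))] + \beta_0(j,i)\,\E[f_2(h(X))]\,,
\end{align*}
since dividing by $\beta_1(j,i)$ and adding the other terms then gives $\E[f_1]+\E[Yf_2]=\popl(h)$ via \eqref{eq:key_decomposition}.

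To compute the left side, condition on the label vector $(Y_1,\dots,Y_n)$ and on the adversary's offset $r\in[k]$ (drawn with probability $\pi[r]$ independently of $S$). On $\{M\ge j+k\}$ there are at least $k$ positives after $i_j(S)$, so $i_j(S)+k-1\le n$. Since $j\ge k$, the window also lies within $[n]$ on the left, and there are no boundary effects. The entry $A_j[i]$ is $X_m$ with $m=i_j(S)+(i-r)$. Because the adversary is oblivious and the $X$'s are independent given the labels, $\E[f_2(h(X_m))\mid Y_{1:n},r] = \E_{Y_m}[f_2(h(X))]$. We split on $r$:
\begin{itemize}
\item If $r=i$, then $m=i_j(S)$ and $Y_m=1$. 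This case contributes $\pi[i]\,B_{n,p,j+k}\,\E_1[f_2] = \frac{\pi[i]B_{n,p,j+k}}{p}\E[Yf_2]$.
\item If $r\ne i$, then $m$ sits at a nonzero offset $d=i-r$ from the $j$-th positive. This case contributes $\sum_{r\ne i}\pi[r]\bigl(q_1(d)\E_1[f_2]+q_0(d)\E_0[f_2]\bigr)$, where $q_y(d)=\PP(Y_{i_j+d}=y,\ M\ge j+k)$.
\end{itemize}

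The main obstacle is the combinatorial claim that $q_1(d)$ and $q_0(d)$ do not depend on $d$ (for $0<|d|\le k-1$) and equal the expressions appearing in $\beta_0,\beta_1$. That is, we want $q_0 = p\bigl(1-pB_{n-1,p,j+k-1}\bigr)/(1-p)$-type quantities, so that after substituting $\E_1[f_2]=\E[Yf_2]/p$ and $\E_0[f_2]=(\E[f_2]-\E[Yf_2])/(1-p)$ the coefficients collapse to $\beta_1,\beta_0$. To prove the claim, first use the exchangeability of the i.i.d.\ Bernoulli labels: choose uniformly the position of the $j$-th positive among the $M$ positives, and view the neighbor's label in terms of the count of positives excluding one position, which produces the $B_{n-1,p,\cdot}$ terms. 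Concretely, I would write $q_1(d)+q_0(d)=B_{n,p,j+k}$ and compute one of them directly by summing over the position $t$ of the $j$-th positive and the label of $t+d$. I would then use a Vandermonde / binomial-tail identity, namely $\sum_t \binom{t-1}{j-1}\cdots$ telescoping into $\PP(\mathrm{Bin}(n-1,p)\ge j+k-1)$, to show that the dependence on $d$ cancels. For this I would treat $d>0$ and $d<0$ separately, checking that the requirement of $\ge k$ positives after the $j$-th absorbs the one extra neighbor in both directions.

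Once the claim is verified, the remaining algebra is routine. The coefficients sum over $r\ne i$ to give the factor $1-\pi[i]$. Collecting terms in $\E[Yf_2]$ and $\E[f_2]$ reproduces exactly $\beta_1(j,i)$ and $\beta_0(j,i)$ as defined in the statement, which completes the argument.
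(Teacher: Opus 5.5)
Your plan follows the paper's proof essentially step for step. You condition on the labels and the offset $r$, split $r=i$ from $r\neq i$, and use a Vandermonde count to get $q_1(d)=\PP(Y_{i_j(S)+d}=1,\,M\ge j+k)=p\,B_{n-1,p,j+k-1}$ for every $1\le|d|\le k-1$; this is Lemma~\ref{lem:problabel1beforeij}, and your ``given the labels, $X_m\sim\cD_{X\mid Y=Y_m}$'' is a cleaner form of Lemma~\ref{lem:conditionallawoneidx}. Those steps are sound.

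\textbf{Where it fails.} The gap is in the step you call routine. Your own (correct) identity $q_1(d)+q_0(d)=B_{n,p,j+k}$ forces $q_0=B_{n,p,j+k}-p\,B_{n-1,p,j+k-1}$. Substituting this produces $\beta_0,\beta_1$ with the $1$ in ``$1-p\,B_{n-1,p,j+k-1}$'' replaced by $B_{n,p,j+k}$, not the constants in the statement. The stated constants would require $q_0=1-p\,B_{n-1,p,j+k-1}$ (not the $p(1-pB_{n-1,p,j+k-1})/(1-p)$ you wrote). That is exactly what the paper's Lemma~\ref{lem:previjSlaw} asserts, but the lemma is inconsistent: summing it over $x$ gives $B_{n,p,j+k}$ on the left and $1$ on the right.

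\textbf{The statement as written is false.} No completion of your argument can reach the identity with these constants. Take $f_2\equiv 1$. The claim then reduces to $B_{n,p,j+k}=p\,\beta_1(j,i)+\beta_0(j,i)=\pi[i]\,B_{n,p,j+k}+1-\pi[i]$. This fails whenever $\pi[i]<1$ and $B_{n,p,j+k}<1$; for instance $n=4$, $k=j=2$, $\pi=(\tfrac12,\tfrac12)$, where $B_{4,p,4}=p^4$ but the right side is $(1+p^4)/2$. In general, the stated $\beta$'s give $\E_\mu[\widehat\ell(h,j,i)\ind\{j\le M-k\}]=\popl(h)-\frac{(1-\pi[i])(1-B_{n,p,j+k})}{(1-p)\,\beta_1(j,i)}\,\E[(1-Y)f_2(h(X))]$.

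\textbf{What your argument does prove.} It gives exact unbiasedness once $1$ is replaced by $B_{n,p,j+k}$ in $\beta_0,\beta_1$. With the original constants the bias is $O\bigl(pF_2e^{-\Omega(np)}/\pi[i]\bigr)$ (with $|f_2|\le F_2$) in the regime $j+k\le np/2$ of Lemmas~\ref{lem:pnpjboundvsp} and~\ref{lem:beta01approxsamplecomplexity}. So the downstream results survive with a negligible extra term, but the theorem as written needs the corrected constants.
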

A proof sketch follows; a full proof is in Appendix \ref{sa:proofs1}. Next in Section \ref{sec:refined}, we leverage these unbiased estimators from Theorem \ref{prop:unbiasedestimatoroneji} across different $j$ and $i$ to build an unbiased estimator of the population loss, and study its statistical properties under ERM. \\ [-2mm]

\begin{proof}{\bfseries (of Theorem~\ref{prop:unbiasedestimatoroneji}; sketch)}
Consider the decomposition (\ref{eq:key_decomposition}). To obtain an unbiased estimator of $\popl(h)$ having access to features without explicit labels, we need to leverage the coarse attribution set signals to estimate the label-conditional moment $\E\big[ Y f_2(h(X)) \big]$. $\E\big[ f_1(h(X)) \big]$ can be estimated just from features. Surprisingly, we show that $\E\big[ Y f_2(h(X)) \big]$ can be cast in terms of $A_j[i]$, the $i$-th element of the $j$-th attribution set, which we do have access to:
\begin{equation} \label{eq:pf_sketch_step1}
\E\bigl[ Y f_2(h(X)) \bigr] = \frac1{\beta_1(j, i)}{\E}_{\mu}\Bigl[f_2(h(A_j[i])) \cdot \ind\{j \le M-k\} \Bigl] - \frac{\beta_0(j, i)}{\beta_1(j, i)} \E\bigl[ f_2(h(X)) \bigr]\,.
\end{equation}
Note the $B_{n,p,j+k}$ terms in Theorem \ref{prop:unbiasedestimatoroneji} arise naturally as $\E_{\mu}\big[ \ind\{j \le M-k\} \big]=B_{n,p,j+k}$. Combining (\ref{eq:pf_sketch_step1}) with the decomposition (\ref{eq:key_decomposition}) now proves Theorem \ref{prop:unbiasedestimatoroneji}. 

We will now explain the steps to establish the result in (\ref{eq:pf_sketch_step1}). Let $\eta(x) = \PP(Y=1|X=x)$. Noting $\E\big[Y f_2(h(X)) \big] = \E\big[ f_2(h(X)) \E[Y|X] \big] = \int \PP(X=x) f_2(h(x)) \eta(x) dx$, (\ref{eq:pf_sketch_step1}) follows from integrating the following result:
\begin{equation} \label{eq:pf_sketch_step2}
{\E}_{\mu}\Bigl[\ind \{ A_j[i]=x\} \cdot \ind\{ j \le M-k \} \Bigl] = \PP(X=x) \Big( \eta(x)\, \beta_1(j,i) + \beta_0(j,i) \Big)\,.
\end{equation}
Our aim is now to prove (\ref{eq:pf_sketch_step2}). To do so, we first simplify the left hand side of (\ref{eq:pf_sketch_step2}). 
By definition of the adversary's action, $A_j[i]$, the $i$-th element of $A_j$, is $X_{i_j(S) + i - r}$ with probability $\pi[r]$. Thus,
\vspace{-0.1cm}
\begin{equation}\label{eq:pf_sketch_summation}
{\E}_\mu\Big[ \ind\{ A_j[i]=x \} \cdot \ind\{j \le M-k\} \Big] 
= 
\sum_{r=1}^k \pi[r] \E_{S \sim \cD^n} \Big[ \ind\{j \le M-k\} \cdot \ind\big\{ X_{i_j(S) + i - r}=x\big\} \Big]\,.
\vspace{-0.1cm}
\end{equation}
The summation splits into two cases: $r=i$ and $r \neq i$. We now characterize $\E_{S \sim \cD^n} \big[ \ind\{ X_{i_j(S) + i - r}=x\} \cdot \ind\{j \le M-k\} \big]$ for $r \neq i$, the argument for $r=i$ is similar (and omitted).

A-priori, this is challenging, as the two events in the expectation are tightly coupled. We simplify this expectation with the following key observation: conditioned on any realization of the labels---in particular the event $\{j \le M-k\}$---the law of $X_{i_j(S)+i-r}$ can be readily understood. 
Specifically, conditioned on $\{Y_{i_j(S)+i-r}=1\,,\, j \le M-k\}$ we have $X_{i_j(S)+i-r} \sim \cD_{X|Y=1}$, and conditioned on $\{ Y_{i_j(S)+i-r}=0\,,\, j \le M-k\}$ we have $X_{i_j(S)+i-r} \sim \cD_{X|Y=0}$: these results are in Lemma \ref{lem:conditionallawoneidx}. 

Now, the only remaining piece to compute the expectation of $\ind\{X_{i_j(S) + i - r}=x\} \cdot \ind\{j \le M-k\}$ is computing the probabilities of the events $\{Y_{i_j(S)+i-r}=1\,,\, j \le M-k\}$ and $\{ Y_{i_j(S)+i-r}=0\,,\, j \le M-k\}$. 
Interestingly, by leveraging a combinatorial argument, we can show that the probability of these two events can be simplified to $p B_{n-1, p, j+k-1}$ and $1 - p B_{n-1, p, j+k-1}$, respectively; this is proven in Lemma \ref{lem:problabel1beforeij}. 
Given these simplifications, and by leveraging the Bayes' rule, we can show that $\E_{S \sim \cD^n} \big[ \ind\{ X_{i_j(S) + i - r}=x\} \cdot \ind\{j \le M-k\} \big]$ equals the expression
\begin{align}
\PP(X=x)\, \eta(x)\, B_{n-1, p, j+k-1} + \frac{\PP(X=x)\,\bigl(1 - \eta(x) \bigr)}{1-p} \cdot \bigl(1 - p\, B_{n-1, p, j+k-1} \bigl)\,.\notag
\vspace{-0.05in}
\end{align}
Given this expression and the definitions of $\beta_1(j, i), \beta_0(j, i)$, the result in (\ref{eq:pf_sketch_step2}) follows. 
\end{proof}

\section{From an Unbiased Estimator to an ERM Algorithm}\label{sec:refined}
We now leverage the unbiased estimator from Theorem \ref{prop:unbiasedestimatoroneji} to create a sample-efficient unbiased estimator that uses a sizeable fraction of the data. Specifically, Theorem \ref{prop:unbiasedestimatoroneji} implies that for any hypothesis $h$, the quantity $\widehat \ell(h, j, i)\, \ind\{j \le M-k\}$ is an unbiased estimator of $\popl(h)$, provided we have exact knowledge of the conversion rate $p$ (involved in the expression for $\beta_0(j,i)$ and $\beta_1(j,i)$) and the two expectations $\E[f_1 ( h(X) )]$ and $\E[f_2 ( h(X) )]$. 

However, the cost for removing this prior knowledge would in fact be minor (and leading to a negligibly biased estimator). This is because these three quantities can be straightforwardly estimated at a higher resolution than the one allowed by the signals we receive from the adversary. 
To see this, note we can always split $\{X_1,\ldots,X_n\}$ into two equal-size subsets $\{X_1,\ldots,X_{n/2}\}$ and $\{X_{n/2+1},\ldots,X_{n}\}$, estimate $\E[f_1 ( h(X) ) ]$ and $\E[f_2 ( h(X) )]$ via $\{X_1,\ldots,X_{n/2}\}$, estimate $p$ as the fraction of conversions up to time $n/2$, and then build the estimator $\widehat \ell(h,j,i)$ on variables in the second half $\{X_{n/2+1},\ldots,X_{n}\}$, where true expectations are replaced by the estimates constructed on the first half.
Now, any uniform guarantee over $h \in \hyps$ in estimating $\E[f_1 ( h(X) )]$ and $\E[f_2 ( h(X) )]$ will be at a rate $1/\sqrt{n}$, and similarly for $p$.
On the other hand, as we shall see below in Theorem \ref{thm:allattributionsetssamplecomplexity}, the amount of information the adversary releases to the learner can only afford rates at best $1/\sqrt{n}$. 

Consequently, with little loss of generality, we assume that $p, \E[f_1 (h(X)) ], \E[ f_2 (h(X)) ]$ are known to the learner. Furthermore, for simplicity, we work with a {\em bounded} loss function: for all $h \in \hyps$, and all $(x,y) \in \cX\times \{0,1\}$ we have
$\ell(h(x),y) = f_1(h(x)) + y f_2(h(x))$, with\footnote
{
The boundedness involving $f_1(\cdot)$ will not play any role here. Since we assumed prior knowledge of $\E[f_1(h(X)]$ in Section \ref{s:unbiased}, it suffices to have $\E[f_1(h(X))] < \infty$.
} 
$|f_1(h(x))| \leq F_1$, and $|f_2(h(x))| \leq F_2$, for some $F_1, F_2 > 0$.

\subsection{The ERM Algorithm}
We now describe the ERM algorithm. Write $S=\big\langle(X_1,Y_1), \ldots, (X_n,Y_n)\big\rangle$, and denote the family of attribution sets by $\cA = \{A_1,\ldots, A_M\}$, with $M = \sum_{i=1}^n Y_i$. Define $\mupper := \min\{ \frac{np}2 - k, M-k \}$. Let $\Sigma = \sum_{i=1}^k \pi[i]^2 = \|\pi\|_2^2$; the quantity $\frac1{\Sigma}$ is the ``effective'' set size determined by prior sparsity.\footnote
{
We have $1 \ge \Sigma \ge \frac1k$, where the lower bound is from Cauchy-Schwarz; equality is obtained in the upper and lower bound when $\pi$ is a singleton and uniform, respectively.
} 
Recalling the definition of $\widehat\ell(h,j,i)$ from (\ref{eq:singlejiestimator}), we then consider the estimator
\begin{align}\label{eq:allsetsestimator}
\widehat \ell_M (h, S, \cA) = \frac{1}{\frac{np}2 - 2k + 1}\,\sum_{j=k}^{\mupper} \widehat\ell(h, j)\text{~~~~~ where~~~~~ }\widehat\ell(h,j) = \frac1{\Sigma} \sum_{i=1}^k \pi[i]^2\, \widehat\ell(h, j, i)\,.
\end{align}
Note $\widehat \ell_M (h, S, \cA) = \frac{1}{\frac{np}2 - 2k + 1}\,\sum_{j=k}^{\frac{np}2-k} \widehat\ell(h, j) \cdot \ind\{j \le M-k\}$. 
Thus by Theorem \ref{prop:unbiasedestimatoroneji}, $\widehat\ell_M(h, S, \cA)$ is unbiased.
Finally, define the ERM estimator
\begin{align}
\wh = \wh (S,\cA) = \arg\min_{h \in \hyps}\,  \widehat \ell_M(h, S, \cA)\,.\label{eq:erm_estimator_def}
\end{align}

We next state the following sample complexity guarantee for $\wh$. This is the main result of this paper. 
\begin{theorem}\label{thm:allattributionsetssamplecomplexity}
Let $\ell(h(x),y) = f_1(h(x)) + y f_2(h(x))$ be a bounded and Lipschitz loss function:
for some $F_1, F_2, L > 0$, 
$|f_1(h(x))| \leq F_1$, 
$|f_2(h(x))| \leq F_2$
for all $h \in \hyps$ and $x \in \cX$, and $f_2(\cdot)$ is $L$-Lipschitz. 
Let $\wh \in \hyps$ be the hypothesis returned by the ERM estimator (\ref{eq:erm_estimator_def}). 
Suppose $p,\,\delta \in (0,1/2]$, $k \leq \frac{np}{8}$, and $np = \Omega\left( \log\left(\frac{1}{\delta\,p} \max_{i \in [k]}\frac{1}{\pi[i]} \right) \right)$.
Then with $\mu$-probability at least $1-\delta$,
\[
\regret(\widehat h) = \widetilde O\left( \frac{L\, R_n(\hyps)}{\Sigma} + \frac{F_2}{\Sigma}\,\sqrt{\frac{\log 1/\delta}{n}} + \frac{F_2}{\Sigma}\,\sqrt{\frac{p\, \complexitymeasurehere}{n}}\right)\,,
\]
where $\pseudodim(\hyps)$ denotes the pseudo-dimension of $\hyps$. Here $\widetilde O(\cdot)$ hides 
a logarithmic dependence on $np, L, \Sigma, F_2$ but \textit{excluding} $1/\delta$.
\end{theorem}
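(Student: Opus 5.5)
The plan is the usual ERM route: reduce the regret to a uniform deviation bound and then control that deviation. Since $\widehat\ell_M(h,S,\cA)$ is unbiased for $\popl(h)$ by Theorem~\ref{prop:unbiasedestimatoroneji}, the standard argument gives
\[
\regret(\wh) \le 2\sup_{h\in\hyps}\bigl|\widehat\ell_M(h,S,\cA)-\popl(h)\bigr|.
\]
The terms of $\widehat\ell(h,j,i)$ that involve $\E[f_1(h(X))]$ and $\E[f_2(h(X))]$ are deterministic given $\ind\{j\le M-k\}$. So only two random pieces remain. The first is the fluctuation of the indicators $\ind\{j\le M-k\}$, which depends only on $M\sim\mathrm{Bin}(n,p)$. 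The second is the centered attribution term
\[
Z(h) = \frac{1}{\frac{np}{2}-2k+1}\sum_{j=k}^{\mupper}\frac{1}{\Sigma}\sum_{i=1}^k \frac{\pi[i]^2}{\beta_1(j,i)}\, f_2\bigl(h(A_j[i])\bigr).
\]

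First I control the coefficients. Under $np=\Omega(\log(\frac{1}{\delta p}\max_i 1/\pi[i]))$ and $j\le np/2-k$, a Chernoff bound makes every tail $B_{n,p,j+k}$ and $B_{n-1,p,j+k-1}$ equal to $1-O(\delta p\min_i\pi[i])$. Hence $\beta_0(j,i)\approx 0$ and $\beta_1(j,i)\approx \pi[i]/p$, with errors small enough that the relative error stays $O(1)$ even when $\pi[i]$ is tiny; this is why the condition carries $\max_i 1/\pi[i]$. The same Chernoff bound shows that $M-k\ge np/2-k$ with probability at least $1-\delta/2$. On that event $\mupper=np/2-k$, the indicators are all $1$, and $Z(h)$ is essentially
\[
\frac{p}{\Sigma}\cdot\frac{1}{np/2}\sum_{j}\sum_i \pi[i]\, f_2\bigl(h(A_j[i])\bigr).
\]

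The main step is a uniform concentration bound for $Z(h)-\E Z(h)$ over $\hyps$, despite the dependence among overlapping attribution sets. I would condition on the labels $Y_{1:n}$ and on the adversary's offsets, which are independent of $X$ by obliviousness. Then each $A_j[i]$ is a fixed index $\iota(j,i)$, and $Z(h)=\sum_{t=1}^n w_t f_2(h(X_t))$ with deterministic weights $w_t$, where the $X_t$ are conditionally independent given the labels with laws $\cD_{X|Y_t}$.

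Bounding $w_t$ is where I expect the main obstacle. An index $t$ can lie in up to about $k$ overlapping sets. The $\pi[i]$ weighting is what should give $\sum_t w_t^2 \lesssim \frac{1}{\Sigma^2}\cdot\frac{p^2}{n^2p^2}\cdot\frac{1}{p}\cdot\Sigma\cdot p\cdot\ldots$, and the target is $\sum_t w_t^2=\widetilde O(1/(n\Sigma^2))$ together with $\sum_t|w_t|=O(1/\Sigma)$. I would get this by bounding, for each $t$, $\sum_{(j,i):\iota(j,i)=t}\pi[i]$ with a Cauchy–Schwarz or maximal-overlap argument, using the consecutiveness of indices. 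If a deterministic overlap bound fails, I would instead average over the offsets, where the probability that $t$ falls at position $i$ of set $j$ is controlled by $\pi$.

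With the weights in hand, McDiarmid's inequality (bounded differences $2F_2|w_t|$) gives the $\frac{F_2}{\Sigma}\sqrt{\log(1/\delta)/n}$ term. Symmetrization for non-identically distributed independent variables, followed by Talagrand's contraction with the $L$-Lipschitz $f_2$, gives a weighted Rademacher average of order $\frac{L}{\Sigma}R_n(\hyps)$. For the last term I would use a Dudley or pseudo-dimension covering argument on the weighted process, splitting $\sum_t w_t\sigma_t h(X_t)$ into label-$1$ indices (about $np$ of them, carrying weight $\approx 1/(\Sigma np)$) and label-$0$ indices. This yields an extra $\frac{F_2}{\Sigma}\sqrt{p\,\min\{\pseudodim(\hyps),k\}/n}$. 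The $k$ alternative comes from noting that the set of positions is determined by at most $k$ shift patterns, so a union bound over $k$ configurations replaces the capacity.

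Finally, I collect the failure events: the Chernoff bound on $M$ and the bounds on $B$, and the concentration step. I then undo the conditioning, which is valid because each bound holds uniformly in the conditioned labels and offsets.
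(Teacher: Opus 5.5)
Your plan has a genuine gap: it never controls the error coming from the labels and the adversary. After you condition on the labels $Y_{1:n}$ and the adversary's offsets $O$, McDiarmid plus symmetrization in the $X_t$'s bounds $\sup_{h}|Z(h)-\E[Z(h)\mid Y,O]|$. ``Undoing the conditioning'' only turns this into an unconditional bound on that same quantity, not on $\sup_h|Z(h)-\E[Z(h)]|$.

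By obliviousness, $\E[Z(h)\mid Y,O]=a_1(h)W_1+a_0(h)W_0$, where $a_y(h)=\E[f_2(h(X))\mid Y=y]$ and $W_y=\sum_{t:Y_t=y}w_t$. This centering is itself random. Its fluctuation around $\E[Z(h)]$ is exactly the error due to the labels and the adversary's draws, i.e., where the weak supervision enters, and nothing in your plan controls it. The paper spends most of its effort here. It applies Theorem~\ref{prop:mcdiarmids_whp} to $\E_{\mu|S}[\widehat\ell_M(h)]$ as a function of the pairs $(X_i,Y_i)$. The sensitivity bound (Lemma~\ref{lem:allattrsetconditionalexpectationconcentration_sensitivity}) has to handle a label flip adding or removing an entire attribution set, which is why it restricts to $\{M\ge np/2\}$, where the range of $j$ and the constant term are fixed. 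It then runs a separate concentration over the conditionally independent sets $A_j$ with $S$ frozen. In your decomposition this step could actually be simpler. Here $h$ enters only through the two bounded scalars $a_1(h),a_0(h)$, and $W_0+W_1$ is deterministic on $\{M\ge np/2\}$. So everything reduces to concentrating the scalar $W_1(Y,O)$, first over the offsets (independent given $Y$) and then over the labels. But that argument, including the label-flip sensitivity, is missing.

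The weight bound you identify as the main obstacle is also left open, and your fallback is incompatible with your conditioning.
\begin{itemize}
\item With offsets fixed, $w_t\asymp u_t/(n\Sigma)$ where $u_t=\sum_{j\ni t}\pi[\mathrm{pos}_j(t)]$. This $u_t$ can be of order $k\max_i\pi[i]$ when many overlapping sets place $t$ at high-prior positions.
\item The paper's bound of $4p$ in \eqref{eq:this_one3} holds only after averaging over the adversary. Averaging over offsets, as you suggest, destroys the deterministic weights that your McDiarmid and contraction steps need.
\item You would instead need a high-probability bound on $\max_t u_t$ (e.g., Chernoff, using $\E_O[u_t]\le 2$), which costs extra $\log(n/\delta)$ factors.
\item Likewise, symmetrization given $Y$ produces a Rademacher average under $\prod_t\cD_{X|Y=Y_t}$ rather than $R_n(\hyps)$. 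Relating the two requires one more concentration step in $Y$.
\end{itemize}

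Your route to the third term does not work. A covering bound on $\sum_t w_t\sigma_t h(X_t)$ is an alternative bound on the very quantity you already controlled by contraction, not an additional term. A union bound over $k$ shift patterns costs $\log k$, not a capacity of $k$. In the paper, $\min\{\pseudodim(\hyps),k\}$ comes from the adversary-fluctuation step with $S$ frozen. There one covers $\hyps$ on the $k$ points of each $A_j$, with covering number at most $(1/\epsilon)^k$ or $(1/\epsilon)^{O(\pseudodim(\hyps))}$.

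Finally, $\beta_0(j,i)\approx 1-\pi[i]$, not $0$ (Lemma~\ref{lem:beta01approxsamplecomplexity}). This is harmless only because it multiplies the known quantity $\E[f_2(h(X))]$.
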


Proving Theorem \ref{thm:allattributionsetssamplecomplexity} poses several challenges. The attribution sets may overlap, thus the estimators $\widehat \ell(h, j)$ are \textit{not} independent across different $j$. Moreover, the unbiased estimators $\widehat \ell(h, j, i)$ are \textit{also not} independent across $i$, as the $A_j[i]$ are not independent; for example, for any $1 \le j \le M$, it is known that there is at least one conversion among the labels corresponding to $A_j[1], \ldots, A_j[k]$. 

One way partially around the independence issue is to force independence across different $j$ by skipping data; instead of using all attribution sets, only use a largest subsequence of well-separated attribution sets, e.g.,  $A_k, A_{3k}, A_{5k} ,\ldots $. The drawback of this approach is that we are only using $\frac{M}{2k}$-many sets instead of the available $M$, in contrast to the ERM estimator (\ref{eq:erm_estimator_def}) that we consider. This would inevitably lead to suboptimal sample complexity guarantees. We instead eschew an approach based on independence of attribution sets, and as such we are able to use a sizeable fraction of 
them, as claimed above. The full proof of Theorem \ref{thm:allattributionsetssamplecomplexity} is in Appendix \ref{sa:proofs3}. A proof sketch follows.\\[-4mm]

\begin{proof}{\bfseries (of Theorem~\ref{thm:allattributionsetssamplecomplexity}; sketch)}
We sidestep the potentially complicated dependency structure by rewriting $\widehat \ell(h, S, \cA)$ as a function of the $X_i$, and directly studying the sensitivity of $\widehat \ell(h, S, \cA)$. We then control $\regret(\widehat h)$ by splitting into two separate uniform convergence guarantees: 
\begin{enumerate}
\item Convergence of ${\E}_{\mu|S}[\widehat \ell(h, S, \cA)]$ to its expectation $\popl(h)$, where probability is w.r.t. $S \sim \cD^n$. Here we use a Rademacher complexity analysis that views $f(S) = {\E}_{\mu|S}[\widehat \ell(h, S, \cA)]$ as a function of i.i.d. random variables $(X_1,Y_1),\ldots, (X_n,Y_n)$; we control the sensitivity of $f(S)$ to each individual pair $(X_i,Y_i)$. This yields the $\frac{L\, R_n(\hyps)}{\Sigma}$ term in the regret guarantee. 
\item Convergence of $\widehat \ell(h, S, \cA)$ to its expectation ${\E}_{\mu|S}[\widehat \ell(h, S, \cA)]$ in the conditional space where $S$ is frozen, where probability is w.r.t. the adversary. This instead relies on a covering argument that views $\widehat \ell(h, S, \cA)$ solely as a function of the attribution sets (since $S$ is frozen), and applies with high probability over the generation of $S$. This generates the term $\frac{F_2}{\Sigma}\,\sqrt{\frac{p\, \complexitymeasurehere}{n}}$.
\end{enumerate}
Finally, the middle term $\frac{F_2}{\Sigma}\,\sqrt{\frac{\log 1/\delta}{n}}$ is a confidence term that is common to both analyses.
\end{proof}
%

\begin{remark}\label{rem:smallpiindices}
We note that the condition $np = \Omega\left( \log\left(\frac{1}{\delta\,p} \max_{i \in [k]}\frac{1}{\pi[i]} \right) \right)$ in Theorem \ref{thm:allattributionsetssamplecomplexity} forces $\pi[i]$ not to be exponentially small in $np$. This assumption is only made above for technical convenience, and can be relaxed to $np \ge \log\Big( \frac{\sqrt{2k}}{\delta p} \Big)$. Indeed, we can always construct $\widehat \ell(h, j)$ by restricting to the $i$ such that $\pi[i] \ge \frac1{\delta p} e^{-np}$. One then follows the exact same proof of Theorem \ref{thm:allattributionsetssamplecomplexity} in Appendix \ref{sa:proofs3}, only considering these $i$. Note that, as per Theorem \ref{prop:unbiasedestimatoroneji}, unbiasedness is retained. The difference is that $\Sigma$ now is replaced by the slightly smaller quantity $\Sigma' = \sum_{i\,:\,\pi[i] \ge \frac1{\delta p} e^{-np}} \pi[i]^2$. Yet, when $np \ge \log\Big( \frac{\sqrt{2k}}{\delta p} \Big)$, 
since $\Sigma \ge \frac1k$, we have
\(
\sum_{i\,:\,\pi[i] < \frac1{\delta p} e^{-np}} \pi[i]^2 \le \frac{k e^{-2np}}{\delta^2 p^2} \le \frac1{2k} \le \frac{\Sigma}2~,
\)
implying $\Sigma' \ge \frac{\Sigma}2$.
Thus the regret rate only changes by a constant factor.
\end{remark}

\begin{remark}
It is worth stressing that the weights $w_i = \pi[i]^2/\Sigma$, in the definition of $\widehat\ell(h,j)$ in (\ref{eq:allsetsestimator}) have been selected just for the sake of minimizing the variance of $\widehat \ell (h, j)$. 
Note the random variable in $\widehat \ell (h, j, i)$ is $f_2(h(A_j[i]))/\beta_1(j,i)$, and $\beta_1(j,i) \approx \pi[i]/p$ (Lemma \ref{lem:beta01approxsamplecomplexity}). Ignoring pairwise correlations, we approximate the variance of $\widehat \ell (h, j)=\sum_{i=1}^k w_i \widehat \ell (h, j, i)$ by $\sum_i w_i^2/\pi[i]^2$, which is minimized at $w_i\propto \pi[i]^2$, as per our choice in (\ref{eq:allsetsestimator}). 
Moreover, one can show that within each attribution set, such pairwise correlations are {\em non-positive}. The proof of this claim is lengthy but otherwise similar to the proof of Theorem \ref{prop:unbiasedestimatoroneji}. Yet, since this is not needed to prove the other results, we did not include it in the paper for brevity. 
In general, it seems difficult to find the optimal weights to construct the {\em global} estimator $\widehat \ell_M (h, S, \cA)$ from the $\widehat\ell (h, j)$ in (\ref{eq:allsetsestimator}), due to possible overlaps among attribution sets.  
\end{remark}

We now instantiate Theorem \ref{thm:allattributionsetssamplecomplexity} on several concrete examples. 
\begin{enumerate}
\item \textbf{Uniform vs decaying priors.} For a uniform prior $\pi[\cdot]$, our regret bound is of the form 
$$
k \cdot \frac{L R_n(\hyps) + F_2\sqrt{\log 1/\delta} + F_2 \sqrt{p\,\min\{ \pseudodim(\hyps), k \}}}{\sqrt{n}}~.
$$
However, in practice the last-touch heuristic (last element in attribution set triggers conversion) is often approximately true. A heavily decaying prior, e.g., with polynomial or exponential decay, is a more accurate model. For these examples, $\Sigma$ is much larger, being $\Omega(1)$. Our regret in Theorem \ref{thm:allattributionsetssamplecomplexity} adapts to such settings gracefully.
\item \textbf{$\hyps$ is a VC-class.} That is, $h(x) \in \{0,1\}$ for all $x \in \cX$ and $h \in \hyps$, and its VC-dimension $VCdim(\hyps)=d<\infty$. Our regret bound is of the form 
$$
\frac{L\sqrt{d}+ F_2\sqrt{\log 1/\delta} + F_2\sqrt{p\,\min\{d,k\}}}{\Sigma \sqrt{n}}~.
$$ 
This follows from the fact that for VC-classes, $R_n(\hyps) = \widetilde O\big(\sqrt{VCdim(\hyps)/n}\big)$.
\item \textbf{$|\hyps|$ is a finite class.} Our regret bound is of the form 
$$ 
\frac{L \sqrt{\log |\hyps|} + F_2\sqrt{\log 1/\delta} + F_2 \sqrt{p\log |\hyps|}}{\Sigma \sqrt{n}}~.
$$ 
This follows from Massart's Finite Lemma \citep{ma00} and the bound $\pseudodim(\hyps) \leq \log |\hyps|$.
\item \textbf{When $pk = O(1)$.} Here the attribution sets have minimal overlap with high probability. Our regret bound is of the form 
$$ 
\frac{L R_n(\hyps)}{\Sigma} + \frac{F_2}{\Sigma}\sqrt{\frac{\log 1/\delta}{n}}~.
$$
\end{enumerate}

\subsection{Robustness to errors in the prior}
Here we discuss how to extend Theorem \ref{thm:allattributionsetssamplecomplexity} to when the learner knows an estimate $\widehat{\pi}$ of the distribution $\pi$ with small error. This is a realistic situation where, e.g., $\pi$ is well-approximated by a parametric form, and we can estimate its parameters, for example with a small source of labeled data in the clear. 

The algorithm is similar but implemented in terms of $\widehat\pi$. 
We define $\widehat\beta_1(j,i), \widehat\beta_0(j,i)$ the same way as $\beta_1(j, i), \beta_0(j, i)$, but using $\widehat\pi$ instead of $\pi$, and then define $\widehat \ell(h, j, i), \widehat\ell(h,j)$ in terms of $\widehat\beta_1(j,i), \widehat\beta_0(j,i)$ as before. The explicit definition is provided in Appendix \ref{subsec:robustnessproofs}. We now let
\begin{align*}
\widehat\ell = \widehat\ell(h, S, \cA) = \frac1{\frac{np}2 - 2k + 1} \sum_{j=k}^{\mupper} \widehat\ell(h, j)\,,\qquad \wh = \wh (S,\cA) = \arg\min_{h \in \hyps}\, \widehat \ell(h, S, \cA)\,.
\end{align*}
We will establish in Appendix \ref{subsec:robustnessproofs} that we can obtain a regret $\regret(\widehat h)$ that is the sum of a ``\bias'' term, plus a term analogous to the regret from Theorem \ref{thm:allattributionsetssamplecomplexity}. 
The Bias term decreases in the approximation error between $\pi$ and $\widehat\pi$, and does not feature explicit $k$-dependence when the squared $L_2$ distance $\|\pi -\widehat \pi\|_2^2 \le \frac{\Sigma}8$ (recall $\Sigma = ||\pi||_2^2$). 
When $\|\pi -\widehat \pi\|_2^2 > \frac{\Sigma}8$, the explicit $k$-dependence in $\regret(\widehat h)$ below is unavoidable using our current analysis; see Remark \ref{rem:tightnessofrobustnessproof} in Appendix \ref{sa:proofs3}.
\begin{theorem}\label{thm:robustallattrsets}
Using the same notation as defined above, under the same conditions on the loss function $\ell$ and on $k$ as in Theorem \ref{thm:allattributionsetssamplecomplexity}, we have the following. If $np = \Omega\Big( \log\big( \frac{\sqrt{2k}}{\delta p}\big)\Big)$, then with $\mu$-probability at least $1-\delta$, the ERM estimator $\widehat h$ satisfies
\begin{align*}
\regret(\widehat h) = 
\begin{cases}
\widetilde O\left( \frac{L\, R_n(\hyps)}{\Sigma} + \frac{F_2}{\Sigma}\,\sqrt{\frac{\log \frac{1}{\delta}}{n}} + \frac{F_2}{\Sigma}\,\sqrt{\frac{p\, \complexitymeasurehere}{n}}\right) + \bias & {\mbox{if\,\, }} \Sigma \ge 8 \| \pi - \widehat\pi \|_2^2\,, \\ 
\widetilde O\left( L k\, R_n(\hyps) + F_2 k\,\sqrt{\frac{\log \frac{1}{\delta}}{n}} + F_2 k\,\sqrt{\frac{p\, \complexitymeasurehere}{n}}\right) + \bias & {\mbox{if\,\, }} \Sigma < 8\| \pi - \widehat\pi \|_2^2\,,
\end{cases}
\end{align*}
where $\pseudodim(\hyps)$ denotes the pseudo-dimension of $\hyps$, and where
\begin{align*}
\bias := \begin{cases} O\Big( pF_2 \Big( \frac{\|\pi - \widehat \pi\|_1}{\Sigma} + \frac{\|\pi - \widehat \pi\|_2}{\Sigma^{3/2}} \Big) \Big) & {\mbox{if\,\, }} \Sigma \ge 8 \| \pi - \widehat\pi \|_2^2\,, \\
O\Big( \frac{p k F_2 \|\pi - \widehat \pi\|_2}{\Sigma^{1/2}} \Big) & {\mbox{if\,\, }}  \Sigma < 8\| \pi - \widehat\pi \|_2^2\,. \end{cases}
\end{align*}
Again, $\widetilde O(\cdot)$ hides 
a logarithmic dependence on $np, L, \Sigma, F_2$ but \textit{excluding} $1/\delta$.
\end{theorem}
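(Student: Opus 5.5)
The proof splits the regret into a bias part and a stochastic part. Write $\widehat\ell(h)=\widehat\ell(h,S,\cA)$ for the estimator built from $\widehat\pi$ and $\widetilde\ell(h)$ for the oracle estimator built from $\pi$ but with the same weights $\widehat w_i=\widehat\pi[i]^2/\widehat\Sigma$, where $\widehat\Sigma=\|\widehat\pi\|_2^2$. Then
\[
\regret(\wh)\le 2\sup_{h\in\hyps}\bigl|\widehat\ell(h)-\E_\mu\widehat\ell(h)\bigr|+2\sup_{h\in\hyps}\bigl|\E_\mu\widehat\ell(h)-\popl(h)\bigr|,
\]
and the second supremum is the \bias term. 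For bias, Theorem \ref{prop:unbiasedestimatoroneji} (via its intermediate identity (\ref{eq:pf_sketch_step2})) gives the exact expectation of $f_2(h(A_j[i]))\ind\{j\le M-k\}$ in terms of the \emph{true} $\beta_1(j,i),\beta_0(j,i)$. Hence $\E_\mu\widehat\ell(h,j,i)\ind\{j\le M-k\}-\popl(h)$ is an explicit combination of $\E[Yf_2(h(X))]$ and $\E[f_2(h(X))]$, with coefficients $\beta_1/\widehat\beta_1-1$ and $(\beta_0-\widehat\beta_0)/\widehat\beta_1$ (the $\beta_0$ term enters through the $\widehat\beta_0/\widehat\beta_1$ factor), each divided by $B_{n,p,j+k}$.

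The bias computation uses Lemma \ref{lem:beta01approxsamplecomplexity}: under $np=\Omega(\log(\sqrt{2k}/(\delta p)))$ and $k\le np/8$, the tails $B$ are $1-O(\delta p)$-close to $1$. So $\beta_1(j,i)\approx\pi[i]/p$ and $\beta_0(j,i)\approx 1-\pi[i]$, and the error coefficients are, up to constants, $|\pi[i]-\widehat\pi[i]|/\widehat\pi[i]$. Since $|\E[Yf_2]|\le pF_2$ and $|\E f_2|\le F_2$, averaging with weights $\widehat w_i$ gives a bias of order
\[
pF_2\sum_i \frac{\widehat\pi[i]\,|\pi[i]-\widehat\pi[i]|}{\widehat\Sigma}.
\]
By Cauchy--Schwarz this is at most $pF_2\|\pi-\widehat\pi\|_2/\widehat\Sigma^{1/2}$. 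When $\Sigma\ge 8\|\pi-\widehat\pi\|_2^2$ we have $\|\widehat\pi\|_2\ge\|\pi\|_2-\|\pi-\widehat\pi\|_2$, so $\widehat\Sigma\ge\Sigma/2$. Splitting $\widehat\pi[i]=\pi[i]+(\widehat\pi[i]-\pi[i])$ then produces the two terms $\|\pi-\widehat\pi\|_1/\Sigma$ and $\|\pi-\widehat\pi\|_2/\Sigma^{3/2}$. In the other regime $\widehat\Sigma$ can be small, so I would use $\widehat\Sigma\ge 1/k$ together with the $\Sigma^{-1/2}$ bound to get the $k\|\pi-\widehat\pi\|_2/\Sigma^{1/2}$ form.

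For the deviation term I would rerun the proof of Theorem \ref{thm:allattributionsetssamplecomplexity} essentially verbatim. I will check that that proof only uses the weights through the sensitivity of each $\widehat\ell(h,j)$ to a single $X_t$ or a single attribution set. That sensitivity is $F_2\max_i(\widehat w_i/\widehat\beta_1(j,i))\cdot(\text{overlap count})$, giving the standard quantity $\sum_i \widehat w_i p/\widehat\pi[i]=p\|\widehat\pi\|_1/\widehat\Sigma=p/\widehat\Sigma$ in place of $p/\Sigma$. If $\Sigma\ge 8\|\pi-\widehat\pi\|_2^2$ then $1/\widehat\Sigma\le 2/\Sigma$, recovering the first-case rate. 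Otherwise only $1/\widehat\Sigma\le k$ holds, giving the $k$-scaled rate.

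The main obstacle is the bias step. There, the $\beta$-approximations must be controlled uniformly in $j$ so that the $1/B_{n,p,j+k}$ factors and the $(1-\pi[i])$ terms do not spoil the clean $|\pi-\widehat\pi|/\widehat\pi$ form. Small $\widehat\pi[i]$ must also be handled, as in Remark \ref{rem:smallpiindices}, by dropping indices with $\widehat\pi[i]<e^{-np}/(\delta p)$, at the cost of a factor $2$ in $\widehat\Sigma$.
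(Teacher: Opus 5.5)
Your overall architecture matches the paper's. You split the regret into $2\sup_h|\widehat\ell-\E_\mu\widehat\ell|$ plus twice the bias, rerun the proof of Theorem \ref{thm:allattributionsetssamplecomplexity} with $\widehat\pi,\widehat\Sigma$ in place of $\pi,\Sigma$, and then use $\widehat\Sigma=\Theta(\Sigma)$ when $\Sigma\ge 8\|\pi-\widehat\pi\|_2^2$ and only $\widehat\Sigma\ge 1/k$ otherwise, with the truncation of Remark \ref{rem:smallpiindices}.

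The genuine difference is the bias step. The paper (Lemma \ref{lem:robustnessbias}) writes $\popl(h)$ and $\E_\mu[\widehat\ell]$ as expectations of the $\pi$- and $\widehat\pi$-estimators. It replaces all four $\beta$'s by their first-order approximations and bounds $|f_2(h(A_j[i]))|\le F_2$ pathwise. That leaves $pF_2\sum_i|\widehat\pi[i]/\widehat\Sigma-\pi[i]/\Sigma|+2pF_2|1/\widehat\Sigma-1/\Sigma|$ plus additive $O(pk^2F_2e^{-\Omega(np)})$ errors. Perturbation bounds for $\widehat\Sigma$ versus $\Sigma$ then produce the $\Sigma^{-3/2}$ and $k\Sigma^{-1/2}$ factors.

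You instead feed the exact identity \eqref{eq:temporary}, with the true $\beta$'s, into the $\widehat\pi$-estimator. Since $\beta_1-\widehat\beta_1$ and $\beta_0-\widehat\beta_0$ are exactly linear in $\pi[i]-\widehat\pi[i]$, this gives
\[
\E_\mu\bigl[\widehat\ell(h,j,i)\ind\{j\le M-k\}\bigr]-\popl(h)=\frac{\pi[i]-\widehat\pi[i]}{\widehat\beta_1(j,i)}\Bigl(c_j\,\E[Yf_2(h(X))]-d_j\,\E[f_2(h(X))]\Bigr),
\]
with $d_j=(1-pB_{n-1,p,j+k-1})/(1-p)\in[1,2]$ and $c_j=B_{n,p,j+k}/p-B_{n-1,p,j+k-1}+d_j\le 2/p$. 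Weighting by $\widehat\pi[i]^2/\widehat\Sigma$, then using $\widehat\beta_1\ge\widehat\pi[i]/(2p)$ and Cauchy--Schwarz, yields $O(pF_2\|\pi-\widehat\pi\|_2/\widehat\Sigma^{1/2})$. There is no additive exponential term, and the bias vanishes at $\widehat\pi=\pi$.

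This is strictly sharper than both cases of the stated bias, so your final ``splitting'' step is unnecessary. In the first case $\widehat\Sigma^{-1/2}\lesssim\Sigma^{-1/2}\le\Sigma^{-3/2}$; in the second $\widehat\Sigma^{-1/2}\le\sqrt k\le k\Sigma^{-1/2}$. The paper's route is more mechanical but discards the cancellation between the two estimators.

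Three small corrections:
\begin{itemize}
\item The $\beta_0$-coefficient $(\beta_0-\widehat\beta_0)/\widehat\beta_1$ is $p$ times $|\pi[i]-\widehat\pi[i]|/\widehat\pi[i]$, not that quantity ``up to constants.'' You need this factor $p$ against $|\E f_2|\le F_2$ to land at $pF_2$. Also, no $1/B_{n,p,j+k}$ survives, since $\E_\mu\ind\{j\le M-k\}=B_{n,p,j+k}$ cancels it.
\item The condition $\Sigma\ge 8\|\pi-\widehat\pi\|_2^2$ gives $\widehat\Sigma\ge(1-1/(2\sqrt2))^2\Sigma\approx 0.42\,\Sigma$, not $\Sigma/2$. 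This only affects constants.
\item The single-$X_t$ sensitivity is not ``max weight times overlap count''; that loses up to a factor $k$ when $\widehat\pi$ is concentrated. What you need is $\gamma_t\le 2p\sum_b\sum_{i'}\pi[i']\,\widehat\pi[i'\pm b]\le 4p$, where placement follows $\pi$ but the weights follow $\widehat\pi$. This is the one new estimate the paper records when rerunning the proof, and it gives your $O(p/\widehat\Sigma)$.
\end{itemize}
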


\begin{remark}
As an application of Theorem \ref{thm:robustallattrsets}, it is instructive to consider the case where the prior $\pi$ is unknown, but can be estimated by using a smaller set of labeled data. Given such a labeled dataset, an alternative method would be to simply perform ERM directly on that data. We now show that, in regimes of practical interest, our results in Theorem \ref{thm:robustallattrsets} yields a superior rate over ERM just operating on the smaller set of data available in the clear. Most notably, Theorem \ref{thm:robustallattrsets} avoids any dependence on the complexity of $\hyps$ evaluated on this reduced sample size, unlike ERM. 

For concreteness, suppose $\hyps$ is a VC-class of VC-dimension $d$ and that, on top of attribution sets, the labels of $\cS$, a set of examples $(x,y)$ covered by $s$ attribution sets of size $k$ are also known. Hence, overall, $|\cS| \leq sk$. 
Assume, as plausible in practice, that $sk \ll n$. 
Then ERM on $\cS$ gives a rate at best of the form $\sqrt{d/(sk)}$. Meanwhile, if we estimate $\pi$ by empirical probabilities over $\cS$, {\em Bias} from Theorem \ref{thm:robustallattrsets} is of the form $pk^2/\sqrt{s}$ (this is when $\pi$ is uniform; for a non-uniform $\pi$, this can only improve).
Theorem \ref{thm:robustallattrsets} thus yields a rate of the form
$k\sqrt{d/n} + pk^2/\sqrt{s}$, improving on ERM operating solely on $\cS$ when $d$ is large relative to $pk^2$ (e.g., $d \geq k(pk^2)^2$), and $n$ is large relative to $s$ (e.g., $n \geq k^3 s$), a setting which is arguably closer to practice.
\end{remark}

\vspace{-0.25in}
\section{Experiments}\label{sa:mainbody_experiments}
\vspace{-0.05in}
We conduct preliminary experiments to validate the estimator constructed from Theorem \ref{thm:allattributionsetssamplecomplexity} vs. simple baselines that correspond to industry heuristics (e.g., \cite{ktena2019addressing}). Our experiments are performed on MNIST \citep{lecun2010mnist}, CIFAR-10 \citep{Krizhevsky09Cifar}, and Higgs \citep{Baldi2014Higgs}, 
each modified in a way compatible with our model. More details are in Section \ref{sa:experiments}. \\[-4mm]

\noindent\textbf{Modifying the datasets.} 
We binarize each dataset: 1-vs-rest for MNIST, Animal-vs-Machine for CIFAR-10, while Higgs is natively binary. We then shuffle the data. For each positive label, we generate an attribution set by drawing an interval of $k$ adjacent indices that contain the positive label, where the position of the window is drawn according to the prior $\pi$. The algorithms observe only the unlabeled data and the resulting attribution sets. We consider a uniform prior $\pi=\big(\frac1k, \ldots, \frac1k\big)$, and an exponential prior $\pi \propto \big(2^{-k}, \ldots, 2^{-2}, 2^{-1}\big)$ motivated by last-touch attribution heuristics. \\[-4mm]

\noindent\textbf{Algorithms.} We implement three algorithms. For each algorithm the base loss $\ell(\widehat y,y)$ is log loss with prediction $\widehat y$ clipped to the interval $[0.01,0.99]$ for boundedness and numerical stability.
\begin{enumerate}
\vspace{-0.07in}
\item Our algorithm ({\sc unbiased}): minimizes the loss estimator in (\ref{eq:allsetsestimator}), where $p$ in Theorem \ref{prop:unbiasedestimatoroneji} is estimated by the fraction of positive labels in the training set, and $\E[f_1(h(X)]$, $\E[f_2(h(X)]$ are estimated on each mini-batch (see training details below) by empirical averages. While this estimation introduces a slight bias, our experiments confirm that this effect is negligible. 
\vspace{-0.07in}
\item {\sc random} baseline: Both this algorithm and the following baseline operate directly on the base loss $\ell(\widehat y,y)$ (with the same clipping for $\widehat y$) on fully supervised but {\em hallucinated} labels. 
{\sc random} assigns label 1 to a single position $i$ per attribution set, drawing this position according to $\pi$, and label $0$ to all remaining points in the attribution set. The data points in the training set that do not fall into any attribution set are assigned label $0$. Overlapping attribution sets may produce duplicate instances with potentially conflicting labels.\footnote{A natural alternative would be to generate fractional labels, but we did not explore this solution here.} 
\vspace{-0.07in}
\item {\sc max prior} baseline: analogous to {\sc random}, but the positive label is placed deterministically at the position $i$ where $\pi[i]$ is maximized.
\end{enumerate}

\vspace{-0.1in}
\noindent\textbf{Training and evaluation.} For each dataset, we train standard neural architectures known to perform reasonably well: a 3-hidden-layer MLP for MNIST, a 2-layer CNN for CIFAR-10, and a fully connected network for Higgs, training with the Adam optimizer for each algorithm \citep{kingma2015adam}. 
For {\sc random} and {\sc max prior}, we take a minibatch of 128 training examples with the hallucinated labels. For {\sc unbiased}, we estimate the loss from (\ref{eq:allsetsestimator}) by subsampling a minibatch of 128 attribution sets and another 128 training examples (without labels) directly from the dataset in order to estimate the expectation components ${\E}[f_1(h(X))]$, ${\E}[f_2(h(X))]$.  
For all three algorithms, we use bag sizes $k = 2^i$ for $0 \le i \le 8$ (or $0 \le i \le 7$ for Higgs), learning rates range in 10 log-spaced values from $10^{-6}$ to $10^{-2}$, and we use 200 training epochs for MNIST and 100 for CIFAR-10 and Higgs. 
Each experiment (a given dataset, algorithm, attribution set size, and learning rate) is repeated 10 times with randomized data shuffling and model initialization. Performance is measured on the test set with labels in the clear, averaged across repetitions. For each dataset, algorithm, and attribution set size, we report the best average over learning rates. \\[-4mm]

\noindent\textbf{Results.} We report test set accuracy (Figure \ref{f:3}), and test set log loss (Figure \ref{f:4} in Appendix \ref{sa:experiments}); for MNIST, we also show test set F1-measure in Figure \ref{f:4} due to label imbalance.
As expected, the performance of all algorithms degrades as $k$ increases. For large enough $k$, performance becomes trivial, for instance with {\sc random} and {\sc max prior} on CIFAR-10 when $k \geq 4$. On CIFAR-10, the trivial accuracy performance of 60\% is obtained by always predicting ``1''; for MNIST, always predicting ``0'' achieves 88.65\% accuracy. In such cases, we report the trivial performance level (with 0 variance) instead of the actual performance in Figure \ref{f:3}. Several observations can be made:
\begin{itemize}
\vspace{-0.07in}
\item {\sc unbiased} vs. {\sc random} and {\sc max prior}. We can see the clear advantage offered by our theory as opposed to the baselines; the performance gap is striking in all cases.
\vspace{-0.07in}
\item Uniform vs. Exponential prior. All algorithms perform better with exponential prior than with uniform, with {\sc max prior} being comparatively better than {\sc random}. Note for the exponential prior, $\Sigma \rightarrow \frac13$ as $k \rightarrow \infty$; our Theorem \ref{thm:allattributionsetssamplecomplexity} predicts {\sc unbiased} will degrade gracefully as $k$ increases, consistent with our results in Figure \ref{f:3}.
\vspace{-0.07in}
\item Small $k$. At $k=1$, the two baselines both reduce to full supervision with true training labels (no label hallucination); this is not the case for {\sc unbiased}. Thus the baselines' performance at $k=1$ ($2^0$ in Figures \ref{f:3} and \ref{f:4}) reflects fully supervised training with the base loss.
\vspace{-0.07in}
\item Overlapping ($pk > 1$) vs. non-overlapping ($pk < 1$) regime.
The difference here can only be appreciated on MNIST, where $p \approx 0.1$. {\sc unbiased} remains largely unaffected by the attribution set overlap, but both baselines are affected significantly, especially {\sc random}.
\end{itemize}

\begin{figure*}[t!]
\vspace{-0.20in}
    \centering
\hspace{0.1in}\includegraphics[width=0.246\textwidth]{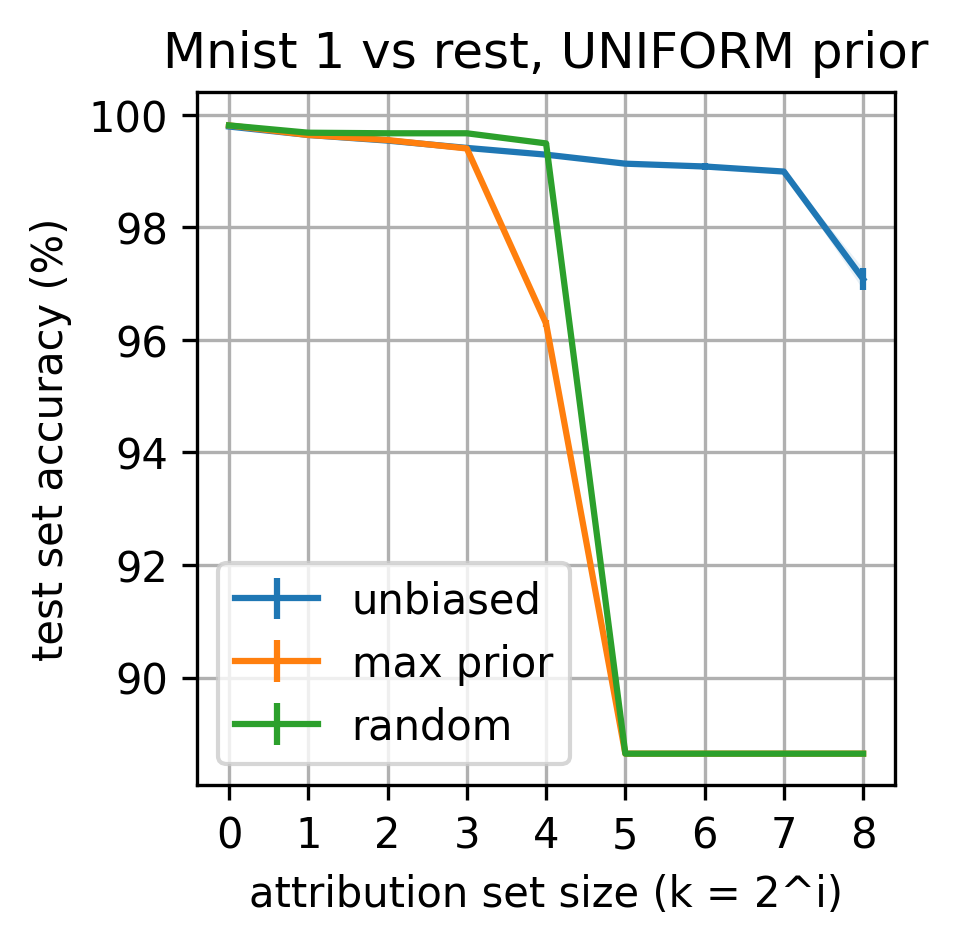}
\hspace{0.1in}\includegraphics[width=0.30\textwidth]{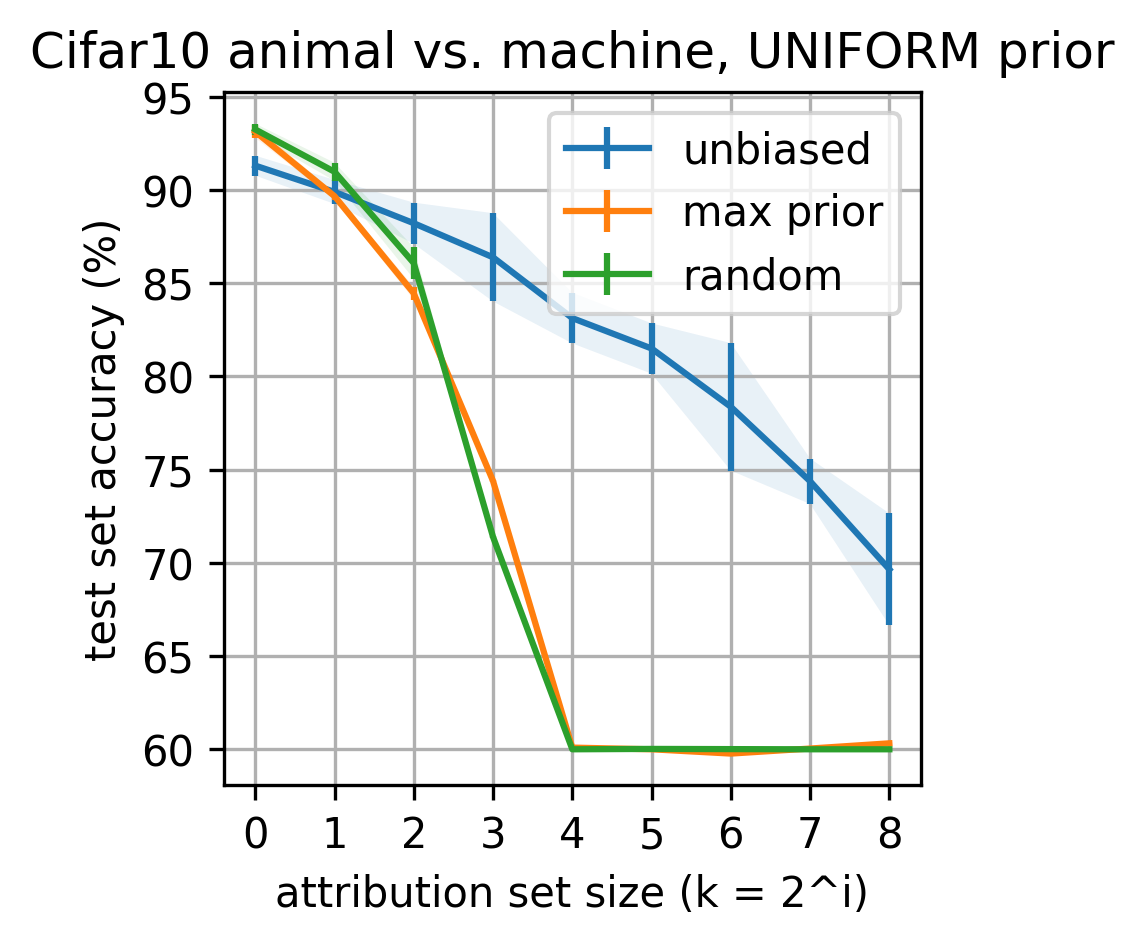}
\hspace{-0.15in}
\includegraphics[width=0.235\textwidth]{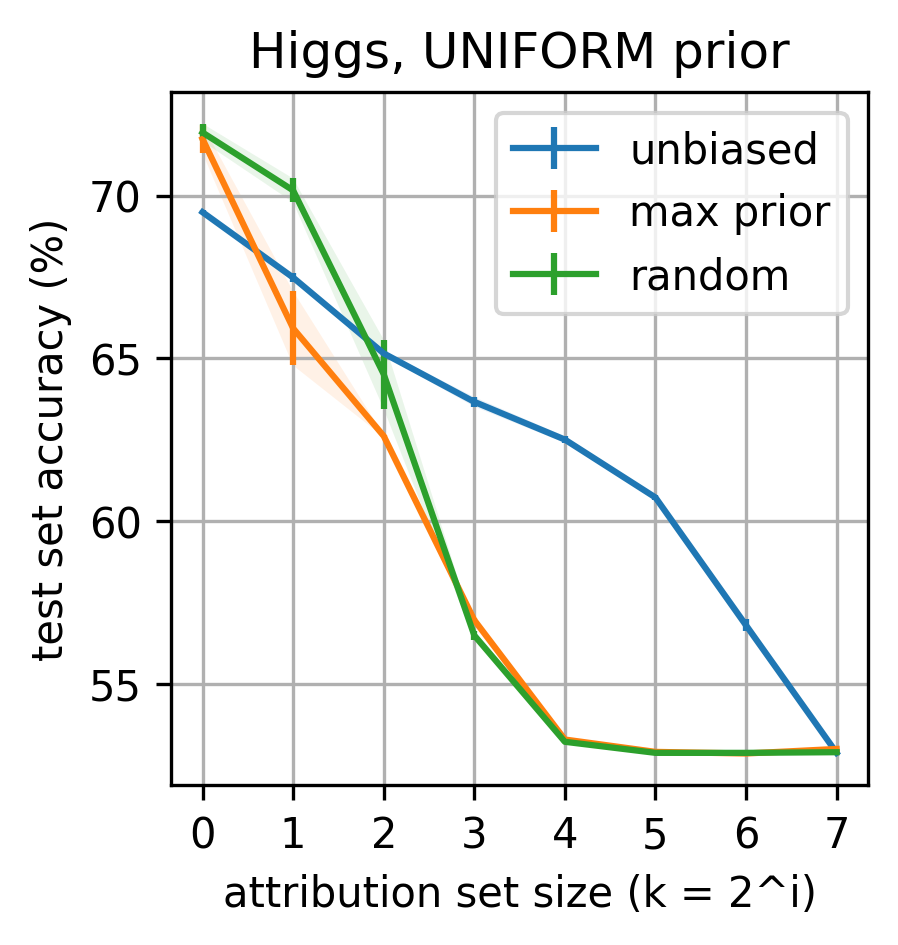}\\[-2mm]
\hspace{0.1in}
\includegraphics[width=0.24\textwidth]{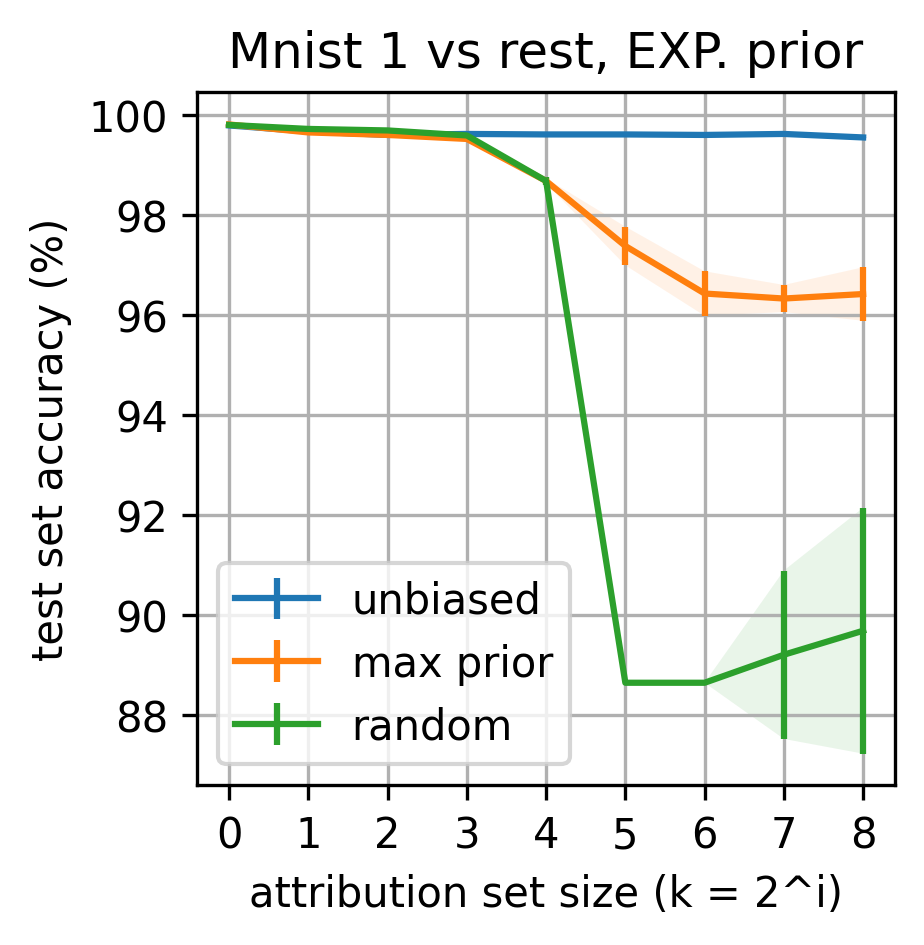}
\hspace{0.15in}
\includegraphics[width=0.274\textwidth]{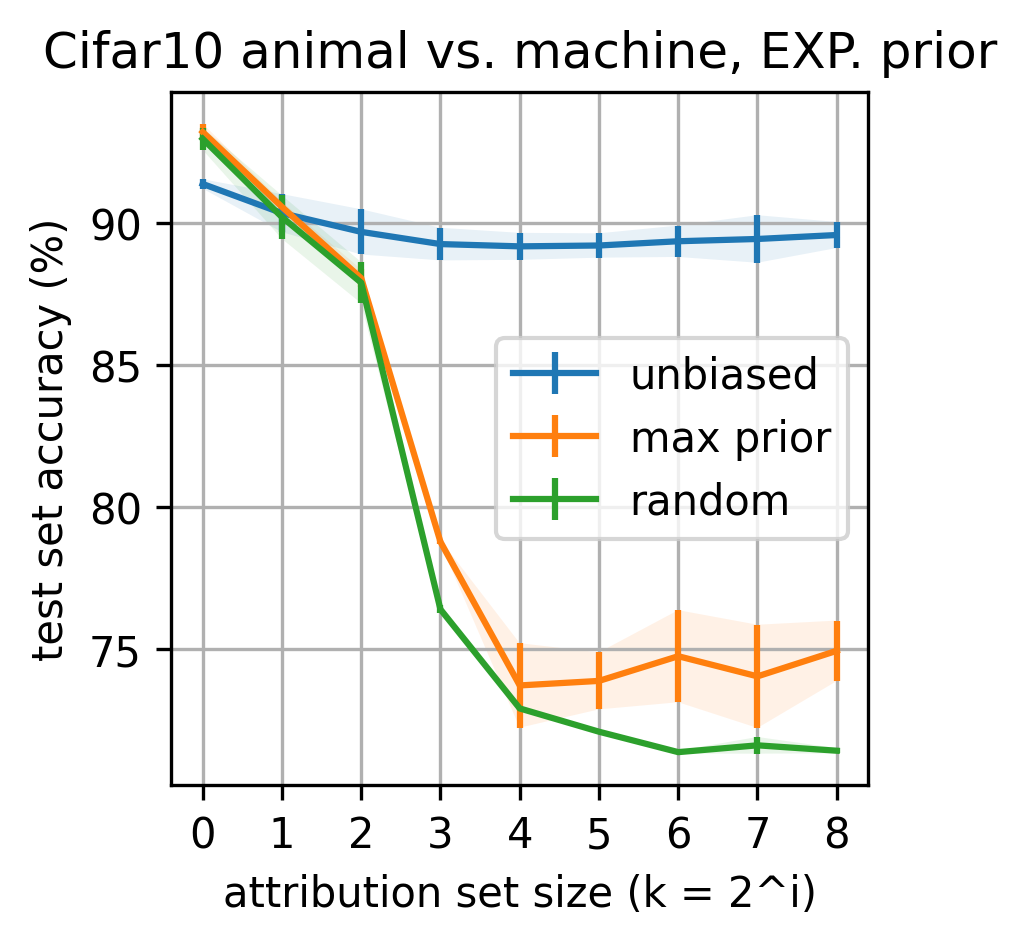}
\hspace{-0.05in}
\includegraphics[width=0.25\textwidth]{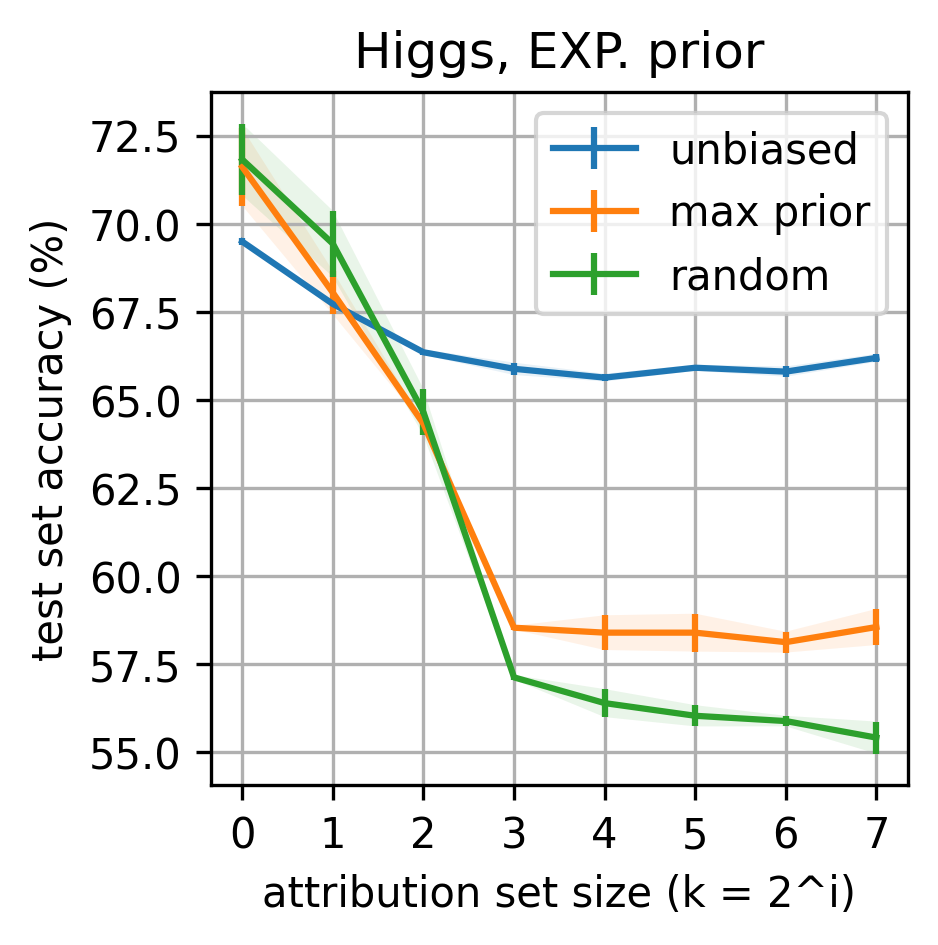}
\vspace{-0.2in}
    \caption{Experiment on MNIST (1-vs-rest), CIFAR-10 (animal vs. machine) and Higgs datasets. We plot test set accuracy vs. attribution set size $k=2^0,2^1,\ldots, 2^8$ (or $k=2^0,2^1,\ldots, 2^7$ for Higgs), averaged over 10 repetitions. On the top row is the uniform prior, on the bottom is the exponential prior. For MNIST, the trivial accuracy performance is $88.65\%$, for CIFAR-10 it is $60\%$, for Higgs it is $52.87\%$.
    Standard deviations are also depicted.}
    \label{f:3}
    \vspace{-0.2in}
\end{figure*}

\vspace{-0.2in}
\section{Conclusions and Future Work}\label{s:conclusions}
\vspace{-0.05in}
We introduced a formal framework for statistical learning from attribution sets, addressing the growing challenge of tracking-prevention conversion prediction, where publishers observe clicks but only receive coarse signals about conversions. 
We derived an unbiased risk estimator, established generalization bounds that scale with the ``effective'' set size determined by prior sparsity, and proved robustness to prior estimation errors. Given the availability of unbiased (or approximately unbiased) loss estimators, these analyses can be readily adapted to stochastic gradient descent-like algorithms, since unbiasedness of loss estimates translate to unbiasedness of loss gradient estimates.
Our preliminary experiments suggest that our method significantly outperforms simple industry heuristics on readily available datasets, particularly when attribution sets are large and/or overlapping.
Future work includes the following.
\begin{itemize}
\vspace{-0.05in}
\item We have assumed an oblivious adversary and a known (or partially known) prior $\pi$ over positions within each attribution set; this structure is critical for identifiability. The estimators in Theorem \ref{thm:allattributionsetssamplecomplexity} and Theorem \ref{thm:robustallattrsets} hinge on expressing the inaccessible population moment $\mathbb{E}[Y f_2(h(X))]$ as an expectation over observable 
quantities, with coefficients that depend explicitly on $\pi$. 
One may wonder if learning against a non-oblivious adversary is at all possible. In a non-oblivious regime, the above decoupling fails. Since an attribution set merely indicates the presence of at least one positive label, multiple labelings--and thus multiple population risks--remain consistent with the same observations. This lack of identifiability suggests that consistent learning is 
impossible without additional structure. We provide this as an intuition rather than a formal impossibility result. Characterizing the minimal assumptions required for learning under adaptive adversaries is a compelling direction for future work.
\vspace{-0.05in}
\item Our agnostic bounds extend to the realizable setting. However, capturing the fast convergence rates expected under benign losses (like square loss) requires a localized Rademacher complexity analysis, which we leave to future work. Proving matching regret lower bounds to establish the tightness of our results also remains an open problem.
\vspace{-0.05in}
\item We plan to extend out framework to more complex attribution logics, like multi-touch attribution (where multiple clicks contribute to a single conversion), and to develop methods to jointly learn the prior distribution and the conversion model from the data stream itself.

\end{itemize}

\vspace{-0.15in}

\section{Acknowledgments}
We thank the anonymous reviewers for their useful comments that helped improve the presentation of this paper. Also, we thank Haim Kaplan for early discussions related to this paper.
This work was done while AC was a student researcher at Google Research, NY.

\appendix


\newpage
\section{Proofs for Section \ref{s:unbiased}}\label{sa:proofs1}
This section contains the proofs that apply to the individual attribution set.

\subsection{Proof of Theorem \ref{prop:unbiasedestimatoroneji}}

First we need the following lemma which lets us control the probabilities that labels prior to and after $i_j(S)$ are 1.

\begin{lemma}\label{lem:problabel1beforeij}
For any $t \ge 0$ with $t+1 \le j$, we have 
\[ 
\PP_{S \sim \cD^{ n}}\Bigl(Y_{i_j(S) \pm t} = 1 \,, j \le M-k \Bigl) = p\, B_{n-1, p, j+k-1}\,,
\]
where $B_{n, p, j}$ is defined in Theorem \ref{prop:unbiasedestimatoroneji}.
\end{lemma}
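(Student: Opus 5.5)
The plan is to prove both signs by a single ``delete-one-position'' bijection between length-$n$ label sequences satisfying the event and length-$(n-1)$ label sequences satisfying a shifted event. I take $t\ge 1$ throughout. For $t=0$ we have $Y_{i_j(S)}=1$ identically, so the probability is just $\PP(M\ge j+k)=B_{n,p,j+k}$. That case is handled directly; it is the $r=i$ case in the proof of Theorem~\ref{prop:unbiasedestimatoroneji}. I also take $t\le k$, which is the only range used there since offsets $i-r$ satisfy $|i-r|\le k-1$. The labels $Y_1,\dots,Y_n$ are i.i.d.\ Bernoulli$(p)$, so a sequence $y\in\{0,1\}^n$ has probability $p^{|y|}(1-p)^{n-|y|}$. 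If $y$ has a $1$ at some position $q$ and $z\in\{0,1\}^{n-1}$ is obtained by deleting that position, then $\PP(Y=y)=p\cdot\PP(Z=z)$, where $Z$ is an i.i.d.\ Bernoulli$(p)$ sequence of length $n-1$. The lemma therefore follows once I show that deletion is a bijection from the event set onto $\{z:\ M(z)\ge j+k-1\}$. The latter has probability exactly $B_{n-1,p,j+k-1}$.

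\emph{Forward case ($+t$).} Let $E_+=\{y:\ y_{i_j(y)+t}=1,\ M(y)\ge j+k\}$, and map $y\in E_+$ to $z$ by deleting position $i_j(y)+t$. Since $t\ge1$, the deleted $1$ lies strictly after $i_j(y)$. Hence $z$ has its $j$-th one at the same position $m=i_j(y)$, and $M(z)=M(y)-1\ge j+k-1$. For the inverse, take $z$ with $M(z)\ge j+k-1$, set $m=i_j(z)$, and insert a $1$ at position $m+t$. This is legal only if $m+t\le n$. At least $k-1$ ones of $z$ lie after $m$, so $n-1-m\ge k-1$, i.e.\ $m\le n-k$, and thus $m+t\le n$ because $t\le k$. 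After insertion the $j$-th one is still at $m$ (the insertion is after it), position $m+t$ carries a $1$, and $M=M(z)+1\ge j+k$. The two maps are mutually inverse.

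\emph{Backward case ($-t$, with $t+1\le j$).} Let $E_-=\{y:\ y_{i_j(y)-t}=1,\ M(y)\ge j+k\}$. The position $i_j(y)-t\ge j-t\ge 1$ is valid. Deleting it removes a $1$ lying before the $j$-th one, so in $z$ that one becomes the $(j-1)$-th, at position $m'=i_j(y)-1$. Also $M(z)=M(y)-1\ge j+k-1$. For the inverse, take $z$ with $M(z)\ge j+k-1$, let $m'=i_{j-1}(z)$, and insert a $1$ so that it lands at position $m'+1-t$ of the new sequence. We have $m'\ge j-1\ge t$, so $m'+1-t\ge1$; and $t\ge1$ gives $m'+1-t\le m'$, so the inserted one precedes the $(j-1)$-th one of $z$. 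That one is therefore shifted to position $m'+1$ and becomes the $j$-th one of $y$, with $y_{(m'+1)-t}=1$ and $M(y)\ge j+k$, as required.

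Summing $\PP(Y=y)=p\,\PP(Z=z)$ over each bijection gives $\PP(E_\pm)=p\,\PP(M(Z)\ge j+k-1)=p\,B_{n-1,p,j+k-1}$. The main obstacle is the bookkeeping in the backward case, namely checking that the index of the $j$-th one shifts consistently in both directions. A secondary point is the boundary condition $m+t\le n$ in the forward case, which needs $t\le k$. I would state that hypothesis explicitly, or note that it holds in every application.
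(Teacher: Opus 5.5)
Your proof is correct, and it reaches the identity by a different route than the paper.

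The paper fixes $M=m$ and the position $l=i_j(S)$ of the $j$-th one. It counts the admissible strings as $\binom{l-2}{j-2}\binom{n-l}{m-j}$ in the $-t$ case and as $\binom{l-1}{j-1}\binom{n-l-1}{m-j-1}$ in the $+t$ case. It then sums over $l$ with a Vandermonde-type identity to get $\binom{n-1}{m-1}$ for every $m\ge j+k$, and finally re-indexes $\sum_{m\ge j+k}\binom{n-1}{m-1}p^m(1-p)^{n-m}$ as $p\,B_{n-1,p,j+k-1}$.

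Your deletion map sends each slice $\{M=m\}$ of the event bijectively onto the length-$(n-1)$ strings with exactly $m-1$ ones. In effect it is a bijective proof of that per-$m$ count. The relation $\PP(Y=y)=p\,\PP(Z=z)$ then gives the factor $p$ and the shifted tail at once, with no binomial sums. This makes it transparent why the answer depends neither on $t$ nor on the sign, and it shows exactly where the hypotheses enter. The paper's count, in exchange, needs no inverse map and can be checked line by line.

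You are also right that your restrictions are necessary rather than cosmetic. At $t=0$ the probability is $B_{n,p,j+k}$, which is strictly larger than $p\,B_{n-1,p,j+k-1}$ whenever $j+k<n$ and $p\in(0,1)$, so the statement as written fails there. The $+t$ case genuinely needs $t\le k$. The paper's proof uses both restrictions silently: its count of $j-2$ ones in $[l-1]\setminus\{l-t\}$ presumes $t\ge 1$, and its step $m\ge j+k\ge j+t$ presumes $t\le k$. Both hold in the only application, where $t=|i-r|\in[1,k-1]$.
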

\begin{proof}
Note the event $\{ Y_{i_j(S) \pm t} = 1 \} \cap \{ j \le M-k \}$ can be rewritten as a disjoint union over all $m$ such that $j+k \le m \le n$, of the intersections $\{ Y_{i_j(S) \pm t} = 1 \} \cap \{ M=m \}$. 
Now we claim that for fixed $m$, the number of binary strings satisfying $\{ Y_{i_j(S) \pm t} = 1 \} \cap \{ M=m \}$ is $\binom{n-1}{m-1}$. 
\begin{itemize}
\item For $Y_{i_j(S) - t}$: Note $\{ Y_{i_j(S) \pm t} = 1 \} \cap \{ M=m \}$ can be written as the disjoint union over all $l$ such that $\max\{ t+1,j \} \le l \le n$ of the intersection $\{ i_j(S)=l \} \cap \{ Y_{l - t}=1 \} \cap \{ M=m \}$. The reason $l \ge j$ is because $i_j(S)$ is the $j$-th occurrence of a sample in the stream $S$ with label 1, thus it must occur at the $j$-th position or after. Moreover we need $l \ge t+1$ for the event $\{ Y_{i_j(S)-t}=1 \}$ to make sense. This event occurs if and only if $Y_l=Y_{l-t}=1$, if there are exactly $j-2$ ones in $[l-1] - \{l - t\}$, and if there are exactly $m-j$ ones in $\{l+1, \ldots, n\}$. 
These ones can be chosen in $\binom{l-2}{j-2} \binom{n-l}{m-j}$ ways, where we adopt the convention that $\binom{\cdot}{-1}=0$ (which matches the combinatorial interpretation). Since $j \ge t+1$, it follows that the number of desired strings in this case is 
\begin{align*}
\sum_{l = j}^n \binom{l-2}{j-2}\binom{n-l}{m-j} = \sum_{k=0}^{n-j} \binom{j-2+k}{j-2} \binom{n-j-k}{m-j}=\binom{n-1}{m-1}\,,
\end{align*}
where we apply Vandermonde's identity in the last step.

\item For $Y_{i_j(S)+t}$: Similar to the proof of Lemma \ref{lem:problabel1beforeij}, note the event $\{ Y_{i_j(S) + t} = 1 \} \cap \{ M=m \}$ can be rewritten as a disjoint union over all $l$ such that $j \le l \le \min\{n-t, n-m+k\}$ of the intersection $\{ i_j(S)=l \} \cap \{ Y_{l+t}=1 \} \cap \{ M=m \}$. Here $l \le \min\{n-t, n-m+j\}$ because we need $i_j(S)+t \in [n]$ and as there are $m-j$ 1s in $\{l+1, \ldots, n\}$. 
These ones can be chosen in $\binom{l-1}{j-1} \binom{n-l-1}{m-j-1}$ ways. Note $n-m+j \le n-t$ as $m \ge j+k \ge j+t$. Thus, the total number of desired strings in this case is
\begin{align*}
\sum_{l=j}^{n-m+j}\binom{l-1}{j-1} \binom{n-l-1}{m-j-1} 
&= \sum_{k=0}^{n-m} \binom{k+j-1}{j-1} \binom{n-j-1-k}{m-j-1} = \binom{n-1}{m-1}\,,
\end{align*}
where we again apply Vandermonde's identity in the last step.
\end{itemize}
Each such binary string occurs with probability $p^m (1-p)^{n-m}$. Hence
\begin{align*}
\PP_{S \sim \cD^{ n}}\Bigl(Y_{i_j(S) \pm t} = 1 \cap j \le M-k \Bigl) &= \sum_{m=j+k}^n \binom{n-1}{m-1} p^m (1-p)^{n-m} \\
&= p \sum_{m'=j+k-1}^{n-1} \binom{n-1}{m'} p^{m'} (1-p)^{n-1-m'} \\
&= p B_{n-1, p, j+k-1}\,.
\end{align*}
Also observe that this probability is independent of $t$ for $j \ge t+1$, as can be seen by symmetry among the indices $1, 2, \ldots, i_j(S) - 1$. 
\end{proof}

We next control the law of $X_{i_j(S)\pm t}$ conditioned on $Y_{i_j(S)\pm t}$.
\begin{lemma}\label{lem:conditionallawoneidx}
Consider any $j$ and $t \ge 0$ such that $j \ge t+1$. Then:
\begin{itemize}
    \item The law of $X_{i_j(S)\pm t}$ conditioned on the event $\{Y_{i_j(S)\pm t} = 1\,,\, j \le M-k\}$ is that of a random variable distributed according to $X|Y=1$.
    \item The law of $X_{i_j(S) \pm t}$ conditioned on the event $\{Y_{i_j(S) \pm t} = 0\,,\, j \le M-k\}$ is that of a random variable distributed according to $X|Y=0$.
\end{itemize}
\end{lemma}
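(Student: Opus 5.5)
The plan is to condition on the entire label vector $Y = (Y_1,\ldots,Y_n)$ and use that, under $\cD^n$, the pairs $(X_l,Y_l)$ are i.i.d. Consequently, conditionally on $Y = y$ for a fixed binary string $y\in\{0,1\}^n$, the features $X_1,\ldots,X_n$ are independent, and $X_l \sim \cD_{X|Y=y_l}$ for each $l$.

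The key steps are as follows. The random index $\tau := i_j(S)\pm t$ and the events $\{j \le M-k\}$ and $\{Y_\tau = 1\}$ are all measurable functions of the label vector $Y$ alone. The hypothesis $j \ge t+1$ only ensures that $\tau \ge 1$ in the ``$-$'' case. In the ``$+$'' case, $\tau\le n$ holds on $\{j\le M-k\}$ whenever $t\le k$, which is the relevant regime, because there are at least $k$ further ones after $i_j(S)$.

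Let $E_1 := \{Y_\tau = 1,\ j\le M-k\}$ and let $\mathcal{Y}_1$ be the set of label strings realizing $E_1$. For any measurable set $B\subseteq\cX$, I will write
\[
\PP(X_\tau\in B,\, E_1) = \sum_{y\in\mathcal{Y}_1} \PP(Y=y)\, \PP\bigl(X_{\tau(y)}\in B \,\big|\, Y=y\bigr).
\]
For each such $y$, the index $\tau(y)$ is deterministic and satisfies $y_{\tau(y)}=1$. By the conditional independence above, $\PP(X_{\tau(y)}\in B\mid Y=y) = \PP(X_{\tau(y)}\in B\mid Y_{\tau(y)}=1) = \PP_1(X\in B)$. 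This value does not depend on $y$, so it factors out of the sum, giving
\[
\PP(X_\tau\in B,\, E_1)=\PP_1(X\in B)\,\PP(E_1).
\]
Dividing by $\PP(E_1)$ proves the first bullet. By Lemma \ref{lem:problabel1beforeij}, $\PP(E_1) = p\,B_{n-1,p,j+k-1}>0$, since $p\in(0,1)$ and $j+k\le n$; if $\PP(E_1)=0$ the statement would be vacuous anyway. The second bullet follows by the identical argument with $\mathcal{Y}_0$, the strings realizing $\{Y_\tau=0,\ j\le M-k\}$, and $\PP_0$.

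I expect the main obstacle to be the subtle point that $\tau$ is a random index chosen using the labels. Naively, selecting the position of the $j$-th one could bias the features, since it is a data-dependent selection. The argument handles this by observing that the selection depends only on $Y$, never on $X$, and that conditionally on $Y$ each $X_l$ depends only on its own label $y_l$. Making this rigorous requires only the decomposition over the finitely many label strings; the rest is routine. Using the disjoint union over strings, rather than over $(l,m)$ pairs as in Lemma \ref{lem:problabel1beforeij}, avoids any combinatorics.
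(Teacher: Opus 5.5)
Your proof is correct and follows the paper's argument. The paper also decomposes the joint probability over label strings $\vec{s}$ (grouped by $l=i_j(\vec{s})$) and uses that $\{j\le M-k\}$ is determined by the labels alone. It then applies the i.i.d.\ structure and Bayes' rule to show that $X_{l-t}$, given the label vector $\vec{s}$ with $s_{l-t}=1$, has law $\PP_1$, so this factor comes out of the sum. Your version, with measurable sets $B$ and no $l$-indexing, is a cleaner write-up of the same computation.

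One small caveat: the value $p\,B_{n-1,p,j+k-1}$ from Lemma~\ref{lem:problabel1beforeij} holds only for $t\ge 1$. For $t=0$ one has $\PP(E_1)=\PP(j\le M-k)=B_{n,p,j+k}$ instead. Since you only use $\PP(E_1)>0$, this does not affect your argument.
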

\begin{proof}
The high-level idea is that the event $\{j \le M-k\}$ is only determined by the labels, and given the value of the labels (thus $Y_{i_j(S) \pm t}=y$), the law of $X_{i_j(S) \pm t}$ is determined as $X|Y=y$ since the data points $(X_i, Y_i)$ are i.i.d.. We will prove the $\{Y_{i_j(S)-t} = 1\}$ case, the proofs for the other cases being analogous.

To make this formal, we let $\vecyS$ denote the labels $Y_1,\ldots, Y_n$ (we write $\vecyS$ to make it clear that this random variable depends on $S$). We also let $\vecs$ denote a fixed binary string in $\{0, 1\}^n$, and let $i_j(\vecs)$ denote the index of the $j$-th 1 in the binary string $\vecs$. For a binary string $\vecs \in \{0,1\}^n$, we let $|\vecs|$ denote the number of its 1s. Therefore for $\vecs$ such that its ($l-t$)-th entry is 1, there are $|\vecs|-1$ other 1s in the string $\vecs$, and $n - |\vecs|$ 0s. We have 
\begin{align*}
\PP&_{S \sim \cD^{ n}} \Bigl( X_{i_j(S)-t}=x\, |\, Y_{i_j(S)-t} = 1\,,\, j \le M-k \Bigl) \\
&= 
\frac{\PP_{S \sim \cD^{ n}} \Bigl( X_{i_j(S)-t}=x\,,\, Y_{i_j(S)-t} = 1\,,\, j \le M-k \Bigl)}{\PP_{S \sim \cD^n} \Bigl( Y_{i_j(S)-t}=1\,,\, j \le M-k \Bigl)}\\
&= \frac{ 1 }{\PP_{S \sim \cD^n} \Bigl(Y_{i_j(S)-t}=1\,,\, j \le M-k \Bigl)} \Biggl( \sum_{\vecs \in \{0,1\}^n} \PP_{S \sim \cD^{ n}} \Bigl( X_{i_j(S)-t}=x\, \Big|\,  \vecyS = \vecs\,,\, Y_{i_j(S)-t} = 1\,,\, j \le M-k \Bigl) \\
&\qquad \qquad \qquad \qquad \qquad \qquad \qquad \qquad \qquad \qquad \qquad  \cdot \PP_{S \sim \cD^n}\Bigl( \vecyS = \vecs\,,\,  Y_{i_j(S)-t} = 1\,,\,j \le M-k \Bigr) \Biggr)\\
&= 
\frac{1}{\PP_{S \sim \cD^n} \Bigl( Y_{i_j(S)-t}=1\,,\, j \le M-k \Bigl)} \Biggl( \sum_{l=j}^n \sum_{~\vecs\,:\,  i_j(\vecs)=l} \PP_{S \sim \cD^{ n}}\Bigl( X_{l-t}=x\, \Big|\, \vecyS = \vecs\,,\,  Y_{l-t} = 1\,,\, j \le M-k \Bigl)\,  \\
&\qquad \qquad \qquad \qquad \qquad \qquad \qquad \qquad \qquad \qquad \qquad \cdot \PP_{S \sim \cD^n}\Bigl( \vecyS = \vecs\,,\, Y_{l-t} = 1\,,\, j \le M-k \Bigr) \Biggr)\,.
\end{align*}
The above uses the fact that we must have $i_j(\vecs) \ge j$, so the only possible indices $l$ for $i_j(\vecs)$ are $j, j+1, \ldots, n-1, n$. 
Notice if $|\vecs| \ge j+k$ then $\{\vecyS = \vecs\,,\, Y_{l-t} = 1\,,\, j \le M-k \} = \{\vecyS=\vecs\,,\, Y_{l-t}=1\}$. Else $\PP_{S \sim \cD^n}\Bigl( \vecyS = \vecs\,,\, Y_{l-t} = 1\,,\, j \le M-k \Bigr)=0$. 
Thus 
\begin{align*}
&\sum_{~\vecs\,:\,  i_j(\vecs)=l} \PP_{S \sim \cD^{ n}}\Bigl( X_{l-t}=x\, \Big|\, \vecyS = \vecs\,,\,  Y_{l-t} = 1\,,\, j \le M-k \Bigl) \PP_{S \sim \cD^n}\Bigl( \vecyS = \vecs\,,\, Y_{l-t} = 1\,,\, j \le M-k \Bigr) \\
&\qquad = \sum_{~\vecs\,:\,  i_j(\vecs)=l, |\vecs| \ge j+k} \PP_{S \sim \cD^{ n}} \Bigl( X_{l-t}=x \, \Big|\, \vecyS = \vecs \,,~ Y_{l-t} = 1  \Bigl) \PP_{S \sim \cD^n}\Bigl(\vecyS = \vecs \,,~ Y_{l-t} = 1 \Bigl)
\end{align*}

Then since $S$ is i.i.d., we have
\begin{align*}
\PP_{S \sim \cD^{ n}} \Bigl( & X_{l-t}=x \, \Big|\, \vecyS = \vecs \,,~ Y_{l-t} = 1  \Bigl) \\
&= 
\frac{ \PP_{S \sim \cD^{ n}}\Bigl(\vecyS = \vecs \,,~ Y_{l-t} = 1 \, \Big|\, X_{l-t} = x\Bigl)\, \PP_{S \sim \cD^{ n}}\Bigl(X_{l-t}=x \Bigl) }{\PP_{S \sim \cD^{ n}}\Bigl(\vecyS = \vecs \,,~ Y_{l-t} = 1  \Bigl)} \\
&= 
\frac{ \PP(Y=1\,|\,X=x)\cdot p^{|\vecs|-1}\cdot (1-p)^{n-|\vecs|}\cdot \PP(X=x)}{p \cdot p^{|\vecs|-1} \cdot (1-p)^{n-|\vecs|}} \\
&= 
\PPsub_1(X=x)\,,
\end{align*}
where the last step follows from Bayes' Rule.

Combining this with the earlier display, and using our earlier observations, thus gives
\begin{align*}
\E&_{S \sim \cD^{ n}}  \Bigl[ \ind\{ X_{i_j(S)-t}=x\}\, |\, y_{i_j(S)-t} = 1 \Bigl] \\
&= \PP_{S \sim \cD^{ n}}\Bigl( X_{i_j(S)-t}=x\, |\, Y_{i_j(S)-t} = 1 \Bigl) \\
&= 
\frac{1}{\PP_{S \sim \cD^n} \Bigl( Y_{i_j(S)-t}=1\,,\, j \le M-k \Bigl)} \Biggl( \sum_{l=j}^n \sum_{~\vecs\,:\, i_j(\vecs)=l, |\vecs| \ge j+k} \PP_{S \sim \cD^{ n}}\Bigl( X_{l-t}=x\, \Big|\, \vecyS=\vecs \,,~ Y_{l-t} = 1  \Bigl)   \\
&\qquad \qquad \qquad \qquad \qquad \qquad \qquad \qquad \qquad \qquad \qquad \qquad \qquad \cdot  \PP_{S \sim \cD^n}\Bigl( \vecyS = \vecs \,,~ Y_{l-t} = 1  \Bigr) \Biggl)\\
&= 
\frac1{\PP_{S \sim \cD^n} \Bigl( Y_{i_j(S)-t}=1\,,\,  j\le M-k \Bigl)} \cdot \PPsub_1(X=x) \cdot \sum_{l=j}^n \sum_{~\vecs\,:\, i_j(\vecs)=l, |\vecs| \ge j+k}  \PP_{S \sim \cD^n}\Bigl( \vecyS = \vecs \,,~ Y_{l-t} = 1  \Bigr)  \\
&= 
\frac1{\PP_{S \sim \cD^n} \Bigl( Y_{i_j(S)-t}=1\,,\,  j\le M-k \Bigl)} \cdot \PPsub_1(X=x) \cdot \sum_{l=j}^n \sum_{~\vecs\,:\, i_j(\vecs)=l}  \PP_{S \sim \cD^n}\Bigl( \vecyS = \vecs \,,\, Y_{l-t} = 1\,,\, j \le M-k  \Bigr)  \\
&= 
\PPsub_1(X=x) \cdot \frac{\PP_{S \sim \cD^n} \Bigl( Y_{i_j(S)-t}=1\,,\, j \le M-k \Bigl)}{\PP_{S \sim \cD^n} \Bigl(Y_{i_j(S)-t}=1\,,\, j \le M-k \Bigl)}\\
&= 
\PPsub_1(X=x)\,.
\end{align*}
This proves that $X_{i_j(S)-t}$ is distributed according to $X|Y=1$. The proof for $X_{i_j(S)+t}$ and the $\{Y_{i_j(S)\pm t} = 0\}$ case is analogous.
\end{proof}

Using Lemma \ref{lem:problabel1beforeij}, Lemma \ref{lem:conditionallawoneidx}, and conditioning on $Y_{i_j(S) \pm t}$, we obtain a formula for the law of $X_{i_j(S) \pm t}$.

\begin{lemma}\label{lem:previjSlaw}
For any $t \ge 0$ with $t+1 \le j$, we have 
\begin{align*}
\PP_{S \sim \cD^{ n}} \Bigl( X_{i_j(S) \pm t}=x \,, j\le M-k \Bigl) 
&= 
\PPsub_1(X=x) \cdot p\, B_{n-1, p, j+k-1} \\
&\qquad + \PPsub_0(X=x) \cdot \Bigl(1 - p\, B_{n-1, p, j+k-1} \Bigl)\,.
\end{align*}
\end{lemma}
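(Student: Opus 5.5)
The plan is to apply the law of total probability to the joint event $\{X_{i_j(S)\pm t}=x,\ j\le M-k\}$, splitting on the label $Y_{i_j(S)\pm t}\in\{0,1\}$. Each piece then factors into a conditional law, supplied by Lemma~\ref{lem:conditionallawoneidx}, times a probability of a label event, supplied by Lemma~\ref{lem:problabel1beforeij}. The standing hypothesis $t+1\le j$ is exactly what both lemmas require, so both apply without further checking.

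\textbf{Step 1 (decomposition).} Write
\begin{align*}
\PP_{S\sim\cD^n}\bigl(X_{i_j(S)\pm t}=x,\ j\le M-k\bigr)
&=\sum_{y\in\{0,1\}}\PP_{S\sim\cD^n}\bigl(X_{i_j(S)\pm t}=x \,\big|\, Y_{i_j(S)\pm t}=y,\ j\le M-k\bigr)\\
&\qquad\cdot\PP_{S\sim\cD^n}\bigl(Y_{i_j(S)\pm t}=y,\ j\le M-k\bigr).
\end{align*}

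\textbf{Step 2 (conditional laws).} By Lemma~\ref{lem:conditionallawoneidx}, the conditional probability in the $y=1$ term is $\PPsub_1(X=x)$, and in the $y=0$ term it is $\PPsub_0(X=x)$. This is the substantive structural input. Since the event $\{j\le M-k\}$ is label-measurable, conditioning on the full label string reduces everything to Bayes' rule for a single i.i.d.\ pair. The lemma has already done this work, so it can be quoted directly.

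\textbf{Step 3 (label probabilities).} For $y=1$, Lemma~\ref{lem:problabel1beforeij} gives $\PP(Y_{i_j(S)\pm t}=1,\ j\le M-k)=p\,B_{n-1,p,j+k-1}$. This is the Vandermonde-identity count $\binom{n-1}{m-1}$ of admissible label strings with $M=m$, summed against $p^m(1-p)^{n-m}$. For the $y=0$ term, I would follow the convention of the proof sketch of Theorem~\ref{prop:unbiasedestimatoroneji}: take the weight of the complementary label event as $1-p\,B_{n-1,p,j+k-1}$, i.e.\ one minus the $y=1$ weight. Substituting these weights into Step~1 yields exactly the displayed formula.

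\textbf{Main obstacle.} The delicate step is this $y=0$ weight. Read literally, $\{Y_{i_j(S)\pm t}=0\}\cap\{j\le M-k\}$ and $\{Y_{i_j(S)\pm t}=1\}\cap\{j\le M-k\}$ partition the event $\{j\le M-k\}$ rather than the whole space. Under that reading their probabilities sum to $B_{n,p,j+k}$, not to $1$. So this step requires care about which reference event the complement is taken in.

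I would address this in one of two ways, and make the choice explicit in the write-up:
\begin{itemize}
\item \emph{Complement in the whole space.} The stated coefficient $1-p\,B_{n-1,p,j+k-1}$ is the probability of the complement $\{Y_{i_j(S)\pm t}=1,\ j\le M-k\}^c$. With this reading, the $\PPsub_0$ term is understood as the contribution off the $y=1$ event, as used in the sketch.
\item \emph{Complement within $\{j\le M-k\}$.} Check directly, by the same string-counting as in Lemma~\ref{lem:problabel1beforeij}, that the number of strings with $Y_{i_j\pm t}=0$ and $M=m$ is $\binom{n}{m}-\binom{n-1}{m-1}$. Summing against $p^m(1-p)^{n-m}$ over $m\ge j+k$ then gives the $y=0$ weight in closed form.
\end{itemize}
The first reading produces the statement exactly as worded. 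The second is the literal computation within $\{j\le M-k\}$, and it makes transparent how the stated coefficient is to be interpreted.
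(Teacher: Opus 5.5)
Your Steps 1--3 are exactly the paper's argument. The obstacle you flag at the $y=0$ weight is real, but it is fatal to the statement as written, and neither of your two fixes proves it.

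Integrate both sides over $x$. The left side becomes $\PP(j\le M-k)=B_{n,p,j+k}$. The right side becomes $p\,B_{n-1,p,j+k-1}+\bigl(1-p\,B_{n-1,p,j+k-1}\bigr)=1$. Since $\PP(M<j+k)\ge(1-p)^n>0$, the displayed identity is false.

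Your first option is therefore not a proof. The event on the left is contained in $\{j\le M-k\}$, so it receives no mass from $\{j>M-k\}$. On that set $X_{i_j(S)\pm t}$ need not even be defined, let alone distributed as $\PPsub_0$. Calling the $\PPsub_0$ term ``the contribution off the $y=1$ event'' only relabels the wrong coefficient.

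Your second option is the correct computation. For $t\ge1$ and $m\ge j+k$ the index $i_j(S)\pm t$ is always in range, the count is $\binom{n}{m}-\binom{n-1}{m-1}=\binom{n-1}{m}$, and summing gives
\[
\PP\bigl(Y_{i_j(S)\pm t}=0,\ j\le M-k\bigr)=(1-p)\,B_{n-1,p,j+k}=B_{n,p,j+k}-p\,B_{n-1,p,j+k-1}.
\]
This differs from the stated $1-p\,B_{n-1,p,j+k-1}$ by exactly $1-B_{n,p,j+k}$. So this computation does not explain the stated coefficient; it refutes it. The right conclusion is that the second term must be $\PPsub_0(X=x)\,(1-p)\,B_{n-1,p,j+k}$. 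Equivalently, the left side equals $\PP(X=x)\bigl(\eta(x)\,B_{n-1,p,j+k-1}+(1-\eta(x))\,B_{n-1,p,j+k}\bigr)$.

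The paper's own proof makes the same silent substitution $\PP(Y_{i_j(S)\pm t}=0,\,j\le M-k)=1-p\,B_{n-1,p,j+k-1}$. The error propagates into $\beta_0,\beta_1$ of Theorem~\ref{prop:unbiasedestimatoroneji}: exact unbiasedness requires replacing $\frac{1-p\,B_{n-1,p,j+k-1}}{1-p}$ there by $B_{n-1,p,j+k}$. The perturbation has size $\frac{1-B_{n,p,j+k}}{1-p}=O(e^{-\Omega(np)})$ in the regime of Lemma~\ref{lem:pnpjboundvsp}, so the fix is cheap, but as written the identities are not exact.

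Separately, your remark that $t+1\le j$ lets both lemmas apply ``without further checking'' fails at $t=0$. There $Y_{i_j(S)}=1$ identically, so the $y=1$ weight is $B_{n,p,j+k}$ rather than $p\,B_{n-1,p,j+k-1}$, and the $y=0$ weight is $0$. The count in Lemma~\ref{lem:problabel1beforeij} breaks down because it needs two distinct ones at positions $l$ and $l-t$. The lemma should be stated for $t\ge1$, which is all Theorem~\ref{prop:unbiasedestimatoroneji} needs, since there $t=|i-r|\ge1$.
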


\begin{proof}
We can rewrite
\begin{align*}
&\PP_{S \sim \cD^{ n}} \Bigl(X_{i_j(S)\pm t}=x \,, j \le M-k \Bigl)  \\
&\qquad =  
\PP_{S \sim \cD^{ n}} \Bigl( X_{i_j(S)\pm t}=x \,, j \le M-k \,|\, Y_{i_j(S)\pm t} = 1 \,, j \le M-k \Bigl) \PP_{S \sim \cD^{ n}} \Bigl(Y_{i_j(S)\pm t} = 1 \,, j \le M-k \Bigl) \\
&\qquad \qquad+ \PP_{S \sim \cD^{ n}} \Bigl( X_{i_j(S)\pm t}=x \,, j \le M-k \,|\, Y_{i_j(S)\pm t} = 0 \,, j \le M-k \Bigl) \PP_{S \sim \cD^{ n}}\Bigl(Y_{i_j(S)\pm t} = 0 \,, j \le M-k \Bigl).
\end{align*}
The result now follows by noting that $\PP(A \,, E\, |\, B \,, E) = \PP(A\, |\, B \,, E)$ for measurable events $A,B,E$, and then combining with Lemma \ref{lem:problabel1beforeij} and Lemma \ref{lem:conditionallawoneidx}.
\end{proof}

We now use the above to study the law of the $i$-th element of $A_j$, and prove Theorem \ref{prop:unbiasedestimatoroneji}.\\

\begin{proof} (of Theorem \ref{prop:unbiasedestimatoroneji})
First note that by the assumed interval structure, we have for all $1 \le r \le k$ that with probability $\pi[r]$,
\begin{align*}
A_j[i] = \begin{cases}
X_{i_j(S)+i-r} &\mbox{for\quad } 1 \le i_j(S) + i - r \le n \\
X_1 &\mbox{for\quad } i_j(S)+i-r < 1 \\
X_n &\mbox{for\quad } i_j(S)+i-r > n\,.
\end{cases}
\end{align*}
For any integer $i'$, we let $i'_{\text{truncate}} = 1$ if $i'<1$, $n$ if $i'>n$, and $i'$ if $1 \le i' \le n$.
We thus obtain 
\[ 
{\E}_{\pi}\Bigl[\ind \{ A_j[i]=x\} \, |\, S \Bigl] = \sum_{r=1}^k \pi[r]\, \ind\Bigl\{ X_{(i_j(S)+i-r)_{\text{truncate}}} = x\Bigl\}\,.
\]
Note that when $k \le j \le M-k \le n-k$, as $i-r \in \{-(k-1),\ldots, k-1\}$, we have $1 \le i_j(S) + i - r \in \{1, \ldots, n\}$.
Since the event $k \le j \le M-k$ is independent of $\pi$, only depending on $S$, we obtain
\begin{align*}
{\E}_{\pi}\Bigl[\ind \{ A_j[i]=x\} \cdot \ind\{ j \le M-k \} \, |\, S \Bigl] &= {\E}_{\pi}\Bigl[\ind \{ A_j[i]=x\} \, |\, S \Bigl]\, \cdot\, \ind\{ j \le M-k \} \\
&= \Big( \sum_{r=1}^k \pi[r]\, \ind\Bigl\{ X_{(i_j(S)+i-r)_{\text{truncate}}} = x\Bigl\} \Big) \cdot \ind\{ j \le M-k \} \\
&= \sum_{r=1}^k \pi[r] \ind\Bigl\{ X_{(i_j(S)+i-r)_{\text{truncate}}} = x\Bigl\}\, \cdot\, \ind\{ j \le M-k \} \\
&= \sum_{r=1}^k \pi[r] \ind\bigl\{ X_{i_j(S)+i-r} = x\bigl\}\, \cdot\, \ind\{ j \le M-k \}\,.
\end{align*}
Here the last equality follows as when $j \le M-k$, $(i_j(S)+i-r)_{\text{truncate}} = i_j(S)+i-r$.

Since $\pi$ does not depend on $S$, unfreezing over $S$ now gives 
\begin{align*}
{\E}_{\mu}\Bigl[\ind \{ A_j[i]=x\} \cdot \ind\{ j \le M-k \} \Bigl] 
&= 
\pi[i]\, \E_{S \sim \cD^n} \Big[ \ind\bigl\{ X_{i_j(S)} = x\bigl\}\, \cdot\, \ind\{ j \le M-k \} \Big] \\
&\qquad +  \sum_{r \neq i} \pi[r] \E_{S \sim \cD^{ n}} \Bigl[ \ind\{ X_{i_j(S)+i-r}=x\} \, \cdot\, \ind\{ j \le M-k \}  \Bigl]\,.
\end{align*}
The sum is now split into two parts: $r=i$, $r \neq i$. The high-level idea to understand the law $\PP_{S \sim \cD^n} \Big( X_{i_j(S) + i - r} = x \,, j \le M-k \Big)$ in each of these two cases.

\ \\
\textbf{Case 1, $r=i$ terms:} We rewrite
\begin{align*}
\E_{S \sim \cD^n} \Big[ \ind\bigl\{ X_{i_j(S)} = x\bigl\}\, \cdot\, \ind\{ j \le M-k \} \Big] &= \PP_{S \sim \cD^n} \Big( X_{i_j(S)} = x \,, j \le M-k \Big) \\
&= \PP_{S \sim \cD^n} \Big( X_{i_j(S)} = x\, \big|\, j \le M-k \Big) \PP_{S \sim \cD^n}\Big( j \le M-k \Big).
\end{align*}
Since $Y_{i_j(S)}=1$ is guaranteed to hold, and the event $\{j \le M-k\}$ is measurable based on only the labels $Y_1, \ldots, Y_n$, it follows that
\begin{align*}
\PP_{S \sim \cD^n} \Big( X_{i_j(S)} = x\, \big|\, j \le M-k \Big) = \PPsub_{1}(X=x)\,.
\end{align*}
Thus the contribution of this term is 
$$
\PPsub_1(X=x)\, \PP_{S \sim \cD^n}\Big( j \le M-k \Big) = \PPsub_1(X=x) \cdot B_{n, p, j+k}~.
$$

\ \\
\textbf{Case 2, $r \neq i$ terms:} Fix any $r \neq i$. By Lemma \ref{lem:previjSlaw}, as $j \ge k \ge t+1$, we obtain
\begin{align*}
&\E_{S \sim \cD^{n}} \Bigl[ \ind\{ X_{i_j(S)+i-r}=x\} \cdot \ind\{j \le M-k\} \Bigl] \\
&\qquad= \PP_{S \sim \cD^n} \Big( X_{i_j(S) + i - r} = x \,, j \le M-k \Big) \\
&\qquad= \PPsub_1(X=x) \cdot p B_{n-1, p, j+k-1 } + \PPsub_0(X=x)
\Bigl(1-p B_{n-1, p, j+k-1}\Bigl)\,.
\end{align*}
By Bayes' Rule, we have 
\[ 
\PPsub_1(X=x) = \frac{\PP(Y=1\, |\, X=x) \PP(X=x)}{\PP(Y=1)} = \frac{\eta(x)\, \PP(X=x)}{p}\,,
\]
where $p = \PP(Y=1)$, $\eta(x) = \PP(Y=1\, |\, X=x)$ and, analogously,
\[ 
\PPsub_0(X=x) = \frac{\bigl(1 - \eta(x) \bigr)\, \PP(X=x)}{1-p}\,.
\]
Thus
\begin{align*}
&\E_{S \sim \cD^{n}} \Bigl[ \ind\{ X_{i_j(S)+i-r}=x\} \cdot \ind\{j \le M-k\} \Bigl]\\
&\qquad =
\frac{\eta(x)}{p}\, \PP(X=x)\, p B_{n-1, p, j+k-1 } \\
&\qquad \qquad \qquad + \Bigl( \frac1{1-p}\, \PP(X=x) - \frac{\eta(x)}{1-p}\, \PP(X=x) \Bigl)\Bigl(1-p B_{n-1, p, j+k-1}\Bigl) \\
&\qquad = 
\Biggl(\eta(x) \cdot \Bigl( B_{n-1, p, j+k-1} - \frac{1 - p B_{n-1, p, j+k-1}}{1-p} \Bigr) + \frac{1 - p B_{n-1, p, j+k-1}}{1-p}\Biggr) \PP(X=x)\,.
\end{align*}
Note this expression is independent of $i$.

\ \\
\textbf{Putting it all together:} Combining our work in the above cases yields
\begin{align}
&{\E}_{\mu}\Bigl[\ind \{ A_j[i]=x\} \cdot \ind\{ j \le M-k \} \Bigl] \notag \\
&\qquad = 
\pi[i] B_{n, p, j+k} \cdot \PPsub_1(X=x) \notag \\
&\qquad\qquad +
\sum_{r \neq i} \pi[r] \Biggl(\eta(x) \cdot \Bigl( B_{n-1, p, j+k-1} - \frac{1 - p B_{n-1, p, j+k-1}}{1-p} \Bigr) \notag \\
&\qquad \qquad \qquad \qquad \qquad \qquad + \frac{1 - p B_{n-1, p, j+k-1}}{1-p}\Biggr) \PP(X=x) \notag \\
&\qquad = 
\PP(X=x) \Bigl( \eta(x)\, \beta_1(j,i) + \beta_0(j,i) \Bigl)\,.\label{eq:unbiased_pf_mixture}
\end{align}
From here, for an arbitrary (measurable) function $g\,:\cX \rightarrow \R$, we obtain
\begin{align*}
{\E}_{\mu}\Bigl[g(A_j[i]) \cdot \ind\{j \le M-k\} \Bigl] 
&= 
\int g(x)\, \PP (X=x )\, \Bigl( \eta(x)\, \beta_1(j, i) + \beta_0(j, i) \Bigr) dx \\
&= 
\beta_1(j, i)\, \int \PP(X=x)\, g(x)\, \eta(x)\, dx + \beta_0(j, i)\, \E\Bigl[ g(X) \Bigr]\,.
\end{align*}
Now observe that
\begin{align*}
\E\Bigl[ Y g(X) \Bigr] 
&= \E\Bigl[ \E\Bigl[ Y g(X)\, |\, X \Bigr] \Bigr] \\
&= \E \Bigl[ g(X) \E \Bigl[ Y\, |\, X \Bigr] \Bigr]\\
&= \E\Bigl[ g(X) \eta(X) \Bigr] = \int \PP(X=x)\, g(x)\, \eta(x)\, dx\,. 
\end{align*}
Thus
\begin{equation}\label{eq:temporary}
{\E}_{\mu}\Bigl[g(A_j[i]) \cdot \ind\{j \le M-k\} \Bigl] = \beta_1(j, i)\, \E\Bigl[ Y g(X) \Bigr] + \beta_0(j, i)\, \E\Bigl[ g(X) \Bigr]\,,
\end{equation}
yielding
\[ 
\E\Bigl[Y g(X)\Bigl] = \frac{1}{\beta_1(j,i)}{\E}_{\mu}\Bigl[g(A_j[i]) \cdot \ind\{j \le M-k\} \Bigl]  - \frac{\beta_0(j,i)}{\beta_1(j,i)}   \E\Bigl[g(X)\Bigl]\,.
\]
Hence letting $g = f_2 \circ h$, we obtain
\begin{align*}
\E\Bigl[\ell(h(X), Y)\Bigl] &= \E\Bigl[f_1 ( h(X) )\Bigl] + \E\Bigl[Y f_2 ( h(X) ) \Bigl] \\
&= \E\Bigl[f_1 ( h(X) )\Bigl] - \frac{\beta_0(j, i)}{\beta_1(j, i)} \E\Bigl[f_2 ( h(X) )\Bigl] + \frac{1}{\beta_1(j, i)} {\E}_{\mu}\Bigl[g(A_j[i]) \cdot \ind\{j \le M-k\} \Bigl]\,,
\end{align*}
as desired.
\end{proof}


\subsection{Ancillary Results}\label{sa:P}

Recall $B_{n, p, j+k}$ and $B_{n-1, p, j+k-1}$ from the definitions of $\beta_0(j,i)$ and $\beta_1(j,i)$ in Theorem \ref{prop:unbiasedestimatoroneji}. Standard Chernoff bounds guarantee that in regimes of interest, both quantities are very close to 1.
\begin{lemma}\label{lem:pnpjboundvsp}
For all $n \geq 2$, $p \in (0,1]$, all $j$ such that $1\leq j\leq 
np/2-k$, with $k \leq np/2-1$ the quantities 
\[
B_{n, p, j+k}\,,
\qquad B_{n-1, p, j+k-1}
\]
are at least
\[
1 - e^{ - \Omega(np)}\,,
\]
the big-$\Omega$ notation to be interpreted ``as $n$ grows large''.
\end{lemma}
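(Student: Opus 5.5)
Both quantities are upper tails of Binomial random variables, so the plan is to write each as a lower-tail probability and apply a multiplicative Chernoff bound. Let $Z \sim \mathrm{Bin}(n,p)$ and $Z' \sim \mathrm{Bin}(n-1,p)$. Then
\[
B_{n,p,j+k} = \PP(Z \ge j+k) = 1 - \PP(Z \le j+k-1), \qquad B_{n-1,p,j+k-1} = 1 - \PP(Z' \le j+k-2).
\]
It therefore suffices to show that both lower-tail probabilities are $e^{-\Omega(np)}$.

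For the first term, the constraint $j \le np/2 - k$ gives $j+k-1 \le np/2 - 1 < np/2 = (1-\tfrac12)\E[Z]$. The Chernoff bound $\PP(Z \le (1-\gamma)\E[Z]) \le e^{-\gamma^2 \E[Z]/2}$ with $\gamma = 1/2$ then yields $\PP(Z \le j+k-1) \le e^{-np/8}$.

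For the second term, $\E[Z'] = (n-1)p = np - p \ge np - 1$, while the threshold satisfies $j+k-2 \le np/2 - 2$. I will choose $\gamma$ so that $(1-\gamma)(n-1)p \ge np/2 - 2$, for example
\[
\gamma = 1 - \frac{np/2 - 2}{(n-1)p}.
\]
A direct computation gives $(n-1)p - (np/2-2) = np/2 + 2 - p \ge np/2 + 1$, so $\gamma \ge \frac{np/2+1}{(n-1)p} \ge 1/2$. Chernoff then gives $\PP(Z' \le j+k-2) \le e^{-(n-1)p/8} \le e^{-np/8 + 1/8}$, which is $e^{-\Omega(np)}$. If $np/2 - 2 < 0$, the lower-tail event is empty and the bound holds trivially.

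The only point requiring care is this second case: the shift from $n$ to $n-1$ changes the mean, and we must confirm that the threshold $j+k-2$ stays a constant fraction below the mean. The role of the hypotheses $n \ge 2$ and $k \le np/2 - 1$ is simply to make the range of $j$ nonempty, namely $1 \le j \le np/2 - k$. Beyond that check, the argument is a routine application of Chernoff bounds, and the constants in $\Omega(np)$ come out as $1/8$ up to an additive $O(1)$ in the exponent.
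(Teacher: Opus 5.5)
Your proof is correct and follows the paper's argument: rewrite each binomial tail as one minus a lower tail and apply the multiplicative Chernoff bound with $\gamma = 1/2$, giving $e^{-np/8}$. Your explicit treatment of the $\mathrm{Bin}(n-1,p)$ term fills in what the paper dismisses as ``the same argument''; you could shorten it by noting directly that $np/2 - 2 \le (n-1)p/2$.
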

\begin{proof}
Simply observe that 
\begin{align*}
B_{n, p, j+k}
&= 
\PP(X_{n,p} \geq j+k)\\
B_{n-1, p, j+k-1}
&= 
\PP(X_{n-1,p} \geq j+k-1)\,,
\end{align*}
where $X_{n,p}$ is a binomial random variable with parameters $n$ and $p$. Then, if $j+k \leq np/2$,
\begin{align*}
\PP(X_{n,p} \geq j+k) 
&=
1-\PP(X_{n,p} < j+k)\\
&\geq
1-\PP(X_{n,p} \leq j+k)\\
&\geq
1-\PP(X_{n,p} \leq np/2)\\
&\geq
1-e^{-np/8}\,.
\end{align*}
The same argument, with the same conditions on $j,k,n,p$, holds for $X_{n-1,p}$. This concludes the proof.
\end{proof}

Before proceeding, we use a consequence of the above Lemma \ref{lem:pnpjboundvsp} to approximate $\beta_1(j, i)$ and $\beta_0(j, i)$ occurring in the statement of Theorem \ref{prop:unbiasedestimatoroneji}.

\begin{lemma}\label{lem:beta01approxsamplecomplexity}
For all $n \geq 2$, $p \in (0,1/2]$, all $j$ such that $1\leq j\leq 
np/2-k$ with $k \leq np/2-1$, we have 
\begin{align*}
\Big|\beta_1(j, i) - \frac{\pi[i]}p \Big| &= O\left(\frac{e^{-\Omega(np)}}{p}\right)\,,\qquad\Big|\beta_0(j, i) - \big(1-\pi[i]\big) \Big| = O(p e^{-\Omega(np)})\,,
\end{align*}
where $\Omega(\cdot), O(\cdot)$ hides a universal constant. Thus when
$$
np = \Omega\Biggl( \log\left( \max_{i \in [k]}\frac{1}{\pi[i]} \right)  \Biggr)\,,
$$
for a suitable large enough constant hidden in the big-omega notation, we have
\[ 
\beta_1(j, i) \ge \frac{\pi[i]}{2p}\,,\,0 \le \beta_0(j, i) \le 1\,.
\]
\end{lemma}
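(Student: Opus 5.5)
The plan is to substitute the estimate of Lemma \ref{lem:pnpjboundvsp} directly into the closed forms of $\beta_1(j,i)$ and $\beta_0(j,i)$ from Theorem \ref{prop:unbiasedestimatoroneji}. Write $B := B_{n,p,j+k}$ and $B' := B_{n-1,p,j+k-1}$. Since $1 \le j \le np/2-k$ and $k\le np/2-1$, Lemma \ref{lem:pnpjboundvsp} gives $1-\epsilon \le B, B' \le 1$ with $\epsilon := e^{-\Omega(np)}$.

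First, the auxiliary quantity $c := \frac{1-pB'}{1-p}$ satisfies
\[
c - 1 = \frac{p(1-B')}{1-p},
\]
so $0 \le c-1 \le 2p\epsilon$, using $p\le 1/2$. Then $\beta_0(j,i) = c\,(1-\pi[i])$ gives $|\beta_0(j,i) - (1-\pi[i])| \le 2p\epsilon$, which is the second estimate. For $\beta_1$, write
\[
\beta_1(j,i) - \frac{\pi[i]}{p} = \frac{\pi[i](B-1)}{p} + (B'-c)(1-\pi[i]).
\]
The first term is at most $\epsilon/p$ in absolute value. Since $B'-c = (B'-1) - (c-1)$ has absolute value at most $\epsilon + 2p\epsilon$, the second term is $O(\epsilon)$. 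Both are $O(\epsilon/p)$, giving the first estimate.

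For the consequence, the approximation gives
\[
\beta_1(j,i) \ge \frac{\pi[i]}{p} - \frac{C e^{-c'np}}{p},
\]
so it suffices that $C e^{-c'np} \le \pi[i]/2$ for all $i$. This holds once $np \ge \frac{1}{c'}\log\frac{2C}{\min_i \pi[i]}$, which is the stated $\Omega(\log \max_i 1/\pi[i])$ condition with a large enough constant; if some $\pi[i]=0$ the claim is trivial or excluded. For $\beta_0$, nonnegativity is immediate since $c\ge 0$ and $1-\pi[i]\ge 0$. For the upper bound, $c\,(1-\pi[i]) \le (1+2p\epsilon)(1-\pi[i])$, which is at most $1$ once $2p\epsilon \le \pi[i]/(1-\pi[i])$, again implied by the same $np$ condition (when $\pi[i]=1$ the bound is trivially $0$).

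The only mildly delicate step is this last one: $\beta_0 \le 1$ needs the correction $c-1$ to be beaten by $\pi[i]$. That is exactly why the lower bound on $np$ must involve $\max_i 1/\pi[i]$. Everything else is bookkeeping with the constants hidden in $\Omega(np)$.
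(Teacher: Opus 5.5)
Your proposal is correct and follows essentially the same route as the paper. Like the paper, you write $B_{n,p,j+k}=1-\delta_1$ and $B_{n-1,p,j+k-1}=1-\delta_2$ with $0\le\delta_1,\delta_2\le e^{-\Omega(np)}$ from Lemma~\ref{lem:pnpjboundvsp}, expand the closed forms to $\beta_1(j,i)=\frac{\pi[i]}{p}-\frac{\pi[i]\delta_1}{p}-\frac{\delta_2}{1-p}(1-\pi[i])$ and $\beta_0(j,i)=(1-\pi[i])\bigl(1+\frac{p\delta_2}{1-p}\bigr)$, and then apply the condition on $np$.

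One small improvement over the paper: your explicit check that the correction $\frac{p\delta_2}{1-p}(1-\pi[i])$ is dominated by $\pi[i]$ is exactly what delivers the stated bound $\beta_0(j,i)\le 1$. The paper's written proof only concludes $\beta_0(j,i)\le 2$ at that point.
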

\begin{proof}
Let $\delta_1 = 1 - B_{n, p, j+k} \ge 0$ and $\delta_2 = 1 - B_{n-1, p, j+k-1} \ge 0$. By Lemma \ref{lem:pnpjboundvsp}, we have $0 \le \delta_1, \delta_2 \le e^{-\Omega(np)}$. Therefore, we can rewrite
\begin{align*}
\beta_1(j, i) &= \frac{\pi[i]}{p}(1-\delta_1)+\Big(1-\delta_2 - \frac{1-p(1-\delta_2)}{1-p}\Big)(1-\pi[i]) \\
&= \frac{\pi[i]}p - \frac{\pi[i] \delta_1}p - \delta_2\Big(1+\frac{p}{1-p}\Big)(1-\pi[i])\,.
\end{align*}
Since $p \le \frac12$, using that $0 \le \delta_1, \delta_2 \le e^{-\Omega(np)}$ the upper bound on $\Big|\beta_1(j, i) - \frac{\pi[i]}p \Big|$ follows. Similarly we can rewrite
\begin{align*}
\beta_0(j, i) &= \frac{1-p(1-\delta_2)}{1-p}(1-\pi[i]) = 1-\pi[i] + \frac{p\delta_2}{1-p}(1-\pi[i])\,,
\end{align*}
and we use $p \le \frac12$ and $0 \le \delta_2 \le e^{-\Omega(np)}$. The final conclusion on the bounds $\beta_1(j, i) \ge \frac{\pi[i]}{2p}$, $0 \le \beta_0(j, i) \le 2$ is evident given the second condition on $np$.
\end{proof}

\section{Proofs for Section \ref{sec:refined}}\label{sa:proofs3}
This section contains the proofs that apply to the ERM algorithm defined in Section \ref{sec:refined}.

Introduce the shorthand 
\[
\widehat \ell_M(h) = \widehat \ell_M (h, S, \cA)\,.
\]
Then consider 
where $\widehat\ell(h, j) $ is as in  (\ref{eq:all_set_estimator}). 
We rewrite
\[
\widehat \ell_M (h) = \frac{1}{\big(\mlen\big) \,\Sigma}\,\sum_{j=k}^{\mupper} 
\Biggl( 
\sum_{i=1}^k r(j,i)\,f_2(h(A_j[i]))
\Biggl) + C\,,
\]
where we have introduced the short-hand notation $r(j,i) = \frac{\pi[i]^2}{\beta_1(j,i)}$, and
\begin{align*}
C &= 
- \frac{1}{\big(\mlen\big)\Sigma}\left(\sum_{j=k}^{\mupper}\frac{1}{B_{n, p, j+k}}\sum_{i=1}^k \frac{\pi[i]^2\,\beta_0(j,i)}{\beta_1(j,i) }\right)\,\E[f_2 (h(x))] \\
&\qquad \qquad + \frac{(\mupper-k+1)\,\E[f_1 (h(x))]}{(\frac{np}{2}-2k+1)\,B_{n, p, j+k}}\,.    
\end{align*}
We note that
$\widehat \ell_M (h, S, \cA)$ can always be reformulated in terms of the original variables $X_1,\ldots, X_n$, as specified next. 
Specifically, denote by $j(i) \in[k]$ the position of variable $X_i$ within attribution set $A_j$ if $X_i \in A_j$, and 0 otherwise.\footnote{This is clearly well-defined because a given $X_i$ is only in one position for a given attribution set.} Then we have
\begin{equation}\label{eq:all_set_estimator}
\widehat \ell_M (h) = \frac{1}{\big(\mlen\big)\,\Sigma}\, \sum_{i=1}^n f_2(h(X_i))\,\sum_{j\,:\,k \le j \le \mupper\,,\, X_i \in A_j} r(j,j(i)) 
+
C
\,.
\end{equation}
Note that in the above interpretation $|\left\{j\,:\,k \le j \le \mupper \,,\, X_i \in A_j\right\}|$ is the number of occurrences of variable $X_i$ in  estimator $\widehat \ell_M (h, S, \cA)$. 
%
We denote by $\muppers$ the realization of $\mupper$ determined by $S$.

The first observation is that for any $j$, $\ind\{X_i \in A_j\}$ equals 1 for exactly $k$ distinct $X_i$. Also, recall the $j(i)$ must all be distinct for different $i$. This allows us to prove the following upper bound which is independent of $k$, $M$ and $n$:
\begin{lemma}\label{lem:allattributionsetsupperboundobservation}
Consider $np = \Omega\Biggl( \log\left( \max_{i \in [k]}\frac{1}{\pi[i]} \right)  \Biggr)$. Then for any realization of $S$, any $j$ such that $k \le j \le \muppers$, and any $h$ we have
\begin{equation}
\Bigl|~ \sum_{i=1}^n f_2(h(X_i))~ r(j, j(i)) ~\ind\{X_i \in A_j\} ~\Bigr| \le 2pF_2\,. \label{eq:this_one} 
\end{equation}
\end{lemma}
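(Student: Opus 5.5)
The plan is to collapse the sum over $i\in[n]$ into a sum over the $k$ positions of the single attribution set $A_j$, and then bound each term using Lemma \ref{lem:beta01approxsamplecomplexity}. Since $A_j$ contains exactly $k$ distinct indices, and the map $X_i\mapsto j(i)$ is a bijection from these indices onto $[k]$, we can rewrite
\[
\sum_{i=1}^n f_2(h(X_i))\, r(j,j(i))\,\ind\{X_i\in A_j\} \;=\; \sum_{i'=1}^k f_2(h(A_j[i']))\, r(j,i')\,.
\]
This is a purely deterministic identity for each fixed realization of $S$ and of the attribution sets.

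Next, apply the triangle inequality together with $|f_2(h(x))|\le F_2$. The left-hand side of (\ref{eq:this_one}) is then at most $F_2\sum_{i'=1}^k |r(j,i')|$, where $r(j,i') = \pi[i']^2/\beta_1(j,i')$. To control $\beta_1$, invoke Lemma \ref{lem:beta01approxsamplecomplexity}, which gives $\beta_1(j,i')\ge \pi[i']/(2p)>0$. Its hypotheses need checking:
\begin{itemize}
\item $np=\Omega(\log\max_{i}1/\pi[i])$ is assumed in the statement;
\item $j\le \muppers\le np/2-k$ holds because $\mupper=\min\{np/2-k,\,M-k\}$;
\item $k\le np/2-1$ follows in the regime of Theorem \ref{thm:allattributionsetssamplecomplexity}, where $k\le np/8$ and $np$ is large;
\item $p\le 1/2$ is likewise part of that regime.
\end{itemize}
Given this lower bound on $\beta_1$, we get $0< r(j,i')\le 2p\,\pi[i']$.

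Summing over $i'$ and using $\sum_{i'}\pi[i']=1$ yields the bound $F_2\cdot 2p\sum_{i'}\pi[i'] = 2pF_2$. This is independent of $k$, $M$ and $n$, as claimed.

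The only delicate point is verifying that the range of $j$ and the conditions on $k,p$ actually fall within the hypotheses of Lemma \ref{lem:beta01approxsamplecomplexity}. In particular, the indices $j\ge k$ must avoid boundary effects so that $A_j$ genuinely consists of $k$ distinct indices. I would make the standing assumptions $p\le 1/2$ and $k\le np/8$ explicit, and note that $j\le M-k$ guarantees $A_j$ lies fully inside $[n]$.
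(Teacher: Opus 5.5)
Your proposal is correct and follows the same route as the paper. It collapses the sum to the $k$ distinct positions of $A_j$, uses Lemma \ref{lem:beta01approxsamplecomplexity} to get $0< r(j,i')\le 2p\,\pi[i']$, and sums $\pi$ to obtain $2pF_2$. Your explicit check of that lemma's side conditions ($p\le 1/2$, $j\le np/2-k$, $k\le np/2-1$, and no boundary truncation when $k\le j\le M-k$) is a useful addition that the paper leaves implicit. Note that $k\le np/8$ together with $k\ge 1$ already forces $k\le np/2-1$.
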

\begin{proof}
Since $\ind\{X_i \in A_j\}$ equals 1 for exactly $k$ items $X_i$, irrespective of the realization of $\pi$, the sum 
$$
\sum_{i=1}^n f_2(h(X_i))\, r(j, j(i))\, \ind\{X_i \in A_j\}
$$ 
is only over $k$ distinct data items $X_i$. Also, note the $j(i)$ must all be distinct for different $i$. 
Furthermore Lemma \ref{lem:beta01approxsamplecomplexity} insures that for $j$ such that $k \le j \le \muppers \le \frac{np}2-k$, we have
\[ 
r(j, i) \le 2p \pi[i]\,.
\]
Hence, as the terms $r(j, j(i))~ \ind\{X_i \in A_j\}$ are all non-negative, this yields
\begin{align*}
\Bigl|~ \sum_{i=1}^n f_2(h(X_i))\, r(j, j(i))\,\ind\{X_i \in A_j\} ~\Bigr| &\le \sum_{i=1}^n \big| f_2(h(X_i)) \big| \cdot r(j, j(i))\, \ind\{X_i \in A_j\} \\
&\le F_2 \sum_{i'=1}^k r(j, j(i')) \\
&\leq 2pF_2 \sum_{i'=1}^k \pi[i'] =2pF_2\,,
\end{align*}
as desired.
\end{proof}

It is now convenient to set up some notation. By (\ref{eq:all_set_estimator}), we can write 
\[ 
\widehat \ell_M(h) = \sum_{i=1}^n f_2(h(x_i)) \, \frac{1}{\big(\mlen\big)\,\Sigma} \sum_{j\,:\,k \le j \le \mupper \,,\, X_i \in A_j} r(j,j(i)) +C\,. 
\]
Thus
\[ 
{\E}_{\mu|S}[ \widehat \ell_M(h) ] 
= 
\sum_{i=1}^n f_2(h(x_i)) \, {\E}_{\mu|S} \left[ \frac{1}{\big(\mlen\big)\,\Sigma} \sum_{j\,:\,k \le j \le \muppers \,,\, X_i \in A_j} r(j,j(i)) \right] + C\,.
\]
Moreover by Lemma \ref{prop:unbiasedestimatoroneji}, we have $\popl(h) = {\E}_{S}\Bigl[ {\E}_{\mu|S}[\widehat\ell] \Bigr]$. Note that ${\E}_{\mu|S}[ \widehat \ell_M(h) ]$ is a function of the i.i.d. random variables $\{ Z_i\}_{i=1}^n$ where each $Z_i = (X_{i}, Y_{i})$. Thus we can define the function of  $S = \langle z_1,\ldots,z_n \rangle = \langle (x_1,y_1)),\ldots, (x_n,y_n) \rangle$
\[ 
f(z_1, \ldots, z_n) = f(z_1, \ldots, z_n; h) := {\E}_{\mu|S}[ \widehat \ell_M(h) ]\,.
\]
We need to upper bound the sensitivity of $f(z_1, \ldots, z_n)$ to each $z_i$. 
In particular, we now establish the following.

\begin{lemma}\label{lem:allattrsetconditionalexpectationconcentration_sensitivity}
Suppose  $np = \Omega\left( \log\left( \max_{i \in [k]}\frac{1}{\pi[i]} \right)  \right)$. 
Consider any $h \in \hyps$. Define $\mathcal{B}$ as the set of $\langle z_1, \ldots, z_n\rangle$ such that $\frac{np}{2}  \le M$. Consider any $z = \langle z_1, \ldots, z_{i-1}, z_i, z_{i+1}, \ldots z_n\rangle$, $z' = \langle z_1, \ldots, z_{i-1}, z'_i, z_{i+1}, \ldots z_n\rangle$, where $z$ and $z'$ only differ in their $i$-th coordinate. Then for all $z, z' \in \mathcal{B}
$ we have 
\[ 
\Bigl| f(z) - f(z') \Bigr| \le \frac{12 p F_2}{(\mlens) \Sigma}\,. 
\]
In particular, 
if $k \le \frac{np}8$, then for all $z, z' \in \mathcal{B}
$ we have 
\[ 
\Bigl| f(z) - f(z') \Bigr| \le \frac{48 F_2}{n \Sigma}\,. 
\]
\end{lemma}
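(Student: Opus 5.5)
The plan is to write $f(z)={\E}_{\mu|S}[\widehat\ell_M(h)]$ explicitly as a deterministic function of $S$, and then bound the effect of changing $z_i=(x_i,y_i)$ into $z'_i=(x'_i,y'_i)$. I split this change into a feature change $x_i\to x'_i$ with the labels frozen, followed by a label change $y_i\to y'_i$ with the features frozen. Throughout, $z,z'\in\mathcal B$, so $M\ge np/2$ for both and hence $\mupper=\frac{np}2-k$ in both cases.

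First, the constant term $C$ and the normalization. On $\mathcal B$ the summation range $k\le j\le \frac{np}2-k$ does not depend on the realization. The coefficients $\beta_0,\beta_1,B_{n,p,j+k}$ depend only on $(n,p,j,k,\pi)$, and $\E[f_1(h(X))],\E[f_2(h(X))]$ are population quantities. Hence $C$ takes the same value at $z$ and at $z'$ and cancels in $f(z)-f(z')$. Next I integrate out the adversary. Conditionally on $S$, the $i'$-th element of $A_j$ is $X_{i_j(S)+i'-r}$ with probability $\pi[r]$, exactly as in the proof of Theorem~\ref{prop:unbiasedestimatoroneji}. This gives
\[
f(z)-C=\frac{1}{(\mlens)\Sigma}\sum_{j=k}^{\frac{np}2-k}\,\sum_{i'=1}^k\sum_{r=1}^k r(j,i')\,\pi[r]\,f_2\bigl(h(X_{i_j(S)+i'-r})\bigr).
\]
By Lemma~\ref{lem:beta01approxsamplecomplexity}, $0\le r(j,i')\le 2p\,\pi[i']$. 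Each inner block (fixed $j$) is therefore bounded in absolute value by $2pF_2$, as in Lemma~\ref{lem:allattributionsetsupperboundobservation}.

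\emph{Feature change.} With the labels fixed, all $i_j(S)$ stay the same, so only the terms with $i_j(S)+i'-r=i$ change. For a given $j$ the offset $d=i-i_j(S)$ is fixed. The total weight that block $j$ puts on $X_i$ is therefore $\sum_{i'}r(j,i')\pi[i'-d]\le 2p\sum_{i'}\pi[i']\pi[i'-d]$. Distinct $j$ have distinct offsets $d$, so summing over $j$ gives at most $2p\sum_d\sum_{i'}\pi[i']\pi[i'-d]=2p(\sum\pi)^2=2p$. Each affected term changes by at most $2F_2$, so this step changes $f$ by at most $4pF_2/((\mlens)\Sigma)$.

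\emph{Label change.} Flipping $y_i$ inserts or removes one conversion. The conversions before position $i$ keep their index $j$, while all later ones shift from $j$ to $j\pm1$. I compare the two sums block by block after re-indexing, and bound three kinds of discrepancy:
\begin{itemize}
\item[(a)] The inserted or removed block itself costs at most $2pF_2$.
\item[(b)] The summation window $[k,\frac{np}2-k]$ gains one block at one end and loses one at the other, which costs at most $2\cdot 2pF_2$.
\item[(c)] A surviving conversion carries $r(j\pm1,\cdot)$ instead of $r(j,\cdot)$. By Lemma~\ref{lem:beta01approxsamplecomplexity}, $|r(j,i')-r(j\pm1,i')|=O(p\,\pi[i']e^{-\Omega(np)})$, up to the factor $\pi[i']^2/\beta_1^2$ which is also controlled. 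Summed over at most $np$ blocks, this costs $O(np\cdot pF_2e^{-\Omega(np)})\le 2pF_2$ for the constant in the $np$ condition large enough.
\end{itemize}
The main obstacle is (c), together with the bookkeeping of re-indexing near position $i$ itself, where a block's window may straddle the inserted position. I would handle the straddling blocks by bounding them crudely by $2pF_2$ each via the weight argument of the feature step. Collecting everything gives at most $12pF_2/((\mlens)\Sigma)$.

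Finally, if $k\le np/8$ then $\mlens\ge np/2-np/4=np/4$. Substituting this into the first bound gives $|f(z)-f(z')|\le 48F_2/(n\Sigma)$, which is the second claim.
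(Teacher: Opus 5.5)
Your proof is correct, and in the label-flip step it is more careful than the paper's. Both arguments split the change into a feature change (labels frozen) and a label change. The paper, however, treats flipping $y_i$ as merely adding or deleting the single set $A_{j_0}$, with every other set keeping its index $j$ and its coefficients $r(j,\cdot)$. It then bounds its three terms by $2pF_2$, $8pF_2$ and $2pF_2$, each divided by $(\frac{np}2-2k+1)\Sigma$; the $8pF_2$ for the feature change comes from the weight bound $4p$ in (\ref{eq:this_one3}). That accounting does not address the fact that inserting a conversion shifts the index of every later conversion. This moves blocks across the edges of the window $[k,\frac{np}2-k]$ and replaces $r(j,\cdot)$ by $r(j+1,\cdot)$.

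Your items (b) and (c) handle exactly these effects. You pay for them with the exact weight bound $2p$, which follows from $\sum_d\sum_{i'}\pi[i']\pi[i'-d]=(\sum_r\pi[r])^2=1$; the paper instead bounds the non-negative-offset and negative-offset halves of this double sum by $2p$ each. So you still reach the constant $12$. The only extra price is that absorbing (c) needs $np$ above an absolute constant. The hypothesis supplies this once its hidden constant is taken large, and Lemma~\ref{lem:beta01approxsamplecomplexity} tacitly needs it anyway.

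Two small fixes.

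First, the ``straddling'' blocks need no separate treatment. After averaging over the adversary, block $j$ equals $\sum_{i'}\sum_r r(j,i')\pi[r]f_2(h(X_{c_j+i'-r}))$, where $c_j$ is the position of the $j$-th conversion. With features frozen, this depends on the labels only through $(j,c_j)$. The exact difference is therefore:
\begin{itemize}
\item one entering block (the inserted one, or the old $(k-1)$-th conversion if the insertion index is below $k$);
\item minus the block leaving at $j=\frac{np}2-k$;
\item plus the drift sum.
\end{itemize}
So (a) and (b) together cost only $4pF_2$. Bounding up to $2k-1$ straddling blocks by $2pF_2$ each would actually have broken the constant.

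Second, because of the $(1-\pi[i'])$ term in $\beta_1$, the per-coordinate drift is $O(p\,\pi[i']e^{-\Omega(np)}+p^2e^{-\Omega(np)})$, not $O(p\,\pi[i']e^{-\Omega(np)})$. The resulting extra factor $k\le np$ is still absorbed by $e^{-\Omega(np)}$. Alternatively, $\beta_1(j,i')$ is decreasing in $j$, so the drift sum telescopes.
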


\begin{proof}
Recall 
\begin{align*}
f(Z_1, \ldots, Z_n) &= {\E}_{\mu | S}\Bigl[\widehat \ell_M(h)\Bigr] \\
&= 
\frac{1}{\big(\mlen\big)\,\Sigma} \sum_{i=1}^n f_2(h(X_i))\,{\E}_{\mu | S}\left[\sum_{j\,:\, k \le j \le \mupper \,,\, X_i \in A_j} r(j,j(i)) \right]   
+
C\,,
\end{align*}
is a function of i.i.d. random variables $Z_i = (X_i,Y_i)$, $i = 1, \ldots, n$.
Consider any $S$. For fixed $i \in [n]$ and $h$, we consider the sensitivity of $f$ as we change $Z_i = z_i = (x_i, y_i)$ to $Z'_i = z'_i = (x'_i,y'_i)$.

First, since $z,z' \in \mathcal{B}$, we have $\muppers = \frac{np}{2}-k$, so that $C$ turns to the constant
$$
C = 
- \frac{1}{\big(\mlen\big)\Sigma}\left(\sum_{j=k}^{\muppers}\frac{1}{B_{n, p, j+k}}\sum_{i=1}^k \frac{\pi[i]^2\,\beta_0(j,i)}{\beta_1(j,i) }\right)\,\E[f_2 (h(x))]  + \frac{\E[f_1 (h(x))]}{B_{n, p, j+k}}\,,
$$
thereby not contributing any sensitivity.

Consider any $j$ such that $k \le j \le \muppers$. Let us focus on random variable $j(i)$ in the conditional measure $\mu|S$ where $S$ is given. First, the number of distinct attribution sets that may contain $x_i$ is upper bounded by $2k-1$. This is because there are at most $2k-1$ conversions at distance $\leq k-1$ from $x_i$, corresponding to the labels $y_{i-(k-1)}, y_{i-(k-2)}, \ldots, y_{i-1}, y_i, y_{i+1},\ldots, y_{i+k-2}, y_{i+k-1}$ (if the indices are in bounds), which may or may not be one. Now, suppose $y_{i+b} = 1$ is the $j$th conversion, for some $0\leq b \leq k-1$. Then
\[
j(i) =
\begin{cases}
k-b &{\mbox{with prob. $\pi[k]$}}\\
k-b-1 &{\mbox{with prob. $\pi[k-1]$}}\\
\vdots     &\vdots\\
1 &{\mbox{with prob. $\pi[b+1]$}}\\
0      &{\mbox{with remaining prob.}}
\end{cases}
\]

so that
\[
{\E}_{\mu | S}\Bigl[r(j,j(i)) \Bigr] = \sum_{i'=b+1}^k \pi[i']\, r(j,i'-b)\,.
\]

Similarly, suppose $y_{i-b} = 1$ is the $j$th conversion, for some $1\leq b \leq k-1$. Then
\[
j(i) =
\begin{cases}
b+1 &{\mbox{with prob. $\pi[1]$}}\\
b+2 &{\mbox{with prob. $\pi[2]$}}\\
\vdots     &\vdots\\
k &{\mbox{with prob. $\pi[k-b]$}}\\
0      &{\mbox{with remaining prob.}}
\end{cases}
\]
with expectation
\[
{\E}_{\mu | S}\Bigl[r(j,j(i) \Bigr] = \sum_{i'=1}^{k-b} \pi[i']\, r(j,i'+b)\,.
\]
Note these computations only apply for $j$ such that $k \le j \le \muppers$.

Denote by
\[
0 \leq b^+(1) < b^+(2) < \ldots < b^+(r^+)\,, 
\qquad 
0 > -b^-(1) > -b^-(2) > \ldots > -b^-(r^-)
\]
the positive and negative offsets of the conversions at distance at most $k-1$ from $i$ within $S$, whose corresponding attribution set's index $j$ is such that $k \le j \le \muppers$. 
Hence both the $b^+(t)$ and the $b^-(t)$ are non-negative. Let the index $j$ of the conversion corresponding to each $b^+(t)$ be $j_{b^+(t)}$ for all $1 \le t \le r^+$, and similarly define $j_{b^-(t)}$ for all $1 \le t \le r^-$. Thus we have for all $1 \le t \le r^+$ that $k \le j_{b^+(t)} \le \muppers$, and similarly for all $1 \le t \le r^-$ that $k \le j_{b^-(t)} \le \muppers$. Hence we can write
\begin{align*}
&{\E}_{\mu | S}\left[\sum_{j\,:\, k \le j \le \muppers \,,\, x_i \in A_j} r(j,j(i))\right] \\
& \qquad = 
\sum_{i'=b^+(1)+1}^k \pi[i']\, r\Big(j_{b^+(1)},\,i'-b^+(1)\Big) + \ldots +
\sum_{i'=b^+(r^+)+1}^k \pi[i']\, r\Big(j_{b^+(r^+)},\,i'-b^+(r^+)\Big)\\
&\qquad \qquad+
\sum_{i'=1}^{k-b^-(1)} \pi[i']\, r\Big(j_{b^-(1)},\,i' + b^-(1)\Big) + \ldots +
\sum_{i'=1}^{k-b^-(r^-)} \pi[i']\, r\Big(j_{b^-(r^-)},\,i' + b^-(r^-)\Big)
\,.
\end{align*}

Note the change $x_i \rightarrow x'_i$ does not impact the structure of the attribution sets $A_j$.
Thus, the contribution from the change $x_i \rightarrow x'_i$ is upper bounded by
\[
\frac{2 F_2}{(\mlens)\,\Sigma}  {\E}_{\mu | S}\left[\sum_{j\,:\, k \le j \le \muppers \,,\, x_i \in A_j} r(j,j(i)) \right]\,. 
\]
Recall that for all $1 \le t \le r^+$ we have that $k \le j_{b^+(t)} \le \muppers$, and similarly for all $1 \le t \le r^-$ we have that $k \le j_{b^-(t)} \le \muppers$. By our condition on $n$ and as $k \le j_{b^+(t)}, j_{b^-(t)} \le 
\muppers \le \frac{np}2-k$, Lemma \ref{lem:beta01approxsamplecomplexity} gives $\beta_1(j_{b^+(t)},i' - b^+(t)) \geq \frac{\pi[i' - b^+(t)]}{2p}$ and hence $r(j_{b^+(t)},i' - b^+(t)) \leq 2p \cdot \pi[i' - b^+(t)]$. Similarly we have $r(j_{b^-(t)}, i' + b^-(t)) \leq 2p \cdot \pi[i' + b^-(t)]$. Therefore as the entries of $\pi$ are all non-negative, we have 
\begin{align}
&{\E}_{\mu | S}\left[\sum_{j\,:\, k \le j \le \muppers \,,\, x_i \in A_j} r(j,j(i))\right] \notag \\
& \qquad = \sum_{t=1}^{r^+} \sum_{i'=b^+(t)+1}^k \pi[i']\, r\Big(j_{b^+(t)},\,i'-b^+(t)\Big) +
\sum_{t=1}^{r^-} \sum_{i'=1}^{k-b^-(t)} \pi[i']\, r\Big(j_{b^-(t)},\, i' + b^-(t)\Big) \notag \\
& \qquad \leq 
2p \sum_{t=1}^{r^+} \sum_{i'=b^+(t)+1}^k \pi[i'] \, \pi[i'-b^+(t)] +
2p \sum_{t=1}^{r^-} \sum_{i'=1}^{k-b^-(t)} \pi[i']\, \pi[i'+b^-(t)] \notag \\
&\qquad \leq 2p \sum_{b=0}^{k-1} \sum_{i'=b+1}^k \pi[i'] \pi[i'-b] + 2p \sum_{b=1}^{k-1} \sum_{i'=1}^{k-b} \pi[i'] \pi[i'+b]\, \notag \\
&\qquad \leq
2p\,\sum_{i'=1}^{k} \pi[i']\,\sum_{b=1}^{k}  \pi[b] + 
2p\,\sum_{i'=1}^{k-1} \pi[i]\,\sum_{b=1}^{k}  \pi[b]\notag\\
&\qquad =
2p\,\sum_{i'=1}^{k} \pi[i'] + 
2p\,\sum_{i'=1}^{k-1} \pi[i']\notag\\
&\qquad \leq 
4p\,.\label{eq:this_one3}
\end{align}
On the other hand, the contribution from the change $y_i \rightarrow y'_i$ amounts to either adding (if $y_i = 0$ and $y'_i = 1$) or subtracting (if $y_i = 1$ and $y'_i = 0$) an attribution set $A_j$ (the one 
associated with
$y_i$). Therefore this contribution amounts to, respectively, adding or subtracting a term $j$ from the sum $\sum_{j\,:\, k \le j \le \muppers \,,\, x_i \in A_j}$ within ${\E}_{\mu | S}\left[\sum_{j\,:\, k \le j \le \muppers \,,\, x_i \in A_j} r(j,j(i)) \right]$ for each data index $i$. 
Moreover, this does not change the other attribution sets, which are generated independently for each conversion. Note if $y'_i = y_i$ then we do not need to take this (non-negative) change into account, so the following bound will hold irrespective of whether $y'_i = y_i$ or $y'_i \neq y_i$. 

The idea is that even for this new $j$, as $k \le j \le \muppers$, by Lemma \ref{lem:allattributionsetsupperboundobservation} we have
\begin{align*}
\Bigl|\, \sum_{i=1}^n f_2(h(x_i))\, r(j,j(i))\, \ind\{x_i \in A_j\}\, \Bigr|
&\le 
F_2 ~\Bigl|~ \sum_{i=1}^n r(j,j(i))~ \ind\{x_i \in A_j\} ~\Bigr| \\
&\le 
F_2 ~\Bigl| ~\sum_{b=1}^k r(j,b)~\Bigr|\\
&\leq 
2p\,F_2\,,
\end{align*}
allowing us to control the sensitivity.

To see this, denote this new $j$ by $j_0$ and suppose that $y_i = 0, y'_i =1$ (thus the term corresponding to $j_0$ is {\em added}). Since we are considering $j$ in the range $[k, \muppers]$, we have $k \le j_0 \le \muppers$. A very similar argument holds in the case where $y_i = 1, y'_i =0$ (in which case the term corresponding to $j_0$ is {\em subtracted}). The above derivations allow us to conclude that the sensitivity of $f$ when turning $z_i$ into $z'_i$ is upper bounded by 
\begin{align*}
&\Biggl|\,\frac{1}{(\mlens) \,\Sigma} \sum_{s=1\,s\neq i}^n f_2(h(x_s))\,\Biggl({\E}_{\mu | S}\left[\sum_{j\,:\, k \le j \le \muppers \,,\, x_s \in A_j} r(j,j(s)) \right]  + {\E}_{\mu | S}\Bigl[\ind\{x_s \in A_{j_0}\} r(j_0,j_0(s)) \Bigr]\Biggl)\\ 
&\qquad+
\frac{1}{(\mlens)\Sigma}\,f_2(h(x'_i))\,\Biggl({\E}_{\mu | S}\left[\sum_{j\,:\, k \le j \le \muppers \,,\, x_i \in A_j} r(j,j(i)) \right]  + {\E}_{\mu | S}\Bigl[\ind\{x'_i \in A_{j_0}\} r(j_0,j_0(i)) \Bigr]\Biggl)\\
&\qquad-
\frac{1}{(\mlens)\,\Sigma} \sum_{s=1\,s\neq i}^n f_2(h(x_s))\,{\E}_{\mu | S}\Biggl[\sum_{j\,:\, k \le j \le \muppers \,,\, x_s \in A_j} r(j,j(s)) \Biggr] \\
&\qquad -
\frac{1}{(\mlens)\,\Sigma}\,f_2(h(x_i))\,{\E}_{\mu | S}\left[\sum_{j\,:\, k \le j \le \muppers \,,\, x_i \in A_j} r(j,j(i)) \right] \,\Biggl|\\
&=
\Biggl|\,\frac{1}{(\mlens)\,\Sigma} \sum_{s=1\,s\neq i}^n f_2(h(x_s))\, {\E}_{\mu | S}\Bigl[\ind\{x_s \in A_{j_0}\} r(j_0,j_0(s)) \Bigr]\\ 
&\qquad+
\frac{1}{\big(\mlen\big)\,\Sigma}\,f_2(h(x'_i))\,\Biggl({\E}_{\mu | S}\left[\sum_{j\,:\, k \le j \le \muppers \,,\, x_i \in A_j} r(j,j(i)) \right]  + {\E}_{\mu | S}\Bigl[\ind\{x_i \in A_{j_0}\} r(j_0,j_0(i)) \Bigr]\Biggl)\\
&\qquad-
\frac{1}{(\mlens)\,\Sigma}\,f_2(h(x_i))\,{\E}_{\mu | S}\left[\sum_{j\,:\, k \le j \le \muppers \,,\, x_i \in A_j} r(j,j(i)) \right] \,\Biggl|\,.
\end{align*}
In turn the above is upper bounded by
\begin{align*}
&\underbrace{\frac{1}{(\mlens)\,\Sigma}\,\Biggl|\, \sum_{s=1\,s\neq i}^n f_2(h(x_s))\, {\E}_{\mu | S}\Bigl[\ind\{x_s \in A_{j_0}\} r(j_0,j_0(s)) \Bigr]\,\Biggl|}_{\expressionI}\\ 
&\qquad+
\underbrace{\frac{1}{(\mlens)\,\Sigma}\,\Biggl|\,  f_2(h(x'_i)) - f_2(h(x_i))\Bigl|\, \cdot\,\Biggl|\,{\E}_{\mu | S}\left[\sum_{j\,:\, k \le j \le \muppers \,,\, x_i \in A_j} r(j,j(i)) \right] \Biggl|}_{\expressionII} \\
&\qquad+
\underbrace{\frac{1}{(\mlens)\,\Sigma}\,\Biggl|f_2(h(x'_i)) {\E}_{\mu | S}\Bigl[\ind\{x'_i \in A_{j_0}\} r(j_0,j_0(i)) \Bigr] \Biggl|\,.}_{\expressionIII}    
\end{align*}

Now, note as the terms $\ind\{x_s \in A_{j_0}\} r(j_0,j_0(s))$ are non-negative, we have by the same rationale as in the proof of Lemma \ref{lem:allattributionsetsupperboundobservation} that
\begin{align*}
\expressionI &= \frac{1}{(\mlens)\,\Sigma}\,\Biggl|\, {\E}_{\mu | S}\left[ \sum_{s=1\,s\neq i}^n f_2(h(x_s))\, \ind\{x_s \in A_{j_0}\} r(j_0,j_0(s)) \right]\,\Biggl|\, \\
&\le \frac1{(\mlens)\, \Sigma} {\E}_{\mu | S} \Biggl[ \sum_{s=1\,s \neq i}^n \Bigl| f_2(h(x_s)) \Bigr| \cdot \ind\{x_s \in A_{j_0}\} r(j_0,j_0(s)) \Biggr] \\
&\le \frac{F_2}{(\mlens)\Sigma} {\E}_{\mu | S} \Biggl[ \sum_{s=1}^n  \ind\{x_s \in A_{j_0}\} r(j_0,j_0(s)) \Biggr]\\
&\leq \frac{2p\,F_2}{(\mlens)\,\Sigma}\,.
\end{align*}
In particular, this is because $\ind\{x_s \in A_{j_0}\}=1$ for at most $k$ indices $s$, and for these $s$, the indices $j_0(s)$ must all be distinct.

Moreover, because of (\ref{eq:this_one3}),
\[
\expressionII \leq \frac{8p\,F_2}{(\mlens)\,\Sigma}\,.
\]
As for $\expressionIII$, we leverage again
our condition on $n$ and that $k \le j_0 \le \muppers$. Combining with Lemma \ref{lem:beta01approxsamplecomplexity} this gives us the upper bound 
$r(j_0, j_0(i)) \le 2p \pi[i] \leq 2p$, so that
\[
\expressionIII \leq \frac{2p\,F_2}{(\mlens) \Sigma}\,.
\]
Putting these bounds together, this yields the sensitivity bound
\[
\frac{12p\,F_2}{(\mlens)\,\Sigma}\,,
\]
holding for all $S$ and $h \in \hyps$. This completes the proof.
\end{proof}

Now observe that the sensitivity bound from Lemma \ref{lem:allattrsetconditionalexpectationconcentration_sensitivity} depends on the condition $\frac{np}{2} \leq M$, which involves random variable $M$.
Thus we use the following variant of McDiarmid's inequality from \cite{combes2024extension}:
\begin{theorem}[Theorem 3 and Example 3 of \cite{combes2024extension}]\label{prop:mcdiarmids_whp}
Consider a generic metric space $\cX$. Let $g(z_1, \ldots, z_n)$ be a function of $n$ i.i.d. random variables $z_1, \ldots, z_n$ with $z_i \in \cX$, that satisfies the following property. For some subset $\mathcal{B} \subseteq \cX^n$ and any pair of $z = (z_1, \ldots, z_{i-1}, z_i, z_{i+1}, \ldots z_n)$ and $z' = (z_1, \ldots, z_{i-1}, z'_i, z_{i+1}, \ldots z_n)$ that only differ in their $i$-th coordinate with $z, z' \in \mathcal{B}$, we have $|g(z) - g(z')| \le c_i$. Let $\rho = \PP(\mathcal{B}^c)$ and $\overline{c} = \sum_{i=1}^n c_i$. Then
\[ 
\PP \Bigl( g(z) - \E\bigl[ g(z)\, \bigl|\, z \in \mathcal{B}\bigl] \ge t + \rho \overline{c} \Bigr) \le \rho + \exp\left( -\frac{2t^2}{\sum_{i=1}^n c_i^2} \right)\,. 
\]
\end{theorem}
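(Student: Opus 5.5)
The plan is to reduce to the classical McDiarmid inequality by replacing $g$ with a function $\widetilde g$ defined on all of $\cX^n$. The replacement should satisfy three properties: (a) it has the bounded-difference property with constants $c_i$ everywhere; (b) $\widetilde g \ge g$ on $\mathcal{B}$; (c) $\E[\widetilde g] \le \E[g\,|\,z\in\mathcal{B}] + \rho\,\overline c$. Given these, the conclusion follows from
\[
\PP\big(g - \E[g|\mathcal{B}] \ge t+\rho\overline c\big) \le \PP(\mathcal{B}^c) + \PP\big(\widetilde g - \E\widetilde g \ge t\big) \le \rho + \exp\!\Big(-\tfrac{2t^2}{\sum_i c_i^2}\Big).
\]
The first inequality splits on $\mathcal{B}$ and uses (b) and (c). The second is standard McDiarmid applied to $\widetilde g$ via (a).

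For the construction I would use the weighted Hamming distance $d_c(z,y)=\sum_i c_i\ind\{z_i\neq y_i\}$ and the majorant extension
\[
\widetilde g(z) = \sup_{y\in\mathcal{B}} \big(g(y) - d_c(z,y)\big).
\]
Property (a) is immediate: changing $z_i$ changes every $d_c(z,y)$ by at most $c_i$, so it changes the supremum by at most $c_i$. This needs $g$ bounded on $\mathcal{B}$; otherwise I would truncate and pass to a limit. For (c), I write $\E\widetilde g = \PP(\mathcal{B})\E[\widetilde g|\mathcal{B}] + \rho\,\E[\widetilde g|\mathcal{B}^c]$. Once $\widetilde g = g$ on $\mathcal{B}$ is available, the bounded-difference property lets me compare the values of $\widetilde g$ on $\mathcal{B}^c$ to its values on $\mathcal{B}$ through a coupling. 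Concretely, I would take an independent copy $z'$, condition on $z'\in\mathcal{B}$, and note that $\widetilde g(z)\le \widetilde g(z') + d_c(z,z') \le \widetilde g(z')+\overline c$. Averaging over $z'\sim \cdot\,|\,\mathcal{B}$ gives $\E[\widetilde g|\mathcal{B}^c]\le \E[g|\mathcal{B}]+\overline c$, and combining terms yields (c).

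The main obstacle is (b) together with the identity $\widetilde g = g$ on $\mathcal{B}$. The inequality $\widetilde g\ge g$ on $\mathcal{B}$ is trivial (take $y=z$). The reverse, $\widetilde g\le g$ on $\mathcal{B}$, requires $g(y)-g(z)\le d_c(z,y)$ for all $y,z\in\mathcal{B}$, i.e. a global Lipschitz property. The hypothesis only gives this for pairs in $\mathcal{B}$ differing in a single coordinate. My first attempt is a path argument: for $y,z\in\mathcal{B}$, change the differing coordinates one at a time and telescope the single-coordinate bounds. This works whenever some ordering of the coordinate changes keeps every intermediate point in $\mathcal{B}$, which holds for instance for monotone sets like $\{M\ge np/2\}$. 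For general $\mathcal{B}$, I would replace $d_c$ by the intrinsic path distance $d_{\mathcal{B}}$: the shortest $c$-weighted path inside $\mathcal{B}$, set to $+\infty$ between different path-components. By the hypothesis, $g$ is $1$-Lipschitz for $d_{\mathcal{B}}$.

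I would then handle disconnected pieces separately. Points of $\mathcal{B}$ that cannot be joined to each other inside $\mathcal{B}$ have $d_{\mathcal{B}}=\infty$ but $d_c\le\overline c$. For such a pair $y,z$, the defect $g(y)-g(z)-d_c(z,y)$ is exactly what could make $\widetilde g > g$ on $\mathcal{B}$. I would try to show that the mean-side argument in (c) tolerates this. Since only the upper deviation of $g$ on $\mathcal{B}$ matters, I could redefine the decomposition so that $\widetilde g\ge g$ still holds, and bound the excess $\E[(\widetilde g - g)\ind_{\mathcal{B}}]$ by the same coupling through $\mathcal{B}^c$, paying at most $\rho\overline c$. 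This is the step I am least certain of. If it fails, my fallback is to verify that the pairwise condition implies the needed Lipschitz bound for the specific product-structured sets used in Lemma~\ref{lem:allattrsetconditionalexpectationconcentration_sensitivity}, and to cite \citet{combes2024extension} for the general case.
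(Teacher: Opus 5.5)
The paper gives no proof of this statement; it imports it from \citet{combes2024extension}, whose proof is the extension argument you outline: extend $g$ off $\mathcal{B}$, apply McDiarmid, and pay $\rho\overline{c}$ to pass from $\E[\widetilde g]$ to $\E[g\mid z\in\mathcal{B}]$. Your steps (a)--(c) and the final chain of inequalities are correct for $t\ge 0$, which the statement tacitly needs, provided $g$ is $d_c$-Lipschitz on all of $\mathcal{B}$, i.e.\ $|g(y)-g(z)|\le\sum_i c_i\ind\{y_i\neq z_i\}$ for all $y,z\in\mathcal{B}$. In that case $g$ has range at most $\overline{c}$ on $\mathcal{B}$, so no truncation is needed.

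\textbf{The gap.} The genuine gap is your last paragraph, and it cannot be closed. With only the single-coordinate hypothesis, the statement as written is false, so no accounting of the excess $\E[(\widetilde g-g)\ind\{z\in\mathcal{B}\}]$ against $\rho\overline{c}$ can succeed. A counterexample:
\begin{itemize}
\item Take $n=2$, $z_1,z_2$ i.i.d.\ uniform on $\{0,1\}$, $\mathcal{B}=\{(0,0),(1,1)\}$, $c_1=c_2=1$, $g(0,0)=0$, and $g=6$ at the other three points.
\item No two points of $\mathcal{B}$ differ in exactly one coordinate, so the hypothesis holds vacuously.
\item Here $\rho=1/2$, $\rho\overline{c}=1$ and $\E[g\mid z\in\mathcal{B}]=3$. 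With $t=2$ the left-hand side is $\PP(g\ge 6)=3/4$, while the right-hand side is $1/2+e^{-4}<0.52$.
\end{itemize}
Small $\rho$ does not rescue it. On $\{0,1\}^n$ with $n$ even, remove the middle Hamming layer from $\mathcal{B}$, and set $g=0$ below it and $g=L$ on and above it. The hypothesis again holds for any $c_i$ and $\rho=\binom{n}{n/2}2^{-n}\to 0$, yet the inequality still fails. Your $\widetilde g$ is at least $L-\overline{c}$ on the whole lower half, so the excess is at least $(L-\overline{c})(1-\rho)/2$, not $O(\rho\overline{c})$.

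The obstruction is structural:
\begin{itemize}
\item Any $\widetilde g$ with bounded differences $c_i$ everywhere that agrees with $g$ on $\mathcal{B}$ forces $g$ to be $d_c$-Lipschitz on $\mathcal{B}$.
\item A $\widetilde g$ that merely dominates $g$ on $\mathcal{B}$ loses (c).
\item The intrinsic distance $d_{\mathcal{B}}$ does not help, because McDiarmid needs bounded differences for $d_c$ on all of $\cX^n$.
\end{itemize}

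\textbf{The fix.} The statement must be read with the all-pairs hypothesis. That is what the extension argument of \citet{combes2024extension} uses: Lipschitz continuity of $g$ on $\mathcal{B}$ for the metric, here the weighted Hamming metric $d_c$. So citing that paper for general $\mathcal{B}$ is not a valid fallback.

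What the paper actually needs is that this hypothesis holds for $\mathcal{B}=\{M\ge np/2\}$, given the single-coordinate bound of Lemma~\ref{lem:allattrsetconditionalexpectationconcentration_sensitivity}. Your path argument supplies this. For $z,z'\in\mathcal{B}$:
\begin{itemize}
\item First overwrite the differing coordinates $i$ with $y'_i=1$. During this phase $M$ never decreases, so it stays at least $M(z)\ge np/2$.
\item Then overwrite those with $y'_i=0$. During this phase $M$ decreases monotonically down to $M(z')\ge np/2$.
\end{itemize}
Every intermediate point therefore lies in $\mathcal{B}$. Telescoping the single-coordinate bounds gives $|g(z)-g(z')|\le\sum_i c_i\ind\{z_i\neq z'_i\}$. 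With that step stated as an explicit lemma, your proof is complete for the case the paper uses.
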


Before moving forward, we need to introduce a simple lemma which is a direct consequence of the contraction lemma for Rademacher Complexity (e.g., \cite{lt11}).
\begin{lemma}\label{lem:multiplyconstantrademachercomplexity}
For a hypothesis space $\hyps$ and any fixed sample $z=\{z_1, \ldots, z_n\}$, for any $a_1, \ldots, a_n \in \R$ independent of $\sigma_i$ but potentially depending on the samples $z_i$, letting $\sigma_i$ be i.i.d. Rademacher random variables we have
\begin{align*}
{\E}_{\sigma} \left[\sup_{h \in \hyps}\,\Big|\sum_{i=1}^n \sigma_i a_i h(z_i) \Big| \,\,\Biggl|\,z \right] 
\le 
\Big( \max_i \big| a_i \big| \Big) {\E}_{\sigma} \left[\sup_{h \in \hyps}\,\Big|\sum_{i=1}^n \sigma_i  h(z_i)\Big| \,\,\Biggl|\,z \right]\,.
\end{align*}
\end{lemma}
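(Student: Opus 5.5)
The plan is to reduce the statement to the classical contraction principle for Rademacher averages. Because the sample $z$ is fixed throughout the conditional expectation over $\sigma$, the coefficients $a_i$ are deterministic constants. Write $a^\ast := \max_i |a_i|$. If $a^\ast = 0$ both sides vanish, so assume $a^\ast>0$.

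First, rewrite the left-hand side as
\[
{\E}_{\sigma}\Big[\sup_{h\in\hyps}\Big|\sum_{i=1}^n \sigma_i \phi_i(h(z_i))\Big|\,\Big|\,z\Big]
\qquad\text{with}\qquad
\phi_i(u) := \frac{a_i}{a^\ast}\,u ,
\]
after pulling out the factor $a^\ast$. Each $\phi_i$ is a linear map with Lipschitz constant $|a_i|/a^\ast \le 1$, and $\phi_i(0)=0$.

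Second, invoke the contraction (comparison) inequality of Ledoux--Talagrand \citep{lt11}, in its version for the absolute value of the supremum. For contractions $\phi_i$ vanishing at $0$, this version gives
\[
\E\sup|\textstyle\sum\sigma_i\phi_i(t_i)| \le 2\,\E\sup|\textstyle\sum\sigma_i t_i| .
\]
The unwanted factor $2$ is the issue to handle.

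To remove it, I would use that each $\phi_i$ is linear, so no general contraction argument is needed. Since $\sigma_i$ is symmetric, $\sigma_i\,\mathrm{sign}(a_i)$ has the same law as $\sigma_i$, and these variables are jointly independent across $i$. We may therefore assume $a_i\in[0,a^\ast]$, i.e.\ $c_i:=a_i/a^\ast\in[0,1]$.

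Next, I would use the fact that the map
\[
(c_1,\dots,c_n)\mapsto {\E}_{\sigma}\sup_h\Big|\sum_i \sigma_i c_i h(z_i)\Big|
\]
is convex in each coordinate $c_i$: for fixed $\sigma$ it is a supremum of absolute values of affine functions of $c_i$, hence convex, and taking the expectation preserves convexity. A convex function on $[0,1]$ attains its maximum at an endpoint, so the map is maximized at a vertex $c\in\{0,1\}^n$.

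It remains to check the vertices. At $c_i=1$ the term matches the right-hand side term. At $c_i=0$ the term is dropped, and dropping a term does not increase the expected supremum: by Jensen's inequality, conditioning on the other $\sigma$'s and averaging over $\sigma_i$,
\[
\sup_h\Big|\textstyle\sum_{l\ne i}\sigma_l h(z_l)\Big| \le {\E}_{\sigma_i}\sup_h\Big|\textstyle\sum_l\sigma_l h(z_l)\Big| .
\]
Hence every vertex value is at most the $c\equiv 1$ value, which gives the claim with constant exactly $a^\ast$.

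The main subtlety I expect is this factor-$2$ issue. The coordinatewise convexity argument above is the one I would try first, since it uses only symmetry and Jensen and so avoids the general Ledoux--Talagrand constant. Measurability of the supremum over $\hyps$ is assumed, as is standard in the paper.
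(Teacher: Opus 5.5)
Your argument is correct, and it is more careful than the paper's. The paper simply invokes the Ledoux--Talagrand contraction lemma. It observes that each $\phi_i(x)=a_i x$ is $\max_i|a_i|$-Lipschitz, and that the lemma may be applied with the sample $z$ frozen, so the $a_i$ may depend on $z$. It does not address the issue you raise: with the absolute value inside the supremum, the general comparison theorem for nonlinear contractions only gives the bound up to a factor $2$. Getting the constant exactly $\max_i|a_i|$ requires using linearity. One way is to apply the absolute-value-free form of Talagrand's lemma to the symmetrized class $\hyps\cup(-\hyps)$, since $\sup_{h\in\hyps}|\sum_i\sigma_i a_i h(z_i)|=\sup_{g\in\hyps\cup(-\hyps)}\sum_i\sigma_i a_i g(z_i)$.

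Your route instead reproves Kahane's contraction principle (see \citet{lt11}, Chapter~4) for the vectors $(h(z_i))_{h\in\hyps}$. You use sign symmetrization, convexity of $c\mapsto\E_\sigma\sup_h|\sum_i\sigma_i c_i h(z_i)|$, reduction to the vertices of the cube, and Jensen to discard the zero coordinates; each step is valid. This buys you a self-contained proof with exactly the stated constant. It gives up generality, since it uses linearity of the $\phi_i$ essentially. The paper's one-line citation formally covers Lipschitz maps, at the price of an implicit factor $2$; this is harmless downstream, where it would only double the constant $32$ in the bound on $\E[\Phi(Z)]$.

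One simplification: work directly on the cube $[-1,1]^n$. Its vertices are $\{\pm1\}^n$, and all of them give the same value by symmetry of the $\sigma_i$, so both the sign reduction and the Jensen step become unnecessary.
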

\begin{proof}
For each $1 \le i \le n$, consider the function $\phi_i(x) = a_i x$. Thus $a_i h(z_i) = \phi_i\big(h(z_i)\big)$. Each $\phi_i$ is $\big|a_i\big|$ Lipschitz, and in particular $\max_i \big| a_i \big| $-Lipschitz. Thus the Lemma follows by the Contraction Lemma of Rademacher Complexity. 
Note the functions used in the Contraction Lemma may depend on the sample $z$, since the entire proof of the Contraction Lemma is done with the sample $z$ fixed.
\end{proof}

We are now in position to prove Theorem \ref{thm:allattributionsetssamplecomplexity}. \\

\begin{proof}[Proof of Theorem \ref{thm:allattributionsetssamplecomplexity}]
%
%
From standard arguments, and the fact that, from Theorem \ref{prop:unbiasedestimatoroneji},
\(
{\E}_\mu[\widehat \ell_M(h)] = \popl(h)
\)
we can write, for any $\epsilon \geq 0$,
\begin{align*}
\ind\Bigl\{\regret(\wh) \geq \epsilon\Bigl\}
&\leq
\ind\Biggl\{ 2\,\sup_{ h \in \hyps} \Bigl| \widehat \ell_M(h) - \popl(h) \Bigl| \geq \epsilon
\Biggl\} 
\\
&\leq 
\ind\Bigl\{\sup_{ h \in \hyps}\, \Bigl| {\E}_{\mu|S}[ \widehat \ell_M(h) ] - \popl(h) \Bigl| \geq \epsilon/4 \Bigl\} 
+  
\ind\Bigl\{\sup_{ h \in \hyps}\,\Bigl| \widehat \ell_M(h) - {\E}_{\mu|S}[ \widehat \ell_M(h) ] \Bigl| \geq \epsilon/4 \Bigl\}\,.
\end{align*}
We take expectations w.r.t. $\mu$ and consider in turn
\begin{equation}\label{e:term1}
{\E}_\mu\left[\ind\Bigl\{\sup_{ h \in \hyps}\, \Bigl| {\E}_{\mu|S}[ \widehat \ell_M(h) ] - \popl(h) \Bigl| \geq \epsilon/4 \Bigl\}\right] 
= 
{\PP}_\mu\left(\sup_{ h \in \hyps}\, \Bigl| {\E}_{\mu|S}[ \widehat \ell_M(h) ] - \popl(h) \Bigl| \geq \epsilon/4 \right)
\,,
\end{equation}
and
\begin{equation}\label{e:term2}
{\E}_\mu\left[\ind\Bigl\{\sup_{ h \in \hyps}\,\Bigl| \widehat \ell_M(h) - {\E}_{\mu|S}[ \widehat \ell_M(h) ] \Bigl| \geq \epsilon/4 \Bigl\}\right]
=
{\PP}_\mu\left(\sup_{ h \in \hyps}\,\Bigl| \widehat \ell_M(h) - {\E}_{\mu|S}[ \widehat \ell_M(h) ] \Bigl| \geq \epsilon/4\right)
\,.
\end{equation}
\paragraph{Controlling (\ref{e:term1}):} We deal first with (\ref{e:term1}). Recall that
\begin{align*}
f(z_1, \ldots, z_n) 
&= f(z_1, \ldots, z_n; h ) 
= 
{\E}_{\mu | S}\Bigl[\widehat \ell_M(h)\Bigr] \\
&= 
\frac{1}{(\mlen)\,\Sigma} \sum_{i=1}^n f_2(h(x_i))\,{\E}_{\mu | S}\left[\sum_{j\,:\, k \le j \le \muppers \,,\, x_i \in A_j} r(j,j(i)) \right]   
+
C\,.
\end{align*}
%
The function $f(Z_1, \ldots, Z_n)$
is a function of i.i.d. random variables $Z_i = (X_i,Y_i)$, $i = 1,\ldots,n$. Let $Z = (Z_1,\ldots, Z_n)$. We also know that, for each $h\in \hyps$, 
$$
{\E}_Z[f(Z; h)] = \popl(h)\,.
$$

Focus on the random variable
$
\gamma_i = {\E}_{\mu | S}\left[\sum_{j\,:\, k \le j \le \muppers \,,\, x_i \in A_j} r(j,j(i)) \right]\,.
$
A closer inspection reveals that this random variable does depend on the sample $Z=z$ only through the labels $y_1,\ldots, y_n$. In particular, $\gamma_i$ depends on index $i$, but does not directly depend on $x_i$ (note that $A_j$ is a set of {\em indices}, so the condition ``$x_i \in A_j$'' within the brackets should be interpreted as ``$i \in A_j$, for frozen value of $x_i$''), and certainly it does not depend on $h$. So, let us adopt the notation $y = \langle y_1,\ldots, y_n\rangle$, and
\[
\gamma_i(y) = {\E}_{\mu | S}\left[\sum_{j\,:\, k \le j \le \muppers \,,\, x_i \in A_j} r(j,j(i)) \right]\,,
\]
so that now
\[
f(z; h ) 
= 
\frac{1}{(\mlen)\,\Sigma} \sum_{i=1}^n \gamma_i(y)\,f_2(h(x_i))   
+
C\,.
\]
Let $z^{(i)}$ be the same as $z$, with the $i$-th item replaced by $z_i' = (x_i',y_i')$, and introduce independent Rademacher variables $\sigma = (\sigma_1,\ldots, \sigma_n)$. Define
\[
\Phi(z) = \sup_{h \in \hyps} \Bigl| \popl(h)- f(z;h)\Bigl| \,.
\]
We have 
\[
\Phi(z) - \Phi(z^{(i)}) 
\leq  
\sup_{h \in \hyps} \Bigl| f(z^{(i)};h) - f(z;h)\Bigl| 
\leq 
\frac{12 p F_2}{(\mlen) \Sigma} \le \frac{48 F_2}{n \Sigma}\,,
\]
the second inequality deriving from Lemma \ref{lem:allattrsetconditionalexpectationconcentration_sensitivity}. 

We apply Theorem \ref{prop:mcdiarmids_whp} with $g(z) = \Phi(z)$, $\cB = \Big\{ z\,:\, \frac{np}2 \le M \Big\}$, and $c_i = \frac{48 F_2}{n \Sigma}$ for all $1 \le i \le n$. By Chernoff's bound, $\PP(\cB^c) \le e^{-\Omega(np)}$. 
Theorem \ref{prop:mcdiarmids_whp} now gives 
\begin{align}
\PP \Bigl( \Phi(Z) - \E[ \Phi(Z)\,|\,Z \in \mathcal{B}] \ge t +  \frac{48F_2}{\Sigma} e^{-\Omega(np)} \Bigr) \le e^{-\Omega(np)} + \exp\Bigl( - \frac{2n\Sigma^2}{2304 F_2^2} \cdot t^2 \Bigr)\,.\label{eq:concentrationPhi}
\end{align}
We now control $\E[ \Phi(Z)\,|\,Z \in \mathcal{B}] - \E[\Phi(Z)]$. By the Law of Total Expectation,
\begin{align*}
&\big| \E[ \Phi(Z)\,|\,Z \in \mathcal{B}] - \E[\Phi(Z)] \big| \\
&\qquad = \big| \PP(\cB) \E[\Phi(Z)|Z \in \cB] + \PP(\cB^c) \E[\Phi(Z) | Z \in \cB^c] - \E[\Phi(Z) | Z \in \cB] \big| \\
&\qquad = \PP(\cB^c) \big| \E[\Phi(Z) | Z \in \cB^c] - \E[\Phi(Z)|Z \in \cB] \big| \\
&\qquad \le 2\PP(\cB^c) \sup_{z \in Z} |\Phi(z)|\,.
\end{align*}
By definition of $\Phi(z) = \sup_{h \in \hyps} \Bigl| \popl(h)- f(z;h)\Bigl|$, since the additive term $+C$ in the definition $\popl(h)$ and $f(z;h)$ cancel, we obtain
\begin{align*}
|\Phi(z)| &\le 2\sup_{h \in \hyps} \Bigg| {\E}_{\mu|S}\Bigg[ \frac{1}{(\mlen)\,\Sigma} \sum_{i=1}^n \gamma_i(y)\,f_2(h(x_i)) \Bigg] \Bigg| \\
&= 2\sup_{h \in \hyps} \Bigg|{\E}_{\mu|S} \left[ ~\sum_{j=k}^{\muppers} \frac{1}{\big(\mlen\big) \Sigma} ~ \sum_{i=1}^n f_2(h(x_i))~ r(j, j(i)) ~\ind\{x_i \in A_j\}~ \right]\Bigg|\,,
\end{align*}
where the last step follows from swapping the order of summation.
Thus Lemma \ref{lem:allattributionsetsupperboundobservation} gives $|\Phi(z)| \le \frac{4pF_2}{\Sigma}$.
Hence by Chernoff's bound we obtain from the above that
\begin{equation}\label{eq:allattrsetconditionalexpectationconcentration_eq2}
\big| \E[ \Phi(Z)\,|\,Z \in \mathcal{B}] - \E[\Phi(Z)] \big| \le \frac{8pF_2}{\Sigma} e^{-\Omega(np)}\,.    
\end{equation}
Combining (\ref{eq:concentrationPhi}) and (\ref{eq:allattrsetconditionalexpectationconcentration_eq2}) gives
\begin{align}
\PP \Bigl( \Phi(z) - \E[ \Phi(z) ] \ge t +  \frac{40 F_2}{\Sigma} e^{-\Omega(np)} \Bigr) \le e^{-\Omega(np)} + \exp\left( - \frac{n\Sigma^2}{1152 F_2^2} \cdot t^2\right)\,.\label{eq:bound_on_term1}
\end{align}
Now, let $Z' = (Z'_1, \ldots, Z'_n)$ be an independent sample, with $Z_i' = (X_i',Y_i')$, $i \in [n]$, and set $Y' = \langle Y'_1,\ldots,Y'_n\rangle$. We can write

%
\begin{align*}
\E[\Phi(Z)] 
&= 
{\E}_Z \left[\sup_{h \in \hyps} \Bigl| \popl(h)- f(Z;h)\Bigl|  \right]   \\
&=
{\E}_Z \left[\sup_{h \in \hyps} \Bigl|{\E}_{Z'} [f(Z';h)] - f(Z;h)\Bigl|  \right] \\
&\leq
{\E}_{Z,Z'} \left[\sup_{h \in \hyps} \Bigl| f(Z';h) - f(Z;h)\Bigl|  \right] \\
&=
\frac{1}{(\mlen)\,\Sigma}\,{\E}_{Z,Z'} \left[\sup_{h \in \hyps}\Big|\sum_{i=1}^n  \Bigl(\gamma_i(Y') f_2(h(X'_i)) - \gamma_i(Y) f_2(h(X_i)) \Bigl)\Big|\right]\\
&=
\frac{1}{(\mlen)\,\Sigma}\,{\E}_{Z,\, Z',\sigma} \left[\sup_{h \in \hyps}\Big|\sum_{i=1}^n  \sigma_i\Bigl(\gamma_i(Y') f_2(h(X'_i)) - \gamma_i(Y) f_2(h(X_i)) \Bigl)\Big|\right]\\
&\leq
\frac{2}{(\mlen)\,\Sigma}\,{\E}_{Z,\sigma} \left[\sup_{h \in \hyps}\Big|\sum_{i=1}^n  \sigma_i\gamma_i(Y) f_2(h(X_i))  \Big|\right]\\
&\leq
\frac{2}{(\mlen)\,\Sigma}\,{\E}_{Z} \Biggl[{\E}_{\sigma} \left[ \sup_{h \in \hyps}\Big|\sum_{i=1}^n  \sigma_i\gamma_i(Y) f_2(h(X_i))  \Big| \,\Bigg|\,Z \right] \Biggl]\,. 
\end{align*}
Now, observe that, by (\ref{eq:this_one3}) and under the conditions of Lemma \ref{lem:beta01approxsamplecomplexity}, 
\[
\gamma_i(y) \in [0,4p]\,,
\]
for all $y$, independent of $h$. Thus by the above bound on $\gamma_i(y)$ and Lemma \ref{lem:multiplyconstantrademachercomplexity}, this gives
\[
\E[\Phi(Z)] \leq \frac{8p}{(\mlen)\,\Sigma}\,{\E}_{Z} \Biggl[{\E}_{\sigma} \left[ \sup_{h \in \hyps}\Big|\sum_{i=1}^n  \sigma_i f_2(h(X_i))  \Big| \,\Bigg|\,Z \right] \Biggl]
=
\frac{8pn}{(\mlen)\,\Sigma}\,R_n(f_2 \circ \hyps)
\]
where
\[
R_n(\hyps) = \,{\E}_{X_1,\ldots,X_n} \Biggl[{\E}_{\sigma} \left[\sup_{h \in \hyps}\,\Big|\sum_{i=1}^n \sigma_i \frac{ h(X_i)}{n} \Big| \,\,\Biggl|\,X_1,\ldots,X_n \right] \Biggl] 
\]
is the (average) Rademacher Complexity of function class $\hyps$, and
\[
f_2\circ\hyps = \{ f_2\circ h\,|\, h \in \hyps\}\,.
\]
Since we assumed $k \leq \frac{np}{8}$, the above implies
\[
\E[\Phi(Z)] \leq \frac{32}{\Sigma} R_n(f_2 \circ \hyps)\,.
\]
Finally, the $L$-lipschitzness of $f_2(\cdot)$, along with Talagrand's contraction lemma (e.g., \cite{lt11}), gives
\[
\E[\Phi(Z)] \leq \frac{32L}{\Sigma}\, R_n(\hyps)\,.
\]
We plug back into (\ref{eq:bound_on_term1}) to obtain, as a consequence,
\begin{align}\label{e:bound2_on_term1}
\PP\Big( \Phi(Z) \ge  \underbrace{ \frac{32L\, R_n(\hyps)}{\Sigma} +   t +  \frac{40 F_2}{\Sigma} e^{-\Omega(np)}}_{\epsilon/4}\Big) 
\le 
e^{-\Omega(np)} + \exp\left( - \frac{n\Sigma^2}{1152 F_2^2} \cdot t^2 \right)\,,
\end{align}
the left-hand side being a version of (\ref{e:term1}) once we set 
$$
\epsilon/4 \geq \frac{32L\, R_n(\hyps)}{\Sigma} +   t +  \frac{40 F_2}{\Sigma} e^{-\Omega(np)}\,.
$$

\paragraph{Controlling (\ref{e:term2}):} We now turn to (\ref{e:term2}). 
We define the random variables 
\[ 
V_j = V(h,A_j) = \frac{1}{\big(\mlen\big) \Sigma} \Bigl(~ \sum_{i=1}^n f_2(h(X_i))~ r(j,j(i)) ~\ind\{X_i \in A_j\} \Bigr)\,.
\]
Observe that in the conditional space where $S$ is given, the adversary operates on each individual conversion independently, hence the random variables $\ind{\{X_{i} \in A_j\}}\,\pi[j(i)]$ are always independent of each other. Thus, under the measure $\mu | S$, the $f_2(h(X_i))$ are constants, and the variables $\pi[j(i)] ~\ind\{X_i \in A_j\}$ are independent for different $j$. 
Hence the $V_j$ are independent w.r.t. the measure $\mu | S$. We let $m = m(S)$ is the number of conversions (number of positive labels) in $S$, and let $\muppers = \min\{\frac{np}{2}-k,m-k\}$.

Note that by swapping the order of summation we can rewrite $\widehat \ell_M (h)$ as
\begin{align*}
\widehat \ell_M (h) = \sum_{j=k}^{\muppers} V_j + C\,,\qquad {\E}_{\mu|S}\big[ \widehat\ell_M (h) \big] = \sum_{j=k}^{\muppers} {\E}_{\mu|S}\big[V_j\big] + C\,.
\end{align*}
Furthermore, by Lemma \ref{lem:allattributionsetsupperboundobservation}, for $j$ such that $k \le j \le \min\{\frac{np}{2},m\}-k$, and $S \in \mathcal{B}$, with
\[
\mathcal{B} = 
\left\{S\,:\,\frac{np}{2} \leq M \leq \frac{3np}{2} \right\}
\]
we have 
\begin{equation}\label{e:bound_on_Z}
|V_j| \le \frac{2p F_2}{(\frac{np}2-2k+1) \Sigma} 
\leq
\frac{8 F_2}{n\Sigma} 
\,,
\end{equation}
where we used the condition $k \leq \frac{np}{8}$.
Consider, in the conditional space where $S \in \mathcal{B}$,
\[
{\PP}_{\mu|S}\left(\sup_{ h \in \hyps}\,\Bigl| \widehat \ell_M(h) - {\E}_{\mu|S}[ \widehat \ell_M(h) ] \Bigl| \geq \epsilon/4\right)\,.
\]
In this conditional space, denote by $A = \langle A_k,\ldots,A_{\muppers} \rangle$ the (random) attribution sets in command of the adversary, and define
\[
\Psi(A) = \sup_{h \in \hyps} \Bigl|  \widehat \ell_m(h) - {\E}_{\mu|S}[ \widehat \ell_m(h) ] \Bigl| \,.
\]
Again, define $A^{(j)}$ to be $A$ with the $j$-th attribution set $A_j$ changed to $A'_j$. From (\ref{e:bound_on_Z}) we get
\[
\Psi(A) - \Psi(A^{(j)})  
\leq 
\frac{16 F_2}{n\Sigma} \,.
\]
The standard McDiarmid's inequality yields, for any $t \geq 0$,
\begin{equation}\label{e:bound_on_term2}
{\PP}_{\mu|S} \left(\Psi(A) - {\E}_{\mu|S}[\Psi(A)] \geq t\right) 
\leq 
\exp \left(-\frac{2t^2\,n^2\,\Sigma^2}{256\,(m-k+1)\,F_2^2}\right)
\le
\exp \left(-\frac{t^2\,n\,\Sigma^2}{192\,p\,F_2^2}\right)\,,
\end{equation}
the last inequality deriving from $m \leq \frac{3np}{2}$ (which is implied by $S \in \mathcal{B}$), and $k \geq 1$.

Consider, for frozen $S \in \mathcal{B}$, the variables $\{V_j\}_{j=k}^{\muppers}$. Note that these variables are independent, 
but they need not have the same distribution.
Let $\{V'_j\}_{j=k}^{\muppers}$, with $V'_j = V(h,A'_j)$, $j = k,\ldots, \muppers$, be an independent sample conditioned on the same $S$, made up of independent random variables, where $V'_j$ has the same distribution as $V_j$. Denote for brevity by $A$ the collection of random variables $\{A_j\}$, and by $A'$ the collection $\{A'_j\}$.  

The standard symmetrization lemma still holds:
\begin{align*}
{\E}_{\mu|S}[\Psi(A)] 
=
{\E}_{A}[\Psi(A)] 
&= 
{\E}_{A}\left[ \sup_{h \in \hyps} \Bigg|\sum_{j=k}^{\muppers} V(h,A_j) - V(h,A'_j) \Bigg|\right] \\
&=
{\E}_{A,A',\sigma}\left[ \sup_{h \in \hyps} \Bigg|\sum_{j=k}^{\muppers} \sigma_j\Bigl(V(h,A_j) - V(h,A'_j) \Bigl) \Bigg|\right] \\
&\le
2\,{\E}_{A,\sigma}\left[ \sup_{h \in \hyps} \Bigg|\sum_{j=k}^{\muppers} \sigma_j V(h,A_j) \Bigg|\right] \,.
\end{align*}
Now, focus on the quantity
\begin{align*}
\sup_{h \in \hyps}\Bigg|\sum_{j=k}^{\muppers} \sigma_j V(h,A_j) \Bigg| 
=
\frac{1}{\big(\mlen\big) \Sigma}\,\sup_{h \in \hyps}\,\Bigg| \sum_{j=k}^{\muppers}\,\sigma_j\,\sum_{i=1}^n f_2(h(X_i))~  r(j,j(i)) ~\ind\{X_i \in A_j\} ~\Bigg|\,,
\end{align*}
which we bound via a covering argument.\footnote
{
A more refined chaining version of this covering argument can be leveraged here, which leads to replacing pseudo-dimension by a Dudley's integral. We decided not to take this route, as this would not add much to the value of the paper.
}

For frozen $X = \langle X_1,\ldots, X_n\rangle$, and index set $I \subseteq [n]$, with $|I| = k$, consider $\cC_{2,\epsilon}(X,I)$, a minimal $\epsilon$ cover of $\hyps$ w.r.t. the 2-norm on $X$ projected onto $I$, that is, w.r.t. the (pseudo-)metric
\begin{equation}\label{e:metric}
||h_1-h_2||_{2,X,I} = \sqrt{\frac{1}{k}\sum_{i \in I} |h_1(X_i) - h_2(X_i)|^2}\,.
\end{equation}
Denote by $|\cC_{2,\epsilon}(X,I)|$ the size of such a cover.
For given $h \in \hyps$, and $j$ such that $k \leq j \le \muppers$, let $h_j' = h'(h,A_j) \in \cC_{2,\epsilon}(X,A_j)$ be such that $||h-h_j'||_{2,X,A_j} \leq \epsilon$. Then we can write
\begin{align*}
\sup_{h \in \hyps}\Bigg|\sum_{j=k}^{\muppers} \sigma_j V(h,A_j) \Bigg|  
&\leq
\sup_{h \in \hyps}\Bigg|\sum_{j=k}^{\muppers} \sigma_j \Big(V(h,A_j) - V(h'_j,A_j)\Big)  \Bigg| +  \sup_{h \in \hyps}\Bigg|\sum_{j=k}^{\muppers} \sigma_j V(h'_j,A_j) \Bigg|\\
&\leq
\sup_{h \in \hyps}\sum_{j=k}^{\muppers} \Big| V(h,A_j) - V(h'_j,A_j)\Big| \\ 
&\qquad+  \max_{r\,:\,k \leq r \le \muppers}\,\sup_{h \in \cC_{2,\epsilon}(X,A_r) }\Bigg|\sum_{j=k}^{\muppers} \sigma_j V(h,A_j) \Bigg|\,.
\end{align*}
On the other hand
\begin{align*}
\left(\mlen\right) \Sigma\,\Big| V(h,A_j) - V(h'_j,A_j)\Big| 
&\leq
\sum_{i=1}^n \Big|f_2(h(X_i)) - f_2(h'_j(X_i))\Big|~  r(j,j(i)) ~\ind\{X_i \in A_j\} \\
&\leq
2pL\,\sum_{i=1}^n \Big|h(X_i) - h'_j(X_i)\Big|~  \pi[i] ~\ind\{X_i \in A_j\} \\
&\mbox{(by the $L$-Lischitzness of $f_2(\cdot)$ and Lemma \ref{lem:beta01approxsamplecomplexity})}\\
&\leq
2pL\,\sqrt{\sum_{i\,:\, X_i \in A_j} \Big|h(X_i) - h'_j(X_i)\Big|^2 }~ \sqrt{ \sum_{i\,:\, X_i \in A_j} \pi^2[i] } \\
&=
2pL\sqrt{k}\,||h-h'_j||_{2,X, A_j}\,\sqrt{\Sigma}\\
&\leq
2pL\epsilon\sqrt{k\Sigma}\,.
\end{align*}
Plugging back gives
\begin{align}
\sup_{h \in \hyps}\Bigg|\sum_{j=k}^{\muppers} \sigma_j V(h,A_j) \Bigg| 
&\leq
2pL\epsilon\sqrt{k\Sigma}\,\frac{\muppers - k+1}{\big(\mlen\big) \Sigma} \notag\\
&\qquad+ \max_{r\,:\,k \leq r \le \muppers}\,\sup_{h \in \cC_{2,\epsilon}(X,A_r) }\Bigg|\sum_{j=k}^{\muppers} \sigma_j V(h,A_j) \Bigg|\notag\\
&=
\frac{2pL\,\epsilon\sqrt{k}}{\sqrt{\Sigma}}  + \max_{r\,:\,k \leq r \le \frac{np}{2}-k}\,\sup_{h \in \cC_{2,\epsilon}(X,A_r) }\Bigg|\sum_{j=k}^{\frac{np}{2}-k} \sigma_j V(h,A_j) \Bigg|\,,\label{e:this_one}
\end{align}
the equality following from the fact that $S \in \mathcal{B}$ implies $\muppers = \frac{np}{2}-k$.
Note that the size of
\[
{\bigcup}_{r\,:\,k \leq r \le \frac{np}{2}-k} \,\cC_{2,\epsilon}(X,A_r)
\]
is at most
\[
\Bigl(\frac{np}{2}-2k+1\Big)\,\max_{j\,:\,k \leq j \le \frac{np}{2}-k}|\cC_{2,\epsilon}(X,A_j)|\,.
\]
We take expectation w.r.t. the Rademacher variables $\sigma$, apply Massart's finite lemma \citep{ma00}  to the second term of (\ref{e:this_one}), and then take an outer expectation w.r.t. $A$. This yields 
\[
{\E}_{\mu|S}[\Psi(A)] 
\leq
\frac{4pL\,\epsilon\sqrt{k}}{\sqrt{\Sigma}} 
+ 4\,{\E}_A\left[\sqrt{\sum_{j=k}^{\frac{np}{2}-k} V^2_j}\,\sqrt{2\,\log \Bigg( \Bigl(\frac{np}{2}-2k+1\Big)\,\max_{j\,:\,k \leq j \le \frac{np}{2}-k}|\cC_{2,\epsilon}(X,A_j)|\Biggl)}\right]\,.
\]
But from (\ref{e:bound_on_Z}) we have, deterministically,
\[
\sqrt{\sum_{j=k}^{\muppers} V^2_j} \le \frac{8F_2}{n\,\Sigma}\,\sqrt{\frac{np}{2}-2k+1}\,.
\]
We now find an upper bound on the covering number $|\cC_{2,\epsilon}(X,A_j)|$. First, note that, for each $j$, the covering number $|\cC_{2,\epsilon}(X,A_j)|$ cannot be bigger than $(\frac{1}{\epsilon})^k$.  This is because the functions $h \in \hyps$ take values in the interval $[0,1]$ and, for the sake of metric (\ref{e:metric}), they are evaluated only in the $k$ points $\{X_i\}_{i \in A_j}$. Moreover, since the fat-shattering dimension of $\hyps$ at any scale is always upper bounded by the pseudo-dimension $\pseudodim(\hyps)$, we have $|\cC_{2,\epsilon}(X,A_j)| = O\big( (\frac{1}{\epsilon})^{O(\pseudodim(\hyps))} \big)$ by, e.g., Theorem 1 of \citet{mendelson2003entropy}.

This implies, using $k \le \frac{np}8$, and the inequality $\sqrt{a+b} \leq \sqrt{a} + \sqrt{b}$,
\begin{align*}
{\E}_{\mu|S}[\Psi(A)] 
= \inf_{\epsilon \geq 0} \Biggl(\underbrace{O\left(\frac{pL\,\sqrt{k}}{\sqrt{\Sigma}}\right)}_{C_1}\,\epsilon 
&+ 
\underbrace{O\left(\frac{F_2}{\Sigma}\,\sqrt{\frac{p\,\complexitymeasurehere}{n}}\right)}_{C_2}\,\sqrt{\log \frac{1}{\epsilon}} \\
&+ 
\underbrace{O\left(\frac{F_2}{\Sigma}\,\sqrt{\frac{p\log(1+np)}{n}}\right)}_{C_3}
\Biggl)\,,
\end{align*}
where the $O(\cdot)$ only conceals absolute constants. We have then an expression of the form
\[
C_1\epsilon + C_2\sqrt{\log \frac{1}{\epsilon}} + C_3
\]
that we want to optimize over $\epsilon \geq 0$. In particular, we set
\[
\epsilon = \frac{C_2}{2C_1}
\]
to obtain
\begin{align*}
{\E}_{\mu|S}[\Psi(A)] 
&= 
O \Biggl(\frac{F_2}{\Sigma}\,\sqrt{\frac{p\,\complexitymeasurehere}{n}}
\Biggl[1+\sqrt{\log\left(\frac{p L k\Sigma n}{F_2 \complexitymeasurehere}\right)}\Biggl]\\
&\qquad \qquad \qquad+ \frac{F_2}{\Sigma}\,\sqrt{\frac{p\log(1+np)}{n}}
\Biggl)\\
&=
\widetilde O \left( \frac{F_2}{\Sigma}\,\sqrt{\frac{p\,\complexitymeasurehere}{n}}\right)\,,
\end{align*}
provided $S \in \mathcal{B}$.
Combining with (\ref{e:bound_on_term2}) results in
\begin{align*}
{\PP}_{\mu|S} \left(\Psi(A)  > \underbrace{{\E}_{\mu|S}[\Psi(A)] + t}_{\epsilon/4} \,\Big|\,S\right) 
&\leq
{\PP}_{\mu|S} \Big(\Psi(A) > {\E}_{\mu|S}[\Psi(A)] + t\,|\, S \in \mathcal{B}\Big) + \ind\{S \notin \mathcal{B}\}\\
&\le
\exp \left(-\frac{t^2\,n\,\Sigma^2}{192\,p\,F_2^2}\right) + \ind\{S \notin \mathcal{B}\}\,.
\end{align*}
holding for every realization of $S$. Thus, upon setting
$$
\epsilon/4 \geq {\E}_{\mu|S}[\Psi(A)] + t \,,
$$
with ${\E}_{\mu|S}[\Psi(A)]$ bounded as above when $S \in \mathcal{B}$, and noting that $\PP(S \notin \mathcal{B}) \leq e^{-\Omega(np)}$ via standard Chernoff bounds, we can write
\begin{align*}
(\ref{e:term2}) 
&= 
{\E}_S \left[{\PP}_{\mu|S}\left(\Psi(A) \geq \epsilon/4 \right)\right]   
\leq
\exp \left(-\frac{n\,\Sigma^2}{192\,p\,F_2^2}\cdot t^2\right) + e^{-\Omega(np)}\,.
\end{align*}

\paragraph{Finishing the proof:} Combining (\ref{e:term1}) and (\ref{e:term2}) we have therefore obtained, for
\begin{align*}
\epsilon/4 \geq\, t\, +\, \max\Bigg\{ \frac{32L\, R_n(\hyps)}{\Sigma} +     \frac{40 F_2}{\Sigma} e^{-\Omega(np)} ,\, 
\widetilde\Omega \left( \frac{F_2}{\Sigma}\,\sqrt{\frac{p\,\complexitymeasurehere}{n}}\right) \Bigg\}\,,
\end{align*}
we have
\begin{align}\label{e:almost_final_bound}
\PP \left( \regret(\wh) \geq \epsilon \right)    \leq 
e^{-\Omega(np)} + \exp\left( - \frac{n\Sigma^2}{1152 F_2^2} \cdot t^2 \right) + \exp \left(-\frac{n\,\Sigma^2}{192\,p\,F_2^2}\cdot t^2\right) + e^{-\Omega(np)}\,.
\end{align}
Now, the right-hand side of (\ref{e:almost_final_bound}) is smaller than $\delta$ if
\[
t = \Omega\left( \frac{F_2}{\Sigma}\,\sqrt{\frac{\log \frac{1}{\delta}}{n}}\right) \qquad \mbox{and}\qquad np = \Omega\left(\log \frac{1}{\delta}\right)\,.
\]
Moreover, the conditions $\delta \leq 1/2$ and
\(
np = \Omega\left(\log \frac{1}{p}\right)
\)
imply
\[
\frac{40 F_2}{\Sigma} e^{-\Omega(np)} = O\left( \frac{F_2}{\Sigma}\,\sqrt{\frac{\log \frac{1}{\delta}}{n}}\right)\,. 
\]
Hence, under these conditions  (\ref{e:almost_final_bound}) implies
\(
\PP \left( \regret(\wh) \geq \epsilon \right)    \leq \delta 
\)
for
\[
\epsilon = \widetilde\Omega \left( \frac{L\, R_n(\hyps)}{\Sigma} + \frac{F_2}{\Sigma}\,\sqrt{\frac{\log \frac{1}{\delta}}{n}} + \frac{F_2}{\Sigma}\,\sqrt{\frac{p\,\complexitymeasurehere}{n}}\right)\,,
\]
and the proof is concluded.
\end{proof}

\subsection{Robustness to error in prior}\label{subsec:robustnessproofs}
First, we explicitly detail the new definitions of $\widehat\beta_1, \widehat\beta_0, \widehat\ell(h, j)$. Specifically, we now let 
\begin{align*}
\widehat\beta_1(j,i) &:= \frac{\widehat\pi[i]\, B_{n, p, j+k}}{p} + \Biggl( B_{n-1, p, j+k-1} - \frac{1 - p\, B_{n-1, p, j+k-1}}{1-p} \Biggr) \big(1-\widehat\pi[i] \big)\,, \\
\widehat\beta_0(j,i) &:= \frac{1 - p\, B_{n-1, p, j+k-1}}{1-p} \big(1-\widehat\pi[i]\big)\,.
\end{align*}
where $B_{n, p, j}$ and $p$ are defined as in Theorem \ref{prop:unbiasedestimatoroneji}. We now define $\widehat\Sigma = \sum_{i=1}^k \widehat\pi[i]^2 = \| \widehat\pi \|_2^2$, and
\begin{align} 
\widehat \ell(h,j,i) &= \frac{f_2(h(A_j[i]))}{\widehat \beta_1(j, i)} + \frac{\E\Bigl[f_1 ( h(X) )\Bigl]}{B_{n, p, j+k}}  
- \frac{\widehat \beta_0(j, i) \E\Bigl[f_2 ( h(X) )\Bigl]}{\widehat \beta_1(j, i) B_{n, p, j+k}}\,, \notag \\
\widehat\ell(h, j) &= 
\Bigg(\frac1{\widehat\Sigma} \sum_{i=1}^k \frac{ \widehat\pi[i]^2}{\widehat\beta_1(j, i)} f_2(h(A_j[i]))
-  \frac1{\widehat\Sigma} \Biggl( \sum_{i=1}^k \frac{\widehat\pi[i]^2 \cdot \widehat\beta_0(j, i)}{\widehat\beta_1(j, i) B_{n, p, j+k}} \Biggl) \E[f_2 (h(x))] \notag \\
&\qquad \qquad \qquad + \frac1{B_{n, p, j+k}} \E[f_1 (h(x))]\Bigg)\ind\{j \le M-k\} \, \notag.
\end{align}
Now turning to the proof, the only new step we need to prove Theorem \ref{thm:robustallattrsets} is the following:
\begin{lemma}\label{lem:robustnessbias}
For all $h \in \hyps$, we have 
\[ 
\big| {\E}_{\mu}[\widehat\ell] - \mathcal{L}(h) \big| = \begin{cases} O\Bigg( pF_2 \Big( \frac{\|\pi - \widehat \pi\|_1}{\Sigma} + \frac{\|\pi - \widehat \pi\|_2}{\Sigma^{3/2}} \Big) + pk^2 F_2 e^{-\Omega(np)} \Bigg) & {\mbox{if\,\,}} \Sigma \ge 8 \| \pi - \widehat\pi \|_2^2\,, \\
O\Big( \frac{p k F_2 \|\pi - \widehat \pi\|_2}{\Sigma^{1/2}} + pk^2 F_2 e^{-\Omega(np)} \Big) &{\mbox{if\,\,}} \Sigma < 8\| \pi - \widehat\pi \|_2^2\,. \end{cases}
\]
Also, when $\Sigma \ge 8 \| \pi - \widehat\pi \|_1^2$ we have $\widehat\Sigma \in [c_1 \Sigma, c_2 \Sigma]$ for universal constants $c_1, c_2>0$. 
\end{lemma}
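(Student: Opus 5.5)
The plan is to compute ${\E}_\mu[\widehat\ell(h,j)]$ exactly, using identity (\ref{eq:temporary}) from the proof of Theorem \ref{prop:unbiasedestimatoroneji}. That identity holds for any measurable $g$ and involves the \emph{true} $\beta_1(j,i),\beta_0(j,i)$, regardless of what the learner plugs in. With $g=f_2\circ h$, writing $a:=\E[Yf_2(h(X))]$ and $b:=\E[f_2(h(X))]$, we get ${\E}_\mu[f_2(h(A_j[i]))\ind\{j\le M-k\}]=\beta_1(j,i)a+\beta_0(j,i)b$. Together with $\E_\mu[\ind\{j\le M-k\}]=B_{n,p,j+k}$, this gives
\[
{\E}_\mu[\widehat\ell(h,j)] = \E[f_1(h(X))] + \frac{1}{\widehat\Sigma}\sum_{i=1}^k \widehat\pi[i]^2\Big(\frac{\beta_1(j,i)}{\widehat\beta_1(j,i)}a + \frac{\beta_0(j,i)-\widehat\beta_0(j,i)}{\widehat\beta_1(j,i)}b\Big).
\]
Since $\popl(h)=\E[f_1]+a$ and $\frac1{\widehat\Sigma}\sum_i\widehat\pi[i]^2=1$, the per-$j$ bias is
\[
\frac{1}{\widehat\Sigma}\sum_i \widehat\pi[i]^2\,\frac{(\beta_1-\widehat\beta_1)(j,i)\,a + (\beta_0-\widehat\beta_0)(j,i)\,b}{\widehat\beta_1(j,i)}.
\]
Averaging over $j\in[k,\frac{np}2-k]$ with weight $1/(\frac{np}2-2k+1)$ then bounds $|{\E}_\mu[\widehat\ell]-\popl(h)|$ by the maximum over $j$ of this quantity. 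The indicator $\ind\{j\le M-k\}$ is already built into $\widehat\ell(h,j)$, so no extra boundary term arises.

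Next I linearize using Lemma \ref{lem:pnpjboundvsp} and the rewriting in the proof of Lemma \ref{lem:beta01approxsamplecomplexity}. Both $\beta_1$ and $\widehat\beta_1$ are affine in the prior with the same coefficients, so $\beta_1-\widehat\beta_1=(\pi[i]-\widehat\pi[i])(\frac1p+O(e^{-\Omega(np)}/p))$ and $\beta_0-\widehat\beta_0=-(\pi[i]-\widehat\pi[i])(1+O(pe^{-\Omega(np)}))$. I also use $|a|\le pF_2$ (because $a=p\,\E_1[f_2]$) and $|b|\le F_2$, and replace $\widehat\beta_1(j,i)$ by $\widehat\pi[i]/p$ up to an $e^{-\Omega(np)}/p$ error. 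The leading term then becomes, up to an absolute constant,
\[
\frac{pF_2}{\widehat\Sigma}\sum_i \widehat\pi[i]\,|\pi[i]-\widehat\pi[i]|.
\]
Keeping $\widehat\pi[i]^2/\widehat\beta_1\approx p\widehat\pi[i]$ avoids ever dividing by a small $\widehat\pi[i]$. All the exponentially small corrections, summed over $k$ indices and each of size at most $O(pF_2)$ after the prefactors, give the $pk^2F_2e^{-\Omega(np)}$ remainder. To control them I need $\widehat\beta_1(j,i)\gtrsim \widehat\pi[i]/p$, which is the analogue of Lemma \ref{lem:beta01approxsamplecomplexity} for $\widehat\pi$. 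Where $\widehat\pi[i]$ is tiny, I fall back on the cruder bound $\widehat\pi[i]^2/\widehat\beta_1=O(p\widehat\pi[i]+e^{-\Omega(np)})$, which costs only the $k$-fold exponential term.

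The final step is to bound $\frac{1}{\widehat\Sigma}\sum_i\widehat\pi[i]|\pi[i]-\widehat\pi[i]|$ and compare $\widehat\Sigma$ with $\Sigma$. Writing $\widehat\pi=\pi+d$ gives
\[
\sum_i\widehat\pi[i]|d_i|\le \sum_i\pi[i]|d_i|+\|d\|_2^2\le \sqrt\Sigma\,\|d\|_2+\|d\|_2^2 .
\]
By the triangle inequality in $\ell_2$, $\|\widehat\pi\|_2\ge \sqrt\Sigma-\|d\|_2$, and $\|\widehat\pi\|_2\le\sqrt\Sigma+\|d\|_2$.

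In the case $\Sigma\ge8\|d\|_2^2$ we have $\|d\|_2\le\sqrt{\Sigma/8}$, so $\widehat\Sigma\in[(1-8^{-1/2})^2\Sigma,(1+8^{-1/2})^2\Sigma]$, which is the claimed constant-factor comparability. The bias is then $O(pF_2\|d\|_2/\sqrt\Sigma)$. This is at most the stated $\frac{\|d\|_1}{\Sigma}+\frac{\|d\|_2}{\Sigma^{3/2}}$, since $\Sigma\le1$. Alternatively, bounding $\sum_i\pi[i]|d_i|\le\|d\|_1\max_i\pi[i]$ recovers the $\|d\|_1/\Sigma$ form.

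In the case $\Sigma<8\|d\|_2^2$, $\widehat\Sigma$ can be arbitrarily small, so I instead use $\widehat\Sigma\ge1/k$, which follows from Cauchy--Schwarz since $\widehat\pi$ is a probability vector. This gives a bias of order
\[
pkF_2\big(\sqrt\Sigma\|d\|_2+\|d\|_2^2\big)=O\!\big(pkF_2\|d\|_2/\sqrt\Sigma\big),
\]
using $\|d\|_2^2\le \|d\|_2\cdot\sqrt2\le \|d\|_2\sqrt{2/\Sigma}$.

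The main obstacle is the control of $\widehat\beta_1$ from below uniformly in $i$, together with the exponentially small error terms, when some $\widehat\pi[i]$ are tiny. The theorem assumes only $np=\Omega(\log(\sqrt{2k}/(\delta p)))$ and no lower bound on $\widehat\pi[i]$. I would handle this by noting that the $e^{-\Omega(np)}$ corrections enter $\widehat\pi[i]^2/\widehat\beta_1$ only additively, so they are absorbed into the $pk^2F_2e^{-\Omega(np)}$ term rather than amplified. I would then verify this explicitly from the closed form of $\widehat\beta_1$.
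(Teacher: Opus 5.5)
\textbf{Gap: small entries of $\widehat\pi$.} The failing step is the one you yourself flag as the main obstacle. From the closed form (as in the proof of Lemma~\ref{lem:beta01approxsamplecomplexity}),
$\widehat\beta_1(j,i)=\frac{\widehat\pi[i](1-\delta_1)}{p}-\frac{\delta_2(1-\widehat\pi[i])}{1-p}$,
with $\delta_1=1-B_{n,p,j+k}$ and $\delta_2=1-B_{n-1,p,j+k-1}>0$.

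The $e^{-\Omega(np)}$ correction is indeed additive \emph{in} $\widehat\beta_1$, but $\widehat\beta_1$ sits in a denominator. It equals $-\delta_2/(1-p)<0$ at $\widehat\pi[i]=0$, and it vanishes at some $\widehat\pi[i]=\Theta(p\delta_2)$. Near that value, $\widehat\pi[i]^2/\widehat\beta_1(j,i)$ is unbounded, so your fallback bound $O(p\widehat\pi[i]+e^{-\Omega(np)})$ is false.

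The same blow-up hits your bias term $\frac{c_j}{\widehat\Sigma}\frac{\widehat\pi[i]^2}{\widehat\beta_1(j,i)}(\pi[i]-\widehat\pi[i])$, where
$c_j=\bigl(\frac{1-\delta_1}{p}+\frac{\delta_2}{1-p}\bigr)a-\bigl(1+\frac{p\delta_2}{1-p}\bigr)b\approx\E_1[f_2(h(X))]-\E[f_2(h(X))]$
is generically nonzero. Indeed, the estimator is not even defined when $\widehat\beta_1(j,i)=0$. So the lemma cannot be proved for the untruncated estimator without a lower bound on the nonzero entries of $\widehat\pi$.

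The missing ingredient is what the paper uses implicitly as ``our condition on $n$''. Assume $np=\Omega(\log\max_i 1/\widehat\pi[i])$, so that the $\widehat\pi$-version of Lemma~\ref{lem:beta01approxsamplecomplexity} gives $\widehat\beta_1(j,i)\ge\widehat\pi[i]/(2p)$. If you only have $np=\Omega(\log(\sqrt{2k}/(\delta p)))$, you must instead change the estimator: discard indices with $\widehat\pi[i]$ below an $e^{-\Omega(np)}$ threshold and renormalize, as in Remark~\ref{rem:smallpiindices}.

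With that lower bound, $0\le\widehat\pi[i]^2/\widehat\beta_1(j,i)\le 2p\widehat\pi[i]$ and $|c_j|=O(F_2)$. These give the per-$j$ bias $O\bigl(\frac{pF_2}{\widehat\Sigma}\sum_i\widehat\pi[i]\,|\pi[i]-\widehat\pi[i]|\bigr)$ directly, without needing any exponential remainder.

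\textbf{Comparison with the paper's route.} Apart from this, your argument is correct and differs from the paper's. The paper writes $\popl(h)$ and ${\E}_\mu[\widehat\ell]$ as $\mu$-expectations of the $\pi$- and $\widehat\pi$-based estimators. It then bounds their difference coefficient by coefficient, as $F_2$ times $\bigl|\frac{\widehat\pi[i]^2}{\widehat\beta_1\widehat\Sigma}-\frac{\pi[i]^2}{\beta_1\Sigma}\bigr|$ (and similarly for the $\beta_0$ terms). Replacing $\pi[i]^2/\beta_1$ by $p\pi[i]$ in this comparison is what produces the $pk^2F_2e^{-\Omega(np)}$ term. This reduces everything to $\sum_i|\widehat\pi[i]/\widehat\Sigma-\pi[i]/\Sigma|$ and $|1/\widehat\Sigma-1/\Sigma|$, and the paper uses an AM--GM estimate to get $\widehat\Sigma\in[\Sigma/8,15\Sigma/8]$.

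Your use of (\ref{eq:temporary}) with the true $\beta$'s gives the bias in closed form. It involves only $\beta-\widehat\beta$, which is exactly linear in $\pi-\widehat\pi$, and $\widehat\beta_1$. This buys you two things:
\begin{itemize}
\item You need a lower bound only on $\widehat\beta_1$. The paper's comparison also needs $\beta_1\gtrsim\pi[i]/p$, i.e.\ implicitly a bound on $\min_i\pi[i]$, which truncating on $\widehat\pi$ cannot enforce.
\item You get the sharper $O(pF_2\|\pi-\widehat\pi\|_2/\sqrt{\Sigma})$ in the first case.
\end{itemize}
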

Furthermore, as detailed below, it is not possible to avoid explicit $k$-dependence in the last case above using the current strategy of analysis.

\begin{remark}\label{rem:tightnessofrobustnessproof}
Notice that up to multiplicative constants, the above analysis in the second case above (when $\Sigma < 8 \| \pi - \widehat\pi \|^2_2$) is tight. Fix any real parameter $\delta>0$. For any $k \ge 1$, suppose $\pi = \Big( \frac1{k}-\frac{\delta}2, \frac1k, \ldots, \frac1k, \frac1k+\frac{\delta}2 \Big)$ and $\widehat\pi = \Big(\frac1k, \ldots, \frac1k\Big)$. Then $\|\pi - \widehat \pi\|_2 = \frac{\delta}{\sqrt{2}}$, $\Sigma \approx \frac1k + \delta^2$, $\widehat\Sigma = \frac1k$. Hence for $k$ large enough in terms of $\delta$, we have $\Sigma < 8 \| \pi - \widehat\pi \|^2_2$. 
Note in this example, we have $\widehat\Sigma = \frac1k$.
The resulting regret bound from Theorem \ref{thm:robustallattrsets} comes from replacing $\Sigma$ from Theorem \ref{thm:allattributionsetssamplecomplexity} by $\widehat \Sigma$ when $\Sigma \ge 8 \| \pi - \widehat\pi \|^2_2$ and by $\frac1k$ when $\Sigma < 8 \| \pi - \widehat\pi \|^2_2$ -- see the proof of Theorem \ref{thm:robustallattrsets} below. 
Since $\widehat\Sigma = \frac1k$ in this example, one cannot refine this current analysis strategy using a larger lower bound on $\widehat \Sigma$ in the case $\Sigma < 8 \| \pi - \widehat\pi \|^2_2$.
\end{remark}

\begin{proof}[Proof of Theorem \ref{thm:robustallattrsets}, given Lemma \ref{lem:robustnessbias}]
First, we discuss how to establish the result when we have $np = \Omega\left( \log\left(\frac{1}{\delta\,p} \max_{i \in [k]}\frac{1}{\widehat \pi[i]} \right) \right)$. Then following Remark \ref{rem:smallpiindices}, we will explain how to establish the result in its full generality. 

The proof follows the exact same strategy as the proof of Theorem \ref{thm:allattributionsetssamplecomplexity}. The main change is that in the initial steps of that proof, we now decompose 
\begin{align*}
&\ind\Bigl\{\regret(\wh) \geq \epsilon + \big( \bias + pk^2 e^{-\Omega(np)}\big) \Bigl\} \\
&\qquad \leq
\ind\Biggl\{ 2\,\sup_{ h \in \hyps} \Bigl| \widehat \ell(h, S, \cA, \alpha) - \popl(h) \Bigl| \geq \epsilon + \big( \bias + pk^2 e^{-\Omega(np)}\big) \Big\} \\
&\qquad \leq \ind\Biggl\{ 2 \sup_{h \in \hyps} \big| {\E}_{\mu}[\widehat\ell] - \mathcal{L}(h) \big| + 2\,\sup_{ h \in \hyps} \Bigl| \widehat \ell(h, S, \cA, \alpha) - {\E}_{\mu}[\widehat\ell] \Bigl| \geq \epsilon + \big( \bias + pk^2 e^{-\Omega(np)}\big) \Biggl\} \\
&\qquad \leq \ind\Biggl\{ \,\sup_{ h \in \hyps} \Bigl| \widehat \ell(h, S, \cA, \alpha) - {\E}_{\mu}[\widehat\ell] \Bigl| \geq \epsilon/2 \Biggl\}\,.
\end{align*}
Here we used that $\sup_{h \in \hyps} \big| {\E}_{\mu}[\widehat\ell] - \mathcal{L}(h) \big| \le \frac12 \big( \bias + pk^2 e^{-\Omega(np)}\big)$ by Lemma \ref{lem:robustnessbias}, and the fact that
\[ 
\sup_{ h \in \hyps} \Bigl| \widehat \ell(h, S, \cA, \alpha) - \popl(h) \Bigl| \le \sup_{ h \in \hyps} \Bigl| \widehat \ell(h, S, \cA, \alpha) - {\E}_{\mu}[\widehat\ell]| + \sup_{h \in \hyps}\big| {\E}_{\mu}[\widehat\ell] - \mathcal{L}(h) \big|\,. 
\]
The rest of the proof is now identical as that of Theorem \ref{thm:allattributionsetssamplecomplexity}.
In particular, we replace every instantiation of quantities arising in the estimator $\widehat \ell$ that formerly depended on $\pi$, now by the analogous quantities depending on $\widehat\pi$ (e.g. the $r(j, j(i))$ are now defined analogously as before, but in terms of $\widehat \pi$ which defines $\widehat \beta_1, \widehat \beta_0$). Similarly, the $\Sigma$ are now all replaced by $\widehat\Sigma$. The condition $np = \Omega\left( \log\left(\frac{1}{\delta\,p} \max_{i \in [k]}\frac{1}{\widehat \pi[i]} \right) \right)$ enables us to use Lemma \ref{lem:beta01approxsamplecomplexity} to bound $\widehat \beta_1, \widehat \beta_0$.
Note that the attribution sets are constructed as per the adversary's play, which is according to $\pi$. The rationale that for a given $j$, there are at most $k$ indices $i$ such that $x_i \in A_j$ and the corresponding $j(i)$ are all distinct remains exactly the same, so the proof does not change. 

The only situation in the proof where a new bound, that does not arise from replacing all the quantities that formerly depended on $\pi$ by the analogous quantities depending on $\widehat\pi$, is the proof of (\ref{eq:this_one3}). Here, we analogously can derive the exact same bound as follows:
\begin{align}
&{\E}_{\mu | S}\left[\sum_{j\,:\, k \le j \le \muppers \,,\, x_i \in A_j} r(j,j(i))\right] \notag \\
& \qquad = \sum_{t=1}^{r^+} \sum_{i'=b^+(t)+1}^k \pi[i']\, r\Big(j_{b^+(t)},\,i'-b^+(t)\Big) +
\sum_{t=1}^{r^-} \sum_{i'=1}^{k-b^-(t)} \pi[i']\, r\Big(j_{b^-(t)},\, i' + b^-(t)\Big) \notag \\
& \qquad \leq 
2p \sum_{t=1}^{r^+} \sum_{i'=b^+(t)+1}^k \pi[i'] \, \widehat\pi[i'-b^+(t)] +
2p \sum_{t=1}^{r^-} \sum_{i'=1}^{k-b^-(t)} \pi[i']\, \widehat\pi[i'+b^-(t)] \notag \\
&\qquad \leq 2p \sum_{b=0}^{k-1} \sum_{i'=b+1}^k \pi[i'] \widehat\pi[i'-b] + 2p \sum_{b=1}^{k-1} \sum_{i'=1}^{k-b} \pi[i'] \widehat\pi[i'+b]\, \notag \\
&\qquad \leq
2p\,\sum_{i'=1}^{k} \pi[i']\,\sum_{b=1}^{k}  \widehat\pi[b] + 
2p\,\sum_{i'=1}^{k-1} \pi[i]\,\sum_{b=1}^{k}  \widehat\pi[b]\notag\\
&\qquad =
2p\,\sum_{i'=1}^{k} \pi[i'] + 
2p\,\sum_{i'=1}^{k-1} \pi[i']\notag\\
&\qquad \leq 
4p\,.
\end{align}
Hence the same proof of Theorem \ref{thm:allattributionsetssamplecomplexity} goes through as described above. The claimed regret follows from replacing $\widehat\Sigma$ by $\Theta(\Sigma)$ when $\Sigma \ge 8 \| \pi - \widehat\pi \|_1^2$ as per Lemma \ref{lem:robustnessbias}, and using the worst case bound $\widehat\Sigma \ge \frac1k$ otherwise. We finally upper bound $pk^2 F_2 e^{-\Omega(np)} \le \frac{F_2}{\Sigma} \sqrt{\frac1{n}}$ for $np = \Omega\left( \log\left(\frac{1}{\delta\,p} \max_{i \in [k]}\frac{1}{\widehat \pi[i]} \right) \right)$; note this condition implies that $np = \Omega\Big( \max\Big\{ \log\Big( \frac{1}{p}\Big), \log(pk) \Big\}\Big)$. Thus this extra term $pk^2 F_2 e^{-\Omega(np)}$ can be subsumed into the pre-existing terms in the regret.

Finally, to prove the Theorem under the condition $np \ge \Omega\Big(\log\Big( \frac{\sqrt{2k}}{\delta p}\Big)\Big)$, we construct $\widehat \ell$ by restricting to the $i$ such that $\widehat \pi[i] \ge \frac1{\delta p} e^{-np}$, as discussed in Remark \ref{rem:smallpiindices}. We again have \begin{align*}
\sum_{i\,:\,\widehat \pi[i] < \frac1{\delta p} e^{-np}} \widehat \pi[i]^2 \le \frac{k e^{-2np}}{\delta^2 p^2} \le \frac1{2k} \le \frac{\widehat \Sigma}2, \text{ thus }\sum_{i\,:\,\widehat \pi[i] \ge \frac1{\delta p} e^{-np}} \widehat \pi[i]^2 \ge \frac{\widehat \Sigma}2\,,
\end{align*}
and so the rate only changes by a constant factor. Note that $np \ge \Omega\Big(\log\Big( \frac{\sqrt{2k}}{\delta p}\Big)\Big)$ again implies $np = \Omega\Big( \max\Big\{ \log\Big( \frac{1}{p}\Big), \log(pk) \Big\}\Big)$, allowing us to subsume the extra $pk^2 F_2 e^{-\Omega(np)}$ term into the pre-existing terms in the regret.
\end{proof}
\begin{proof}[Proof of Lemma \ref{lem:robustnessbias}]
By Theorem \ref{prop:unbiasedestimatoroneji} and the same work we did prior to stating Theorem \ref{thm:allattributionsetssamplecomplexity} in Section \ref{sec:refined}, we can write 
\begin{align*}
\popl(h) &= {\E}_{\mu}\Bigg[ \frac1{\frac{np}2 - 2k + 1} \sum_{j=k}^{np/2-k} \Bigg(\frac1{\Sigma} \sum_{i=1}^k \frac{ \pi[i]^2}{\beta_1(j, i)} f_2(h(A_j[i]))
-  \frac1{\Sigma} \Biggl( \sum_{i=1}^k \frac{\pi[i]^2 \cdot \beta_0(j, i)}{\beta_1(j, i) B_{n, p, j+k}} \Biggl) \E[f_2 (h(x))] \\
&\qquad \qquad \qquad \qquad \qquad \qquad \qquad + \frac1{B_{n, p, j+k}} \E[f_1 (h(x))]\Bigg)\ind\{j \le M-k\} \Bigg]\,.
\end{align*}
By analogous reasoning, we have
\begin{align*}
{\E}_{\mu}[\widehat\ell] &= {\E}_{\mu}\Bigg[ \frac1{\frac{np}2 - 2k + 1} \sum_{j=k}^{np/2-k} \Bigg(\frac1{\widehat\Sigma} \sum_{i=1}^k \frac{ \widehat\pi[i]^2}{\widehat\beta_1(j, i)} f_2(h(A_j[i]))
-  \frac1{\widehat\Sigma} \Biggl( \sum_{i=1}^k \frac{\widehat\pi[i]^2 \cdot \widehat\beta_0(j, i)}{\widehat\beta_1(j, i) B_{n, p, j+k}} \Biggl) \E[f_2 (h(x))] \\
&\qquad \qquad \qquad \qquad \qquad \qquad \qquad + \frac1{B_{n, p, j+k}} \E[f_1 (h(x))]\Bigg)\ind\{j \le M-k\} \Bigg]\,.
\end{align*}
Thus 
\[ 
\big| {\E}_{\mu}[\widehat\ell] - \mathcal{L}(h) \big| \le \big|\expressionI\big| + \big|\expressionII\big|\,,
\]
where
\begin{align*}
\expressionI &:= \frac1{\big(\mlen\big)\widehat\Sigma} \sum_{j=k}^{np/2-k} \sum_{i=1}^k \frac{\widehat\pi[i]^2}{\widehat\beta_1(j, i)} f_2(h(A_j[i])) \\
&\qquad \qquad \qquad - \frac1{\big(\mlen\big) \Sigma} \sum_{j=k}^{np/2-k} \sum_{i=1}^k \frac{\pi[i]^2}{\beta_1(j, i)} f_2(h(A_j[i]))\,,\\
\expressionII &:= -\frac1{\big(\mlen\big) B_{n, p, j+k} \widehat\Sigma} \sum_{j=k}^{np/2-k} \sum_{i=1}^k \frac{ \widehat\pi[i]^2 \widehat\beta_0(j, i)}{ \widehat\beta_1(j, i)} \E[f_2(h(x))] \\
&\qquad \qquad \qquad + \frac1{\big(\mlen\big)B_{n, p, j+k} \Sigma} \sum_{j=k}^{np/2-k} \sum_{i=1}^k \frac{ \pi[i]^2 \beta_0(j, i)}{ \beta_1(j, i)} \E[f_2(h(x))]\,.
\end{align*}
By Lemma \ref{lem:beta01approxsamplecomplexity}, we have $\Big|\beta_1(j, i) -\frac{\pi[i]}p \Big| \le O\Big(\frac{e^{-\Omega(np)}}p\Big)$, and the exact same proof as of Lemma \ref{lem:beta01approxsamplecomplexity} gives $\Big|\widehat\beta_1(j, i) -\frac{\widehat\pi[i]}p \Big| \le O\Big(\frac{e^{-\Omega(np)}}p\Big)$. Similarly we have $\Big|\beta_0(j, i) - (1-\pi[i])\Big| $, $\Big|\widehat\beta_0(j, i) - (1-\widehat\pi[i])\Big| \le O(pe^{-np})$. Thus letting $\delta = \beta_1(j, i) -\frac{\pi[i]}p$, we obtain
\begin{align*}
\Big|\frac{\pi[i]^2}{\pi[i]/p} - \frac{\pi[i]^2}{\beta_1(j, i)}\Big| = \Big|\frac{\pi[i]^2 \delta}{\frac{\pi[i]}p \cdot \big( \frac{\pi[i]}p + \delta\big)}\Big| = \frac{p^2 \pi[i] \delta}{\pi[i] + p \delta} \le p^2 \delta = O(p e^{-\Omega(np)})\,,
\end{align*}
and similarly $\Big|\frac{\widehat\pi[i]^2}{\widehat\pi[i]/p} - \frac{\widehat\pi[i]^2}{\widehat\beta_1(j, i)}\Big| = O(pe^{-\Omega(np)})$. Now using the above upper bound, and by our condition on $n$ and Lemma \ref{lem:beta01approxsamplecomplexity}, we have
\begin{align*}
&\Big|\frac{\pi[i]^2 \beta_0(j, i)}{\beta_1(j, i)} - \frac{\pi[i]^2(1-\pi[i])}{\pi[i]/p}\Big| \\
&\qquad = \Big| \frac{\pi[i]^2 \beta_0(j, i)}{\beta_1(j, i)} - \frac{\pi[i]^2 \beta_0(j, i)}{\pi[i] / p} + \frac{\pi[i]^2 \beta_0(j, i)}{\pi[i] / p} - \frac{\pi[i]^2(1-\pi[i])}{\pi[i]/p} \Big| \\
&\qquad \le \beta_0(j, i)\Big| \frac{\pi[i]^2}{\beta_1(j, i)} - \frac{\pi[i]^2}{\pi[i] / p} \Big| + p \pi[i] \big|\beta_0(j, i) - (1-\pi[i])\big| \\
&\qquad = O(p e^{-\Omega(np)}) + p \cdot O(pe^{-np})\\
&\qquad = O(p e^{-\Omega(np)})\,,
\end{align*}
and similarly for $\Big|\frac{\widehat\pi[i]^2 \widehat\beta_0(j, i)}{\widehat\beta_1(j, i)} - \frac{\widehat\pi[i]^2(1-\widehat\pi[i])}{\widehat\pi[i]/p}\Big|$. 
Thus applying these bounds, along with $\widehat\Sigma, \Sigma \ge \frac1k$, and $j+k \le \frac{np}2$ (so that $B_{n, p, j+k} \ge \frac12$), we obtain
\begin{align*}
\big| \expressionI \big| &\le \frac{p}{np/2-2k+1} \sum_{j=k}^{np/2-k} \sum_{i=1}^k \big| f_2(h(A_j[i])) \big| \cdot \Big| \frac{\widehat\pi[i]}{\widehat\Sigma} - \frac{\pi[i]}{\Sigma} \Big| \\
&\qquad + \frac{k}{\mlen} \sum_{j=k}^{np/2-k} \sum_{i=1}^k \big|f_2(h(A_j[i]))\big| \Bigg(\Big|\frac{\pi[i]^2}{\pi[i]/p} - \frac{\pi[i]^2}{\beta_1(j, i)}\Big| + \Big|\frac{\widehat\pi[i]^2}{\widehat\pi[i]/p} - \frac{\widehat\pi[i]^2}{\widehat\beta_1(j, i)}\Big| \Bigg)  \\
&\le p F_2 \sum_{i=1}^k \Big| \frac{\widehat\pi[i]}{\widehat\Sigma} - \frac{\pi[i]}{\Sigma} \Big| + O\Big( pk^2 F_2 e^{-\Omega(np)}\Big)\,. \\
\big| \expressionII \big| &\le \frac{p}{\big(\mlen\big) B_{n, p, j+k}} \Big| \E\Big[ f_2(h(x)) \Big] \Big| \cdot \sum_{j=k}^{\mupper} \Bigg| \sum_{i=1}^k -\frac{\widehat\pi[i](1-\widehat\pi[i])}{\widehat\Sigma} + \frac{\pi[i](1-\pi[i])}{\Sigma} \Bigg|  \\
&\qquad + \frac{k}{\big(\mlen\big) B_{n, p, j+k}} \Big| \E\Big[ f_2(h(x)) \Big] \Big| \cdot \sum_{j=k}^{np/2-k} \sum_{i=1}^k \Big|\frac{\pi[i]^2 \beta_0(j, i)}{\beta_1(j, i)} - \frac{\pi[i]^2(1-\pi[i])}{\pi[i]/p}\Big| \\
&\qquad + \frac{k}{\big(\mlen\big) B_{n, p, j+k}} \Big| \E\Big[ f_2(h(x)) \Big] \Big| \cdot \sum_{j=k}^{np/2-k} \sum_{i=1}^k \Big|\frac{\widehat\pi[i]^2 \widehat\beta_0(j, i)}{\widehat\beta_1(j, i)} - \frac{\widehat\pi[i]^2(1-\widehat\pi[i])}{\widehat\pi[i]/p}\Big| \\
&\le 2p F_2 \Big| \sum_{i=1}^k -\frac{\widehat\pi[i](1-\widehat\pi[i])}{\widehat\Sigma} + \frac{\pi[i](1-\pi[i])}{\Sigma} \Big| + O(p k^2 F_2 e^{-\Omega(np)}) \\
&= 2pF_2 \Bigg| \sum_{i=1}^k \Big( -\frac{\widehat\pi[i]}{\widehat\Sigma} + \frac{\pi[i]}{\Sigma} \Big) \Bigg| + O(p k^2 F_2 e^{-\Omega(np)}) \\
&= 2p F_2 \Big| \frac1{\widehat\Sigma}-\frac1{\Sigma}\Big| + O(p k^2 F_2 e^{-\Omega(np)})\,.
\end{align*}
It remains to upper bound $ \sum_{i=1}^k \Big| \frac{\widehat\pi[i]}{\widehat\Sigma} - \frac{\pi[i]}{\Sigma} \Big|$ and $\Big| \frac1{\widehat\Sigma}-\frac1{\Sigma}\Big|$. To this end note
\begin{align*}
\sum_{i=1}^k \Big|\frac{\widehat\pi[i]}{\widehat\Sigma} - \frac{\pi[i]}{\Sigma} \Big| &= \sum_{i=1}^k \Big|\frac{\widehat\pi[i]}{\widehat\Sigma} - \frac{\widehat\pi[i]}{\Sigma} + \frac{\widehat\pi[i]}{\Sigma} - \frac{\pi[i]}{\Sigma} \Big| \\
&\le \frac1{\Sigma} \sum_{i=1}^k \big| \widehat{\pi}[i] - \pi[i] \big| + \Big| \frac1{\widehat\Sigma} - \frac1{\Sigma} \Big| \Big(\sum_{i=1}^k \widehat\pi[i] \Big) \\
&= \frac{\| \pi - \widehat\pi \|_1}{\Sigma} + \Big| \frac1{\widehat\Sigma} - \frac1{\Sigma} \Big|\,.
\end{align*}
Let $\widehat\pi[i] - \pi[i] := \delta_i \in [-1, 1]$, thus $\sum_{i=1}^k \delta_i = 0$, $\sum_{i=1}^k |\delta_i| = \| \pi - \widehat\pi \|_1$. We observe that 
\begin{align}
\big| \widehat\Sigma - \Sigma \big| = \Big| \sum_{i=1}^k \big(\pi[i] + \delta_i\big)^2 - \sum_{i=1}^k \pi[i]^2 \Big| &\le 2\sum_{i=1}^k \big| \pi[i] \big| \cdot \big| \delta_i\big| + \sum_{i=1}^k \delta_i^2 \notag \\
&\le \frac12 \sum_{i=1}^k \pi[i]^2 + 3 \sum_{i=1}^k \big| \delta_i \big|^2 \notag \\
&= \frac{\Sigma}2 + 3\| \pi - \widehat\pi \|_2^2\,.\label{eq:sigmavssigmahat}
\end{align}
Furthermore, we have 
\begin{align*}
\left| \frac{1}{\widehat\Sigma} - \frac{1}{\Sigma} \right| 
&= \frac{\left| \Sigma - \widehat\Sigma \right|}{\widehat\Sigma\,\Sigma} = \frac{\left| \sum_{i=1}^k (\pi[i]^2 - \widehat{\pi}[i]^2) \right|}{\widehat\Sigma\,\Sigma} = \frac{\left| \sum_{i=1}^k (\pi[i] - \widehat{\pi}[i])(\pi[i] + \widehat{\pi}[i]) \right|}{\widehat\Sigma\,\Sigma}\,.
\end{align*}
Applying the Cauchy-Schwarz inequality to the numerator:
\begin{align*}
\left| (\pi - \widehat{\pi})^\top (\pi + \widehat{\pi}) \right| 
&\leq \|\pi - \widehat{\pi}\|_2 \cdot \|\pi + \widehat{\pi}\|_2\,.
\end{align*}
Using the Triangle Inequality for the norm of the sum $\|\pi + \widehat{\pi}\|_2 \leq \|\pi\|_2 + \|\widehat{\pi}\|_2$:
\begin{align*}
\|\pi + \widehat{\pi}\|_2 \leq \sqrt{\Sigma} + \sqrt{\widehat\Sigma}\,.
\end{align*}
Substituting this back into the expression yields the result:
\begin{align}
\left| \frac{1}{\widehat\Sigma} - \frac{1}{\Sigma} \right| 
&\leq \frac{\|\pi - \widehat{\pi}\|_2 (\sqrt{\Sigma} + \sqrt{\widehat\Sigma})}{\widehat\Sigma\,\Sigma}\,.\label{eq:sigmavssigmahatcauchy}
\end{align}
To complete the cases, we break into cases depending on whether $\Sigma \ge 8 \| \pi - \widehat\pi \|_1^2$ or not.

\begin{itemize}
\item \underline{$\Sigma \ge 8 \| \pi - \widehat\pi \|_2^2$.} 
Eq. (\ref{eq:sigmavssigmahat}) yields $\widehat\Sigma \in \big[ \frac18 \Sigma , \frac{15}8\Sigma \big]$, as originally claimed.
Thus by (\ref{eq:sigmavssigmahatcauchy}), 
\begin{align*}
\Big|\frac{1}{\widehat\Sigma} - \frac{1}{\Sigma} \Big| \le O\Big( \frac{\|\pi - \widehat \pi\|_2}{\Sigma^{3/2}}\Big)\,.
\end{align*}
Hence $\big|\expressionI\big| + \big|\expressionII\big| = pF_2 \cdot O\Big( \frac{\|\pi - \widehat \pi\|_1}{\Sigma} + \frac{\|\pi - \widehat \pi\|_2}{\Sigma^{3/2}} \Big)$ in this case.
\item
\underline{$\Sigma < 8 \| \pi - \widehat\pi \|^2_2$.}
Observe that, as each $\pi[i] - \widehat{\pi}[i] \in [-1,1]$, Cauchy-Schwarz inequality gives
\begin{align*}
\big|\widehat\Sigma - \Sigma| &= \Big| \sum_{i=1}^k (\pi[i] - \widehat{\pi}[i]) (\pi[i] + \widehat{\pi}[i]) \Big| \\
&\le \Big( \sum_{i=1}^k (\pi[i] + \widehat{\pi}[i])^2 \Big)^{1/2} \Big( \sum_{i=1}^k  (\pi[i] - \widehat{\pi}[i])^2 \Big)^{1/2} \\
&\le 2^{1/2} (\Sigma+\widehat\Sigma)^{1/2} \Big( \sum_{i=1}^k  \big| \pi[i] - \widehat{\pi}[i] \big| \Big)^{1/2} = O\Big( (\Sigma+\widehat\Sigma)^{1/2} \| \pi - \widehat\pi \|_2 \Big)\,.
\end{align*}
Thus
\begin{align*}
\Big| \frac1{\widehat\Sigma} - \frac1{\Sigma} \Big| 
&= 
O\Big( \frac{ (\Sigma+\widehat\Sigma)^{1/2} \| \pi - \widehat\pi \|_2 }{ \Sigma \widehat\Sigma } \Big)\\ 
&= O\Big( \frac{(1/\Sigma + 1/\widehat\Sigma)^{1/2}}{(\Sigma \widehat\Sigma)^{1/2}} \| \pi - \widehat\pi \|_2 \Big)\\
&= 
O\Big( \frac{k}{\Sigma^{1/2}}\| \pi - \widehat\pi \|_2 \Big)\,.
\end{align*}
Hence 
$$
\big|\expressionI\big| + \big|\expressionII\big| = pF_2 \cdot O\Big( \frac{\|\pi - \widehat \pi\|_1}{\Sigma} + \frac{k \|\pi - \widehat \pi\|_2}{\Sigma^{1/2}} \Big) = O\Big( \frac{k \|\pi - \widehat \pi\|_2}{\Sigma^{1/2}}\Big)
$$ 
in this case. Here in the last step we used the fact that $\frac{\|\pi - \widehat \pi\|_1}{\Sigma} \le \frac{\sqrt{k} \| \pi - \widehat \pi\|_2}{\Sigma} \le \frac{k \|\pi - \widehat \pi\|_2}{\Sigma^{1/2}}$, again by the Cauchy-Schwarz inequality.
\end{itemize}
This concludes the proof.
\end{proof}

\begin{remark}\label{rem:extension}
Suppose the priors differ for each attribution set $A_j$ (note this encompasses varying set sizes as a special case). 
Denoting the prior for attribution set $A_j$ as $\pi_j$, Theorem \ref{prop:unbiasedestimatoroneji} still applies; note the proof of Theorem \ref{prop:unbiasedestimatoroneji} only considered the $j$-th attribution set $A_j$. 
Theorem \ref{prop:unbiasedestimatoroneji} now establishes that $\widehat\ell(h, j, i)$ is unbiased, where $\beta_0(j, i), \beta_1(j, i)$ in the definition of $\widehat\ell(h, j, i)$ (see (\ref{eq:singlejiestimator})) are now defined in terms of $\pi_j$ rather than $\pi$.

Letting $k_j$ denote the size of the support of $\pi_j$ and $\Sigma_j = \sum_{i=1}^{k_j} \pi_j[i]^2$, we now define $\widehat\ell(h,j) = \frac1{\Sigma_j} \sum_{i=1}^{k_j} \pi_j[i]^2 \widehat\ell(h, j, i)$. We again define $\widehat\ell_M(h, S, \cA) = \frac1{\frac{np}2-2k+1}\sum_{j=k}^{\mupper} \widehat\ell(h,j)$ as in (\ref{eq:all_set_estimator}). Defining the ERM estimator $\widehat h$ in terms of $\widehat\ell_M(h, S, \cA)$ as in (\ref{eq:erm_estimator_def}), we again can study $\regret(\widehat h)$ as in Theorem \ref{thm:allattributionsetssamplecomplexity}, following a similar proof as the proof presented above.
\end{remark}

\section{Further details about the experiments}\label{sa:experiments}

\subsection{Datasets}

\paragraph{MNIST:}
The MNIST dataset \citep{lecun2010mnist} is a collection of $28\times28$ grayscale handwritten digits containing 60,000 training and 10,000 test examples. To adapt this for binary classification, we labeled digit ``1" as the positive class and all other digits as negative. Our model architecture consists of a multilayer perceptron (MLP) with three hidden layers of sizes 512, 512, and 128. We employed ReLU activation functions and a dropout rate of 0.2 at each layer. The network outputs a raw logit via a linear final layer.

\paragraph{CIFAR-10:}
The CIFAR-10 dataset \citep{Krizhevsky09Cifar} is a multi-class dataset of 50,000 training and 10,000 test images ($32 \times 32$ color). Each image belongs to one of ten classes: \textsc{Airplane, Automobile, Bird, Cat, Deer, Dog, Frog, Horse, Ship,} or \textsc{Truck}. For our experiments, we applied an Animal-vs-Machine binarization: Positive Class: \textsc{Bird, Cat, Deer, Dog, Frog,} and \textsc{Horse}; Negative Class: \textsc{Airplane, Automobile, Ship,} and \textsc{Truck}. The model is a Convolutional Neural Network (CNN) structured as follows: \begin{itemize}
\item Convolutional Layer: 32 filters with ReLU activation.
\item Max Pooling: $2 \times 2$ window and stride.
\item Convolutional Layer: 64 filters with ReLU activation.
\item Dropout Layer: 0.5 rate.
\item Fully Connected Layer: Single linear output producing a raw logit.
\end{itemize}

\paragraph{Higgs:}
The Higgs dataset \citep{Baldi2014Higgs} is a collection of simulated particle physics data used to distinguish between Higgs boson production processes and background noise. While the original dataset contains 11 million examples, we used a subset of 200,000 to accelerate experimentation, allocating the first 10,000 for test, and the subsequent 190,000 for training. Each example comprises 21 features, including both direct physical measurements and hand-crafted high-level features.
Our model class is a fully connected model with 4 hidden layers each having 300 neurons and ReLU activations followed by a fully connected layer with 1 output and no activation so that it outputs a logit.

\begin{figure*}
    \centering
\includegraphics[width=0.248\textwidth]{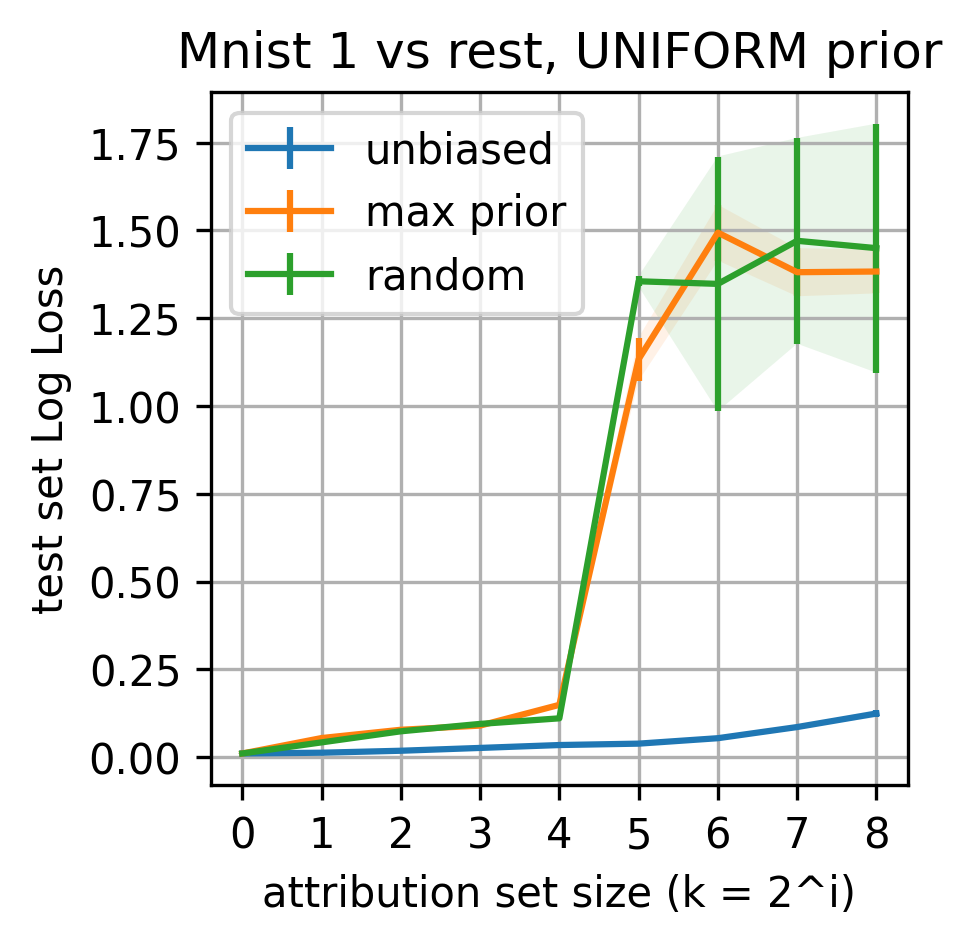}
\includegraphics[width=0.238\textwidth]{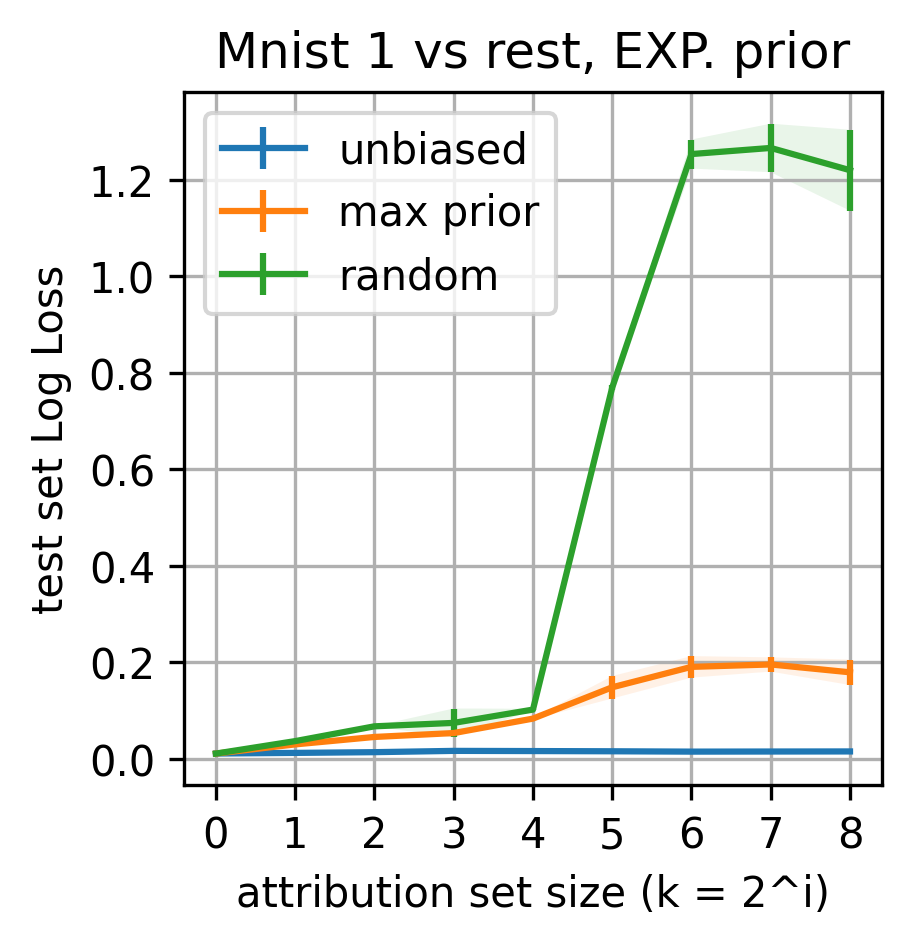}
\includegraphics[width=0.246\textwidth]{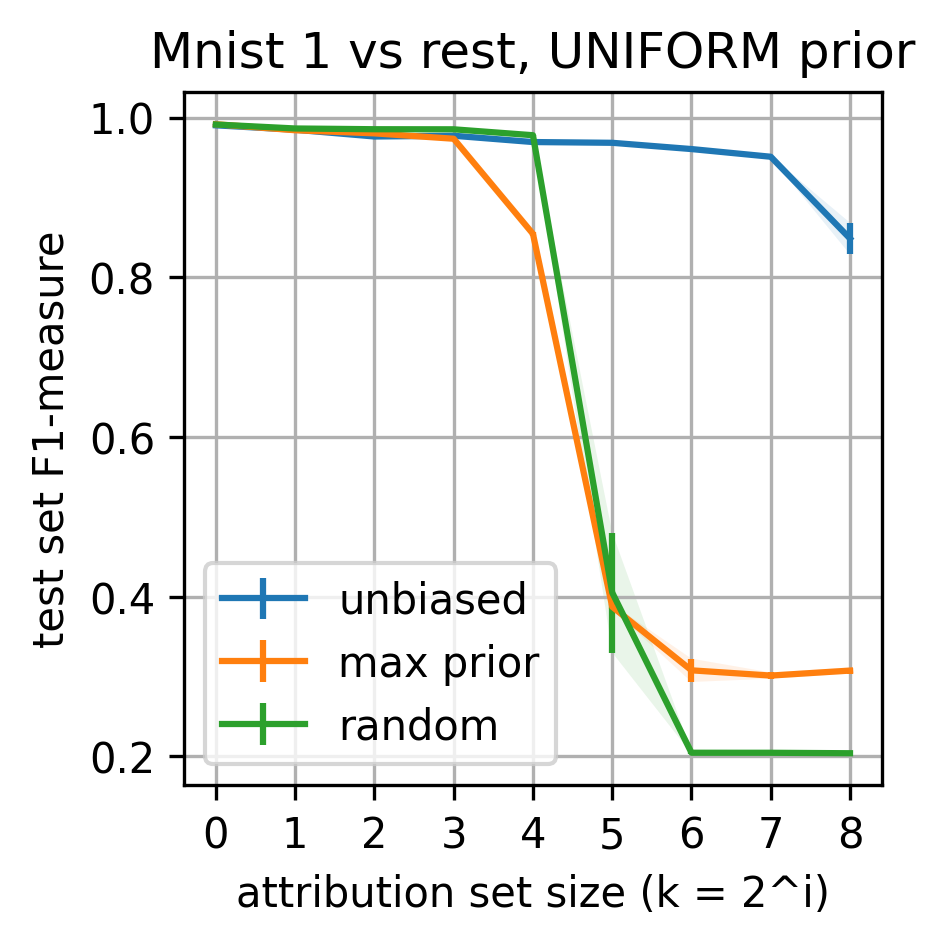}
\includegraphics[width=0.242\textwidth]{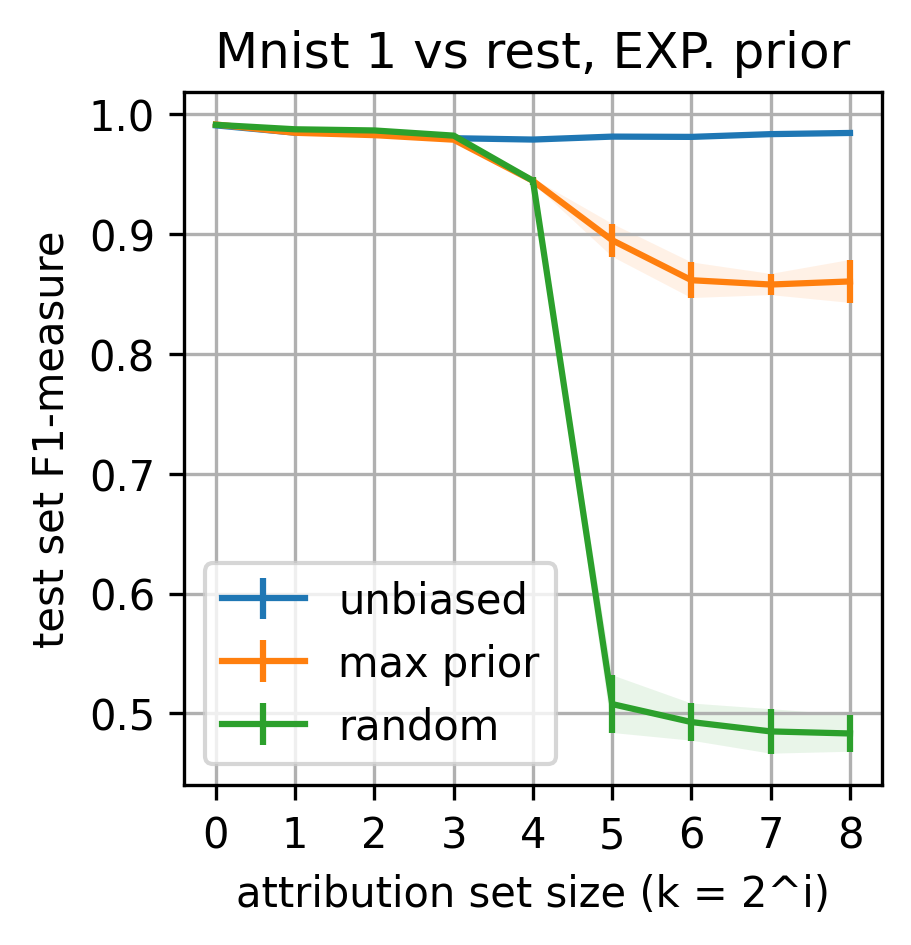}
\vspace{-0.1in}
    \caption{MNIST 1 vs. rest performance measured on the test set via log loss (first two plots from the left) or F1-measure (last two plots) on both the uniform prior and the exponential prior. Standard deviations are also depicted.}
    \label{f:4}
    \vspace{-0.2in}
\end{figure*}

\subsection{Further results}
Figure \ref{f:4} reports average test log loss and F1-measure of the three algorithms on the MNIST dataset.


\newpage


\begin{thebibliography}{43}
\providecommand{\natexlab}[1]{#1}
\providecommand{\url}[1]{\texttt{#1}}
\expandafter\ifx\csname urlstyle\endcsname\relax
  \providecommand{\doi}[1]{doi: #1}\else
  \providecommand{\doi}{doi: \begingroup \urlstyle{rm}\Url}\fi

\bibitem[Applebaum et~al.(2026)Applebaum, Dick, Gentile, Kaplan, and Koren]{applebaum2026optimallearninglabelproportions}
Lorne Applebaum, Travis Dick, Claudio Gentile, Haim Kaplan, and Tomer Koren.
\newblock Optimal learning from label proportions with general loss functions, 2026.
\newblock URL \url{https://arxiv.org/abs/2509.15145}.

\bibitem[Baldi et~al.(2014)Baldi, Sadowski, and Whiteson]{Baldi2014Higgs}
Pierre Baldi, Peter Sadowski, and Daniel Whiteson.
\newblock {Searching for Exotic Particles in High-Energy Physics with Deep Learning}.
\newblock \emph{Nature Commun.}, 5:\penalty0 4308, 2014.
\newblock \doi{10.1038/ncomms5308}.

\bibitem[Borgs et~al.(2007)Borgs, Chayes, Immorlica, Jain, Etesami, and Mahdian]{borgs2007dynamics}
Christian Borgs, Jennifer Chayes, Nicole Immorlica, Kamal Jain, Omid Etesami, and Mohammad Mahdian.
\newblock Dynamics of bid optimization in online advertisement auctions.
\newblock In \emph{Proceedings of the 16th international conference on World Wide Web}, pages 531--540, 2007.

\bibitem[Brahmbhatt et~al.(2023)Brahmbhatt, Saket, and Raghuveer]{brahmbhatt2023pac}
Anand Brahmbhatt, Rishi Saket, and Aravindan Raghuveer.
\newblock Pac learning linear thresholds from label proportions.
\newblock \emph{Advances in Neural Information Processing Systems}, 36:\penalty0 66610--66646, 2023.

\bibitem[Busa-Fekete et~al.(2023)Busa-Fekete, Choi, Dick, Gentile, and Medina]{easyllp}
Robert Busa-Fekete, Heejin Choi, Travis Dick, Claudio Gentile, and Andres~Munoz Medina.
\newblock Easy learning from label proportions.
\newblock In \emph{Proceedings of the 37th International Conference on Neural Information Processing Systems}, NeurIPS 2023. Curran Associates Inc., 2023.

\bibitem[Busa-Fekete et~al.(2025)Busa-Fekete, Dick, Gentile, Kaplan, Koren, and Stemmer]{b+25}
Robert Busa-Fekete, Travis Dick, Claudio Gentile, Haim Kaplan, Tomer Koren, and Uri Stemmer.
\newblock Nearly optimal sample complexity for learning with label proportions.
\newblock In \emph{ICML 2025}, 2025.

\bibitem[Cai et~al.(2017)Cai, Ren, Zhang, Malialis, Wang, Yu, and Guo]{cai2017real}
Han Cai, Kan Ren, Weinan Zhang, Kleanthis Malialis, Jun Wang, Yong Yu, and Defeng Guo.
\newblock Real-time bidding by reinforcement learning in display advertising.
\newblock In \emph{Proceedings of the tenth ACM international conference on web search and data mining}, pages 661--670, 2017.

\bibitem[Chen et~al.(2025)Chen, Chan, Sheng, Zhang, Chen, Hou, Zhu, Xu, and Zheng]{chen2025singleviewmultiattributionlearning}
Sishuo Chen, Zhangming Chan, Xiang-Rong Sheng, Lei Zhang, Sheng Chen, Chenghuan Hou, Han Zhu, Jian Xu, and Bo~Zheng.
\newblock See beyond a single view: Multi-attribution learning leads to better conversion rate prediction, 2025.
\newblock URL \url{https://arxiv.org/abs/2508.15217}.

\bibitem[Chen et~al.(2022)Chen, Jin, Zhao, Wang, Liu, Xu, and Zheng]{chen2022asymptoticallyunbiasedestimationdelayed}
Yu~Chen, Jiaqi Jin, Hui Zhao, Pengjie Wang, Guojun Liu, Jian Xu, and Bo~Zheng.
\newblock Asymptotically unbiased estimation for delayed feedback modeling via label correction.
\newblock In \emph{Proc. WWW 2022}, 2022.

\bibitem[Combes(2024)]{combes2024extension}
Richard Combes.
\newblock An extension of {M}cdiarmid's inequality.
\newblock In \emph{2024 IEEE International Symposium on Information Theory (ISIT)}, pages 79--84. IEEE, 2024.

\bibitem[Crouch and Crawford(2022)]{cc22}
Luke Crouch and Maxx Crawford.
\newblock Over a decade of anti-tracking work at mozilla.
\newblock \url{https://blog.mozilla.org/en/privacy-security/mozilla-antitracking-milestones-timeline/}, 2022.

\bibitem[Dietterich et~al.(1997)Dietterich, Lathrop, and Lozano-P{\'e}rez]{dietterich1997solving}
Thomas~G Dietterich, Richard~H Lathrop, and Tom{\'a}s Lozano-P{\'e}rez.
\newblock Solving the multiple instance problem with axis-parallel rectangles.
\newblock \emph{Artificial intelligence}, 89\penalty0 (1-2):\penalty0 31--71, 1997.

\bibitem[Fan et~al.(2025)Fan, Hu, Wang, Ma, Wang, Ye, and Shen]{fan2025two}
Zhikang Fan, Lan Hu, Ruirui Wang, Zhongrui Ma, Yue Wang, Qi~Ye, and Weiran Shen.
\newblock Two-stage auction design in online advertising.
\newblock In \emph{Proceedings of the ACM on Web Conference 2025}, pages 3571--3585, 2025.

\bibitem[Ilse et~al.(2018)Ilse, Tomczak, and Welling]{ilse2018attention}
Maximilian Ilse, Jakub Tomczak, and Max Welling.
\newblock Attention-based deep multiple instance learning.
\newblock In \emph{International conference on machine learning}, pages 2127--2136. PMLR, 2018.

\bibitem[Jang and Kwon(2024)]{jk24}
Jaeseok Jang and Hyuk-Yoon Kwon.
\newblock Are multiple instance learning algorithms learnable for instances?
\newblock In \emph{Proc. Neurips}, 2024.

\bibitem[Javed et~al.(2022)Javed, Juyal, Padigela, Taylor-Weiner, Yu, and Prakash]{javed2022additive}
Syed~Ashar Javed, Dinkar Juyal, Harshith Padigela, Amaro Taylor-Weiner, Limin Yu, and Aaditya Prakash.
\newblock Additive mil: Intrinsically interpretable multiple instance learning for pathology.
\newblock \emph{Advances in Neural Information Processing Systems}, 35:\penalty0 20689--20702, 2022.

\bibitem[Jin et~al.(2018)Jin, Song, Li, Gai, Wang, and Zhang]{jin2018real}
Junqi Jin, Chengru Song, Han Li, Kun Gai, Jun Wang, and Weinan Zhang.
\newblock Real-time bidding with multi-agent reinforcement learning in display advertising.
\newblock In \emph{Proceedings of the 27th ACM international conference on information and knowledge management}, pages 2193--2201, 2018.

\bibitem[Joulani et~al.(2013)Joulani, Gyorgy, and Szepesvari]{pmlr-v28-joulani13}
Pooria Joulani, Andras Gyorgy, and Csaba Szepesvari.
\newblock Online learning under delayed feedback.
\newblock In \emph{Proceedings of the 30th International Conference on Machine Learning}, volume~28 of \emph{Proceedings of Machine Learning Research}, pages 1453--1461. PMLR, 2013.

\bibitem[Kingma and Ba(2015)]{kingma2015adam}
Diederik~P Kingma and Jimmy Ba.
\newblock Adam: A method for stochastic optimization.
\newblock In \emph{International Conference on Learning Representations (ICLR)}, 2015.

\bibitem[Krizhevsky(2009)]{Krizhevsky09Cifar}
Alex Krizhevsky.
\newblock Learning multiple layers of features from tiny images.
\newblock Technical report, Univ. of Toronto, 2009.

\bibitem[Ktena et~al.(2019)Ktena, Tejani, Theis, Myana, Dilipkumar, Husz{\'a}r, Yoo, and Shi]{ktena2019addressing}
Sofia~Ira Ktena, Alykhan Tejani, Lucas Theis, Pranay~Kumar Myana, Deepak Dilipkumar, Ferenc Husz{\'a}r, Steven Yoo, and Wenzhe Shi.
\newblock Addressing delayed feedback for continuous training with neural networks in ctr prediction.
\newblock In \emph{Proceedings of the 13th ACM conference on recommender systems}, pages 187--195, 2019.

\bibitem[LeCun et~al.(2010)LeCun, Cortes, and Burges]{lecun2010mnist}
Yann LeCun, Corinna Cortes, and CJ~Burges.
\newblock Mnist handwritten digit database.
\newblock \emph{ATT Labs [Online]. Available: http://yann.lecun.com/exdb/mnist}, 2, 2010.

\bibitem[Ledoux and Talagrand(2011)]{lt11}
Michel Ledoux and Michel Talagrand.
\newblock \emph{Probability in Banach spaces. Classics in Mathematics. Isoperimetry and processes, Reprint of the 1991 edition}.
\newblock Springer-Verlag, Berlin, 2011, 2011.

\bibitem[Li et~al.(2024)Li, Chen, Javanmard, and Mirrokni]{l+24}
G.~Li, L.~Chen, A.~Javanmard, and V.~Mirrokni.
\newblock Optimistic rates for learning from label proportions.
\newblock In \emph{Proceedings of Machine Learning Research}, volume 247 of \emph{37th Annual Conference on Learning Theory}, pages 1--38, 2024.

\bibitem[Liu et~al.(2021)Liu, Yu, Zhang, Zheng, Rong, Lv, Huo, Wang, Chen, Xu, et~al.]{liu2021neural}
Xiangyu Liu, Chuan Yu, Zhilin Zhang, Zhenzhe Zheng, Yu~Rong, Hongtao Lv, Da~Huo, Yiqing Wang, Dagui Chen, Jian Xu, et~al.
\newblock Neural auction: End-to-end learning of auction mechanisms for e-commerce advertising.
\newblock In \emph{Proceedings of the 27th ACM SIGKDD Conference on Knowledge Discovery \& Data Mining}, pages 3354--3364, 2021.

\bibitem[Lv et~al.(2023)Lv, Yue, Sun, Luo, Cui, and Zhang]{lv2023unbiased}
Hui Lv, Zhongqi Yue, Qianru Sun, Bin Luo, Zhen Cui, and Hanwang Zhang.
\newblock Unbiased multiple instance learning for weakly supervised video anomaly detection.
\newblock In \emph{Proceedings of the IEEE/CVF conference on computer vision and pattern recognition}, pages 8022--8031, 2023.

\bibitem[Maron and Lozano-P{\'e}rez(1997)]{maron1997framework}
Oded Maron and Tom{\'a}s Lozano-P{\'e}rez.
\newblock A framework for multiple-instance learning.
\newblock In \emph{Advances in neural information processing systems}, volume~10, 1997.

\bibitem[Massart(2000)]{ma00}
Pascal Massart.
\newblock Some applications of concentration inequalities to statistics.
\newblock \emph{Annales de la Facult\'e des Sciences de Toulouse}, IX:\penalty0 245--303, 2000.

\bibitem[Mendelson and Vershynin(2003)]{mendelson2003entropy}
Shahar Mendelson and Roman Vershynin.
\newblock Entropy and the combinatorial dimension.
\newblock \emph{Inventiones Mathematicae}, 152\penalty0 (1):\penalty0 37--55, 2003.

\bibitem[Patrini et~al.(2014)Patrini, Nock, Rivera, and Caetano]{patrini2014almost}
Giorgio Patrini, Richard Nock, Paul Rivera, and Tiberio Caetano.
\newblock (almost) no label no cry.
\newblock \emph{Advances in Neural Information Processing Systems}, 27, 2014.

\bibitem[Quadrianto et~al.(2008)Quadrianto, Smola, Caetano, and Le]{quadrianto2008estimating}
Novi Quadrianto, Alex~J Smola, Tiberio~S Caetano, and Quoc~V Le.
\newblock Estimating labels from label proportions.
\newblock In \emph{Proceedings of the 25th International Conference on Machine learning}, pages 776--783, 2008.

\bibitem[Saket(2021)]{saket2021learnability}
Rishi Saket.
\newblock Learnability of linear thresholds from label proportions.
\newblock \emph{Advances in Neural Information Processing Systems}, 34:\penalty0 6555--6566, 2021.

\bibitem[Saket(2022)]{saket2022algorithms}
Rishi Saket.
\newblock Algorithms and hardness for learning linear thresholds from label proportions.
\newblock \emph{Advances in Neural Information Processing Systems}, 35:\penalty0 1267--1279, 2022.

\bibitem[Scott and Zhang(2020)]{scott2020learning}
Clayton Scott and Jianxin Zhang.
\newblock Learning from label proportions: A mutual contamination framework.
\newblock \emph{Advances in neural information processing systems}, 33:\penalty0 22256--22267, 2020.

\bibitem[Singal et~al.(2019)Singal, Besbes, Desir, Goyal, and Iyengar]{singal2019shapley}
Raghav Singal, Omar Besbes, Antoine Desir, Vineet Goyal, and Garud Iyengar.
\newblock Shapley meets uniform: An axiomatic framework for attribution in online advertising.
\newblock In \emph{The world wide web conference}, pages 1713--1723, 2019.

\bibitem[Tian et~al.(2021)Tian, Pang, Chen, Singh, Verjans, and Carneiro]{tian2021weakly}
Yu~Tian, Guansong Pang, Yuanhong Chen, Rajvinder Singh, Johan~W Verjans, and Gustavo Carneiro.
\newblock Weakly-supervised video anomaly detection with robust temporal feature magnitude learning.
\newblock In \emph{Proceedings of the IEEE/CVF international conference on computer vision}, pages 4975--4986, 2021.

\bibitem[Vernade et~al.(2017)Vernade, Capp\'e, and Perchet]{vernade2017stochasticbanditmodelsdelayed}
Claire Vernade, Olivier Capp\'e, and Vianney Perchet.
\newblock Stochastic bandit models for delayed conversions.
\newblock In \emph{Proc. UAI, 2017}, 2017.

\bibitem[Vernade et~al.(2020)Vernade, Carpentier, Lattimore, Zappella, Ermis, and Brueckner]{lbsdf20}
Claire Vernade, Alexandra Carpentier, Tor Lattimore, Giovanni Zappella, Beyza Ermis, and Michael Brueckner.
\newblock Linear bandits with stochastic delayed feedback.
\newblock In \emph{Proc. ICML, 2020}, 2020.

\bibitem[Wang et~al.(2017)Wang, Zhang, Yuan, et~al.]{wang2017display}
Jun Wang, Weinan Zhang, Shuai Yuan, et~al.
\newblock Display advertising with real-time bidding (rtb) and behavioural targeting.
\newblock \emph{Foundations and Trends{\textregistered} in Information Retrieval}, 11\penalty0 (4-5):\penalty0 297--435, 2017.

\bibitem[Wilander(2019)]{w19}
John Wilander.
\newblock Intelligent tracking prevention 2.3. apple.
\newblock \url{https://webkit.org/blog/9521/intelligent-tracking-prevention-2-3/}, 2019.

\bibitem[Yang et~al.(2019)Yang, Li, Wang, Wu, Tan, Xu, and Gai]{yang2019bid}
Xun Yang, Yasong Li, Hao Wang, Di~Wu, Qing Tan, Jian Xu, and Kun Gai.
\newblock Bid optimization by multivariable control in display advertising.
\newblock In \emph{Proceedings of the 25th ACM SIGKDD international conference on knowledge discovery \& data mining}, pages 1966--1974, 2019.

\bibitem[Zhang et~al.(2022)Zhang, Wang, and Scott]{zhang2022learning}
Jianxin Zhang, Yutong Wang, and Clay Scott.
\newblock Learning from label proportions by learning with label noise.
\newblock \emph{Advances in Neural Information Processing Systems}, 35:\penalty0 26933--26942, 2022.

\bibitem[Zhu et~al.(2017)Zhu, Jin, Tan, Pan, Zeng, Li, and Gai]{zhu2017optimized}
Han Zhu, Junqi Jin, Chang Tan, Fei Pan, Yifan Zeng, Han Li, and Kun Gai.
\newblock Optimized cost per click in taobao display advertising.
\newblock In \emph{Proceedings of the 23rd ACM SIGKDD international conference on knowledge discovery and data mining}, pages 2191--2200, 2017.

\end{thebibliography}
\end{document}